\documentclass{article}


\PassOptionsToPackage{numbers}{natbib}

\usepackage[final]{neurips_2021}




\usepackage[utf8]{inputenc} 
\usepackage[T1]{fontenc}    
\usepackage{hyperref}       
\usepackage{url}            
\usepackage{booktabs}       
\usepackage{amsfonts}       
\usepackage{nicefrac}       
\usepackage{microtype}      
\usepackage{xcolor}         
\usepackage{xfrac}

\usepackage{graphicx}
\usepackage{subfigure}
\usepackage{amsmath,amssymb,amsthm}
\usepackage{dsfont}
\usepackage{multirow}
\usepackage{todonotes,xspace,enumitem}
\usepackage{cleveref}
\usepackage{caption}
\usepackage{wrapfig}
\usepackage{appendix}
\usepackage{tcolorbox}

\usepackage{pifont}

\newtheorem{theorem}{Theorem}
\newtheorem{lemma}{Lemma}

\theoremstyle{definition}

\newtheorem{conjecture}{Conjecture}

\newcommand{\alphabar}{\bar{\alpha}}

\newcommand{\A}{\mathcal{A}}

\newcommand{\Ab}[1]{\A^{\textrm{bot}}_{#1}}
\newcommand{\At}[1]{\A^{\textrm{top}}_{#1}}
\newcommand{\R}{\mathbb{R}}
\newcommand{\set}[1]{\left\{#1\right\}}
\newcommand{\Y}{\mathcal{Y}}
\newcommand{\D}{\mathcal{D}}

\newcommand{\predm}[1]{\texttt{PredPower}_{M}(#1)}

\newcommand{\dtilde}{\widetilde{d}}
\newcommand{\utilde}{\widetilde{u}}
\newcommand{\defeq}{{:}{=}}

\newcommand{\Enu}[1]{\mathbb{E}_{(w,r,b)\sim \nu}\left[#1\right]}
\newcommand{\Enus}[1]{\mathbb{E}_{(w,r,b)\sim \nu^*}\left[#1\right]}
\newcommand{\Enut}[1]{\mathbb{E}_{(w,r,b)\sim \nut}\left[#1\right]}
\newcommand{\Eps}[1]{\mathbb{E}_{(x,y)\sim p^*}\left[#1\right]}
\newcommand{\Sd}{\mathbb{S}^{d+1}}
\newcommand{\Sddtilde}{\mathbb{S}^{d\dtilde+1}}
\newcommand{\uhat}{\widehat{u}}
\newcommand{\eps}{\eta}

\newcommand{\ED}[1]{\mathbb{E}_{\D}\left[#1\right]}

\newcommand{\ind}[1]{\mathds{1}\left[#1\right]}

\newcommand{\premise}{(\texttt{A})\xspace}

\newcommand{\abs}[1]{\left|#1\right|}
\newcommand{\norm}[1]{\left\|#1\right\|}
\newcommand{\iprod}[2]{\langle#1,#2\rangle}
\DeclareMathOperator*{\argmax}{arg\,max}
\DeclareMathOperator*{\argmin}{arg\,min}
\renewcommand{\L}{\mathcal{L}}
\newcommand{\Logit}{\mathrm{Logit}}

\newcommand{\cifar}{CIFAR-10}
\newcommand{\imnet}{ImageNet-10}

\newcommand{\mnist}{MNIST}

\newcommand{\evalname}{\texttt{DiffROAR}}
\newcommand{\semirealname}{\texttt{BlockMNIST}}
\newcommand{\semirealnameone}{\texttt{BlockMNIST-Top}}

\newcommand{\W}{\mathcal{W}}
\renewcommand{\P}{\mathcal{P}}
\newcommand{\nut}{\tilde{\nu}}


\title{Do Input Gradients Highlight\\Discriminative Features?}

%

\author{%
  Harshay Shah\thanks{{Part of the work completed after joining Google Research India}} \\
    Microsoft Research India \\
  \texttt{harshay@google.com} \\
	\And
	Prateek Jain\footnotemark[1] \\
	Microsoft Research India \\
	\texttt{prajain@google.com} \\
	\And
	Praneeth Netrapalli\footnotemark[1]  \\
	Microsoft Research India \\
	\texttt{pnetrapalli@google.com} \\
}

\begin{document}

\maketitle

\vspace{-8px}
\begin{abstract}
Post-hoc {gradient-based} interpretability methods~\citep{simonyan2013deep,smilkov2017smoothgrad} that provide instance-specific explanations of model predictions are often based on assumption \premise: \emph{~magnitude of input gradients---gradients of  logits with respect to input---{noisily} highlight discriminative task-relevant features}. In this work, we test the validity of assumption \premise~using a three-pronged approach:
\begin{enumerate}[leftmargin=*]
	\item	We develop an evaluation framework, \evalname, to test assumption \premise~on four image classification benchmarks. Our results {suggest} that (i) input gradients of standard models (i.e., trained on  original data) may grossly violate \premise, whereas (ii) input gradients of adversarially robust models satisfy \premise reasonably well. 
	\item We then introduce \semirealname, an  \texttt{MNIST}-based semi-real dataset, that \emph{by design} encodes {a priori} knowledge of   discriminative features. Our analysis on \semirealname~leverages this information to validate as well as characterize differences between input gradient attributions of standard and robust models.
	\item Finally, we theoretically prove that our empirical findings hold on a simplified version of the \semirealname~dataset. Specifically, we prove that input gradients of standard one-hidden-layer MLPs trained on this dataset do not highlight instance-specific ``signal'' coordinates, thus grossly violating \premise. 
\end{enumerate}
Our findings motivate the need to formalize and test common assumptions in interpretability in a falsifiable manner \cite{Leavitt2020TowardsFI}. We believe that the \evalname~framework and \semirealname~  datasets serve as sanity checks to audit interpretability methods; code and data available at \url{https://github.com/harshays/inputgradients}.
\end{abstract}

\vspace{-6px}
\section{Introduction}
\label{sec:intro}
\vspace{-4px}

Interpretability methods that provide instance-specific explanations of model predictions are often used to identify biased predictions~\cite{lime}, debug trained models~\cite{Adebayo2020DebuggingTF}, and aid decision-making in high-stakes domains such as medical diagnosis~\cite{zech2018variable,stiglic2020interpretability}.
A common approach for providing instance-specific explanations is \emph{feature attribution}. Feature attribution methods rank or score input coordinates, or features, in the order of their \emph{purported} importance in model prediction; coordinates achieving the top-most rank or score are considered most important for prediction, whereas those with the bottom-most rank or score are considered least important. 

\textbf{Input gradient attributions}. 
Ranking input coordinates based on the \emph{magnitude of input gradients} is a fundamental feature attribution technique~\cite{baehrens2010explain,simonyan2013deep} that undergirds well-known methods such as SmoothGrad~\cite{smilkov2017smoothgrad} and Integrated Gradients~\cite{sundararajan2017axiomatic}.	
Given instance $x$ and a trained model $\theta$ with prediction $\hat{y}$ on $x$, the input gradient attribution scheme 
(i) computes the input gradient $\nabla_x \Logit_{\theta}(x,\hat{y})$ of the logit
\footnote{In~\Cref{appendix:real}, we show that our results also hold for input gradients taken w.r.t. the \emph{loss}}
of the predicted label $\hat{y}$ and (ii) ranks the input coordinates in \emph{decreasing} order of their input gradient magnitude. 
Below we explicitly characterize the underlying intuitive assumption behind input gradient attribution methods:
\begin{center}
\begin{tcolorbox}[width=0.95\linewidth,boxsep=0pt,colback=white]
	\textbf{Assumption \premise}: \emph{Coordinates with larger input gradient magnitude are more relevant for model prediction compared to coordinates with smaller input gradient magnitude.} 
\end{tcolorbox}	
\end{center}


\textbf{Sanity-checking attribution methods}.
Several attribution methods~\cite{murdoch2019definitions} are based on input gradients and explicitly or implicitly assume an appropriately modified version of \premise. 
For example, Integrated Gradients~\cite{sundararajan2017axiomatic} aggregate input gradients of linearly interpolated points, SmoothGrad~\cite{smilkov2017smoothgrad} averages input gradients of points perturbed using gaussian noise, and Guided Backprop~\cite{springenberg2015striving} modifies input gradients by zeroing out negative values at every layer during backpropagation.
Surprisingly, unlike vanilla input gradients, popular methods that output attributions with better visual quality fail simple sanity checks that are indeed expected out of any valid attribution method~\cite{adebayo2018sanity,kindermans2019reliability}. 
On the other hand, while vanilla input gradients pass simple sanity checks,~\citet{Hooker2019ABF} suggest that they produce estimates of feature importance that are no better than a random designation of feature importance.


\textbf{Do input gradients satisfy assumption~\premise?}
Since \premise is necessary for input gradients attributions to accurately reflect model behavior, 
we introduce an  evaluation framework, \evalname, to analyze whether input gradient attributions satisfy assumption \premise~on real-world datasets. While \evalname\ adopts the remove-and-retrain (\texttt{ROAR}) methodology~\cite{Hooker2019ABF}, \evalname~is more appropriate for testing the validity of assumption \premise because it directly compares top-ranked features against bottom-ranked features.  
We apply~\evalname~to evaluate input gradient attributions of MLPs \& CNNs trained on multiple image classification datasets. 
Consistent with the message in \citet{Hooker2019ABF}, our experiments indicate that input gradients of standard models (i.e., trained on original data) can  grossly violate \premise (see Section~\ref{sec:real}). Furthermore, we also observe that unlike standard models, adversarially trained models~\cite{madry2018towards} that are robust to $\ell_2$ and $\ell_{\infty}$ perturbations satisfy \premise in a consistent manner.

\textbf{Probing input gradient attributions using~\semirealname}. 
Our empirical findings mentioned above strongly suggest that standard models grossly violate \premise.
However, without knowledge of  ground-truth discriminative features learned by models trained on real data, {\em conclusively} testing~\premise remains elusive. 
In fact, this is a key shortcoming of the remove-and-retrain (\texttt{ROAR}) framework.
\begin{figure*}[t]
	\centering
	\includegraphics[width=\linewidth]{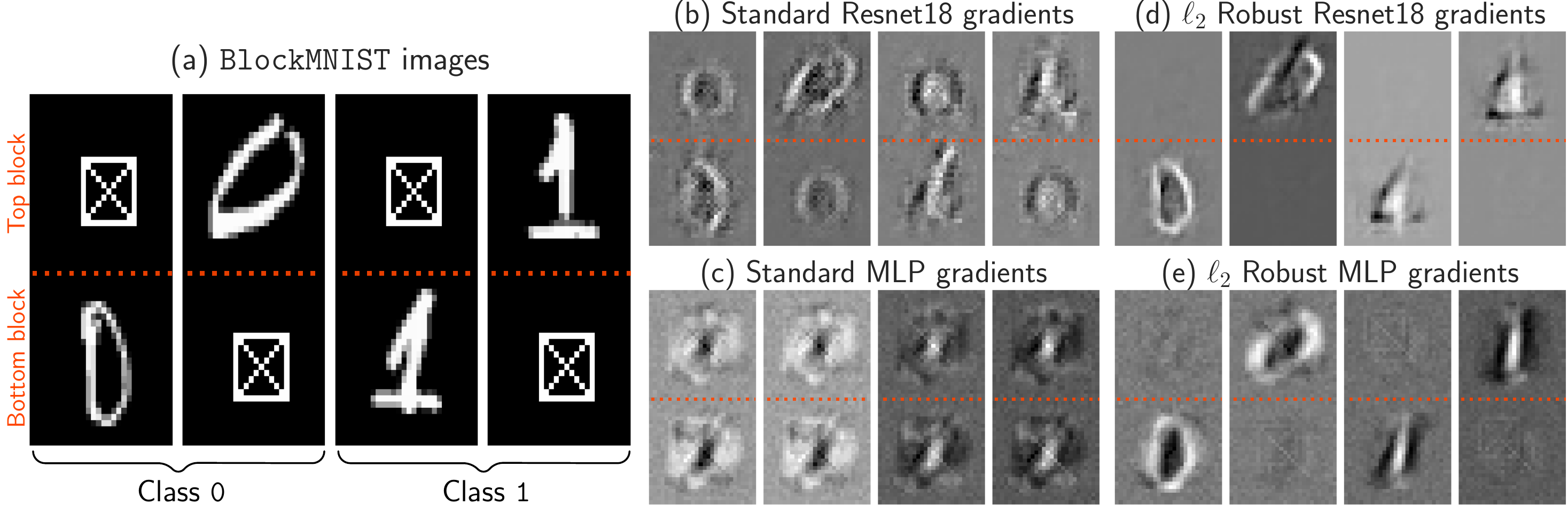}
	\caption{Experiments on \semirealname~dataset. (a)	Four representative images from class $0$ \& class $1$ in \semirealname~dataset; every image consists of a \emph{signal} and \emph{null} block that are randomly placed as the \emph{top} or \emph{bottom} block. The \emph{signal} block, containing the \texttt{MNIST} digit, determines the image class. The \emph{null} block, containing the square patch, does not encode any information of the image class. For these four images, subplots (b-e) show the input gradients of standard Resnet18, standard MLP, $\ell_2$ robust Resnet18 ($\epsilon$=$2$) and $\ell_2$ robust MLP ($\epsilon$=$4$) respectively.
		The plots clearly show that input gradients of standard \semirealname~models highlight the signal block \emph{and the non-discriminative null block}, thereby violating \premise.
		In contrast, input gradients of adversarially robust models exclusively highlight the signal block, suppress the null block, and satisfy \premise. 
		Please see~\Cref{sec:semireal} for details.
	}
	\label{fig:semireal_robust}
	\vspace{-15px}
\end{figure*}
So, to further verify and better understand our empirical findings, we introduce an \texttt{\mnist}-based semi-real dataset, \semirealname, 
that \emph{by design} encodes {a priori} knowledge of ground-truth discriminative features.
\semirealname~is based on the principle that for different inputs, discriminative and non-discriminative features may occur in different parts of the input. 
For example, in an object classification task, the object of interest can occur in different parts of the image (e.g., top-left, center, bottom-right etc.) for different images.
As shown in~\Cref{fig:semireal_robust}(a),~\semirealname~images consist of a \emph{signal} block and a \emph{null} block that are randomly placed at the top or bottom.
The \emph{signal} block contains the \texttt{MNIST} digit that determines the class of the image, whereas the \emph{null} block, contains a square patch with two diagonals that has no information about the label. 
This a priori knowledge of ground-truth discriminative features in \semirealname~data allows us to (i) validate our empirical findings vis-a-vis input gradients of standard and robust models (see~\cref{fig:semireal_robust}) and (ii) identify \emph{feature leakage} as a reason that potentially explains why input gradients violate \premise in practice. 
Here, feature leakage refers to the phenomenon wherein given an instance, its input gradients highlight the location of discriminative features in the given instance \emph{as well as} in other instances that are present in the dataset.
For example, consider the first~\semirealname~image in~\cref{fig:semireal_robust}(a), in which the signal is placed in the bottom block. 
For this image, as shown in~\cref{fig:semireal_robust}(b,c), input gradients of standard models {incorrectly} highlight the top block \emph{because} there are \emph{other} instances in the \semirealname~dataset which have signal in the top block.

\textbf{Rigorously demonstrating feature leakage}.
In order to concretely verify as well as understand feature leakage more thoroughly, we design a simplified version of \semirealname~that is amenable to theoretical analysis. 
On this dataset, we first rigorously demonstrate that input gradients of standard one-hidden-layer MLPs exhibit feature leakage in the infinite-width limit and then discuss how feature leakage results in input gradient attributions that clearly violate assumption \premise.



\textbf{Paper organization}: Section~\ref{sec:related} discusses related work and section~\ref{sec:prelim} presents our evaluation framework,~\evalname, to test assumption \premise.~\Cref{sec:real} employs~\evalname~to evaluate input gradient attributions on four image classification datasets.~\Cref{sec:semireal} analyzes~\semirealname~data to differentially characterize input gradients of standard and robust models using feature leakage.~\Cref{sec:synth} provides theoretical results on a simplified version on~\semirealname~that shed light on how feature leakage results in input gradients that violate assumption \premise.
Our code, along with the proposed datasets, is publicly available at \href{https://github.com/harshays/inputgradients}{https://github.com/harshays/inputgradients}.

\vspace{-11px}
\section{Related work}
\label{sec:related}
\vspace{-7px}
Due to space constraints, we only discuss directly related work and defer the rest to~\Cref{appendix:related}. 

\textbf{Sanity checks for explanations}.
Several explanation methods that provide feature attributions are often primarily evaluated using inherently subjective visual assessments~\cite{simonyan2013deep,smilkov2017smoothgrad}.
Unsurprisingly, recent ``sanity checks'' show that sole reliance on visual assessment is misleading, as attributions can lack fidelity and inaccurately reflect model behavior.
\citet{adebayo2018sanity} and~\citet{kindermans2019reliability} show that unlike input gradients~\cite{baehrens2010explain}, other popular methods---guided backprop~\cite{springenberg2014striving}, gradient $\odot$ input~\cite{shrikumar2016not}, integrated gradients~\cite{sundararajan2017axiomatic}---output explanations which lack fidelity on image data, as they remain invariant to model and label randomization.
Similarly,~\citet{yang2019benchmarking} use custom image datasets to show that several explanation methods are more likely to produce false positive explanations than vanilla input gradients. 
Moreover, several explanation methods based on modified backpropagation do not pass basic sanity checks~\cite{pmlr-v119-sixt20a,NEURIPS2019_a7471fdc,ancona2018towards}.
To summarize, well-known gradient-based attribution methods that seek to mitigate gradient saturation~\cite{sundararajan2017axiomatic,sundararajan2016gradients}, discontinuity~\cite{shrikumar2017learning}, and visual noise~\cite{springenberg2014striving} surprisingly fare worse than vanilla input gradients on multiple sanity checks.

 
\textbf{Evaluating explanation fidelity}. 
The black-box nature of neural networks necessitates frameworks that evaluate the fidelity or ``correctness'' of post-hoc explanations \emph{without} knowledge of ground-truth features learned by trained models.
Modification-based evaluation frameworks~\cite{10.1371/journal.pone.0130140, 7552539, arras-etal-2019-evaluating} gauge explanation fidelity by measuring the change in model performance after masking input coordinates that a given explanation method considers most (or least) important.
However, due to distribution shifts induced by input modifications, one cannot  \emph{conclusively} attribute changes in model performance to the fidelity of instance-specific explanations~\cite{Tomsett2020SanityCF}.
The remove-and-retrain (\texttt{ROAR}) framework~\cite{Hooker2019ABF} accounts for distribution shifts by evaluating the performance of models \emph{retrained} on train data masked using post-hoc explanations.
Surprisingly, contrary to findings obtained via sanity checks, experiments with the \texttt{ROAR} framework show that multiple attribution methods, \emph{including} vanilla input gradients, are no better than model-independent \emph{random} attributions {that lack explanatory power}~\cite{Hooker2019ABF}. 
Therefore, motivated by the central role of vanilla input gradients in attribution methods, we augment the \texttt{ROAR} framework to understand when and why input gradients violate assumption \premise.

\textbf{Effect of adversarial robustness}.
Adversarial training~\cite{madry2018towards} not only leads to robustness to $\ell_p$ adversarial attacks~\cite{tramer2020adaptive}, but also leads to perceptually-aligned feature representations~\cite{santurkar2019image}, and improved visual quality of input gradients~\cite{Ross2018ImprovingTA}. 
Recent works hypothesize that adversarial training improves the visual quality of input gradients by suppressing irrelevant features~\cite{Kim2019WhyAS} and promoting sparsity and stability~\cite{Chalasani2020ConciseEO} in explanations.
\citet{Kim2019BridgingAR} use the \texttt{ROAR} framework to conjecture that adversarial training ``tilts'' input gradients to better align with the data manifold. 
In this work, we use experiments on real-world data and theory on data with features known \emph{a priori} in order to differentially characterize input gradients of standard and robust models vis-a-vis assumption \premise.

\section{\texttt{DiffROAR} evaluation framework}
\label{sec:prelim}
In this section, we introduce our evaluation framework, \texttt{DiffROAR}, to probe the extent to which instance-specific explanations, or feature attributions, highlight discriminative features in practice.
Specifically, our framework, \texttt{DiffROAR}, builds upon the remove-and-retrain (\texttt{ROAR}) methodology~\cite{Hooker2019ABF} to test whether feature attribution methods satisfy assumption~\premise on real-world datasets. 

\textbf{Setting}. We consider the standard classification setting;
Each data point $(x^{(i)},y^{(i)})$, where instance $x^{(i)} \in \R^d$ and label $y^{(i)} \in \Y$ for some label set $\Y$, is drawn independently from a distribution $\D$ on $\R^d \times \Y$. 
Given dataset $\{(x^{(i)},y^{(i)})\}$ where $i\in [n] \defeq \set{1,\cdots,n}$, $x^{(i)}_j$ denotes the $j^{\textrm{th}}$ coordinate of $x^{(i)}$. 
Note that we also refer to the $d$ coordinates of instance $x^{(i)}$ as  \emph{features} interchangeably. 

\textbf{Attribution schemes}. 
A \emph{feature attribution} scheme $\A: \R^d \rightarrow \set{\sigma: \sigma \mbox{ is a permutation of } [d]}$ maps a $d$-dimensional instance $x$ to a permutation, or ordering, $\A(x): [d] \rightarrow [d]$ of its coordinates.
For example, the \emph{input gradient attribution} scheme takes as input instance $x$ \& predicted label $\hat{y}$ and outputs an ordering $[d]$ that ranks coordinates in decreasing order of their input gradient magnitude. 
 That is, coordinate $j$ is ranked ahead of coordinate $k$ if the magnitude of the $j^{\textrm{th}}$ coordinate of $\nabla_x \Logit_{\theta}(x,\hat{y})$ is larger than that of the $k^{\textrm{th}}$ coordinate. 

\textbf{Unmasking schemes}.
Given instance $x$ and a subset $S \subseteq [d]$ of coordinates, the \emph{unmasked} instance $x^S$ zeroes out all coordinates that are not in subset $S$: $x^S_j=x_j$ if $j \in S$ and $0$ if $j \notin S$.
An \emph{unmasking scheme} $A: \R^d\rightarrow \set{S: S\subseteq [d]}$ simply maps instance $x$ to a subset $A(x) \subseteq[d]$ of coordinates that can be used to obtain unmasked instance $x^{A(x)}$.
Any attribution scheme $\A$ naturally induces \emph{top-$k$} and \emph{bottom-$k$} unmasking schemes, $\At{k}$ and $\Ab{k}$, which output $k$ coordinates with the top-most and bottom-most attributions in $\A(x)$ respectively. In other words, given attribution scheme $\A$ and level $k$, the top-k and bottom-k unmasking schemes, $\At{k}$ and $\Ab{k}$, can be defined as follows:
\begin{wrapfigure}{r}{0.35\linewidth}
	\centering
	\vspace{-8px}
	\includegraphics[width=\linewidth]{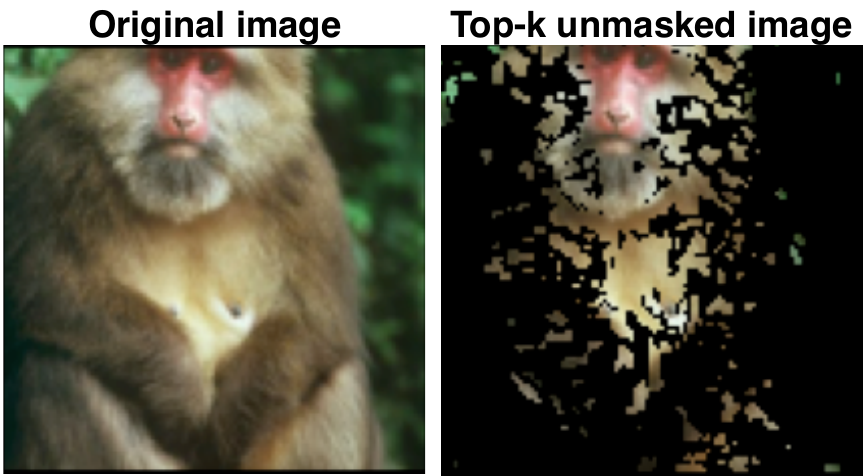}
	\vspace{-12px}
	\caption{Pictorial example of a top-$25\%$ unmasked image.}
	\label{fig:unmasking}
	\vspace{-12px}
\end{wrapfigure}
\vspace{-7px}
\begin{align*}
\At{k}(x) &\defeq \set{\A(x)_{j}: j \leq k}, \\
\Ab{k}(x) &\defeq \set{\A(x)_{j}: d-k < j \leq d}.
\end{align*}
For example,~\Cref{fig:unmasking} depicts an image $x$ and its top-$k$ unmasked variant $x^{\At{k}(x)}$. 
In this case, the attribution scheme $\A$ assigns higher rank to pixels in the foreground.
So, the top-$25\%$ unmasking operation, $x^{\At{25\%}(x)}$,  highlights the monkey by retaining pixels with top-$25\%$ attribution ranks and zeroing out the remaining pixels that correspond to the green background.

\textbf{Predictive power of unmasking schemes}. The \emph{predictive power} of an unmasking scheme $A$ with respect to model architecture $M$ (e.g., resnet18) can be defined as the best classification accuracy that can be attained by training a model with  architecture $M$ on unmasked instances that are obtained via unmasking scheme $A$. More formally, it can defined as follows:
	\begin{align*}
		\predm{A} \;\defeq \sup_{f \in M, f: \R^d \rightarrow \Y} \ED{\ind{f(x^{A(x)})=y}}.
	\end{align*}	
	
Due to masking-induced distribution shifts, models with architecture $M$ that are trained using original data cannot be plugged in to estimate $\predm{A}$. 
The \texttt{ROAR} framework~\cite{Hooker2019ABF} sidesteps this issue by \emph{retraining} models on unmasked data, as similar model architectures tend to learn ``similar'' classifiers~\cite{pmlr-v119-hacohen20a,pmlr-v44-li15convergent,NEURIPS2019_b432f34c,NEURIPS2020_6cfe0e61}.
Therefore, we employ the \texttt{ROAR} framework to estimate $\predm{A}$ in two steps.
First, we use unmasking scheme $A$ to obtain \emph{unmasked} train and test datasets that comprise data points of the form $(x^{A(x)}, y)$.
Then, we \emph{retrain} a new model with the same architecture $M$ on unmasked train data and evaluate its accuracy on unmasked test data. 

\textbf{\evalname~evaluation metric to test assumption~\premise}.
Recall that an attribution scheme $\A$ maps an instance $x$ to a permutation of its coordinates that reflects the order of \emph{estimated} importance in model prediction. 
An attribution scheme that satisfies assumption~\premise must place coordinates that are more important for model prediction higher up in the the attribution order.
More formally, given attribution scheme $\A$, architecture $M$ and level $k$, we define \evalname~as the difference between the predictive power of top-$k$ and bottom-$k$ unmasking schemes, $\At{k}$ and $\Ab{k}$:
\begin{align}
	\texttt{DiffROAR}_M(\A,k) = \predm{\At{k}} - \predm{\Ab{k}}  \label{eqn:diffroar}
\end{align}

\textbf{Interpreting the \evalname~metric}. The {sign} of the \evalname~metric indicates whether the given attribution scheme satisfies or violates assumption~\premise.
For example, $\texttt{DiffROAR}_M(\A,\cdot) < 0$ implies that $\A$ violates assumption \premise~, as coordinates with \emph{higher} attribution ranks have \emph{worse} predictive power with respect to architecture $M$.
Similarly, the {magnitude} of the \evalname~metric quantifies the extent to which the ordering in attribution scheme $\A$ separates the most and least discriminative coordinates into two disjoint subsets.
For example, a \emph{random} attribution scheme $\A_{r}$, which outputs attributions $\A_r(x)$ chosen uniformly at random from all permutations of $[d]$, neither highlights nor suppresses discriminative features; $\mathbb{E}[\texttt{DiffROAR}_M(\A_r, k)]=0$ for any architecture $M$.

\textbf{On testing assumption~\premise}.
To verify~\premise for a given attribution scheme $\A$, it is necessary to evaluate whether input coordinates with \emph{higher} attribution rank are \emph{more} important for model prediction than coordinates with \emph{lower} rank. 
Consequently, the \texttt{ROAR}-based metric in~\citet{Hooker2019ABF}, which essentially computes the top-$k$ predictive power, is not sufficient to test whether attribution methods satisfy assumption~\premise.
Therefore, as discussed above, \evalname~tests \premise~by comparing the top-$k$ predictive power, $\predm{\At{k}}$, to the bottom-$k$ predictive power, $\predm{\Ab{k}}$, using multiple values of $k$.  

\section{Testing assumption~\premise on image classification benchmarks}
\label{sec:real}

In this section, we use~\evalname~to evaluate whether input gradient attributions of standard and adversarially robust MLPs and CNNs trained on four image classification benchmarks satisfy assumption~\premise. We first summarize the experiment setup and then describe key empirical findings.

\textbf{Datasets and models}.
We consider four benchmark image classification datasets: SVHN~\cite{netzer2011reading}, Fashion MNIST~\cite{xiao2017fashion}, \cifar~\cite{krizhevsky2009learning} and \imnet~\cite{imagenet_cvpr09}. 
\imnet\ is an open-sourced variant (\url{https://github.com/MadryLab/robustness/}) of Imagenet~\cite{imagenet_cvpr09}, with $80,000$ images grouped into $10$ super-classes. 
\imnet~enables us to test assumption \premise~on Imagenet without the computational overload of training models on the $1000$-way ILSVRC classification task ~\cite{ILSVRC15}. 
We evaluate input gradient attributions of standard and adversarially trained two-hidden-layer MLPs and Resnets~\cite{He2015}.
We obtain $\ell_2$ and $\ell_{\infty}$ $\epsilon$-robust models with perturbation budget $\epsilon$ using PGD adversarial training~\cite{madry2018towards}.
Unless mentioned otherwise, we train models using stochastic gradient descent (SGD), with momentum $0.9$, batch size $256$, $\ell_2$ regularization $0.0005$ and initial learning rate $0.1$ that decays by a factor of $0.75$ every $20$ epochs. Additionally, we use standard data augmentation and train models for at most $500$ epochs, stopping early if cross-entropy loss on training data goes below $0.001$. 
\Cref{app:expt_deets} provides additional details about the datasets and trained models.\footnote{Code publicly available at \url{https://github.com/harshays/inputgradients}}


\textbf{Estimating \evalname~on real data}. We compute the evaluation metric, $\texttt{DiffROAR}_M(\A,k)$, on real datasets in four steps, as follows.
First, we train a standard or robust model with architecture $M$ on the original dataset and obtain its input gradient attribution scheme $\A$. 
Second, as outlined in~\Cref{sec:prelim}, we use attribution scheme $\A$ and level $k$ (i.e., fraction of pixels to be unmasked) to extract the top-$k$ and bottom-$k$ unmasking schemes: $\At{k}$ and $\Ab{k}$.
Third, we apply $\At{k}$ and $\Ab{k}$ on the original train \& test datasets to obtain top-$k$ and bottom-$k$ unmasked datasets respectively. 
Finally, to compute $\texttt{DiffROAR}_M(\A,k)$ via eq.~\eqref{eqn:diffroar}, we estimate top-$k$ and bottom-$k$ predictive power, $\predm{\At{k}}$ and $\predm{\Ab{k}}$, by \emph{retraining new models} with architecture $M$ on top-$k$ and bottom-$k$ unmasked datasets respectively.
Also, note that we (a) average the \evalname~metric over five runs for each model and unmasking fraction or level $k$ and (b) unmask individual image pixels without grouping them channel-wise.

\begin{figure*}[t]
	\centering
	\includegraphics[width=0.99\linewidth]{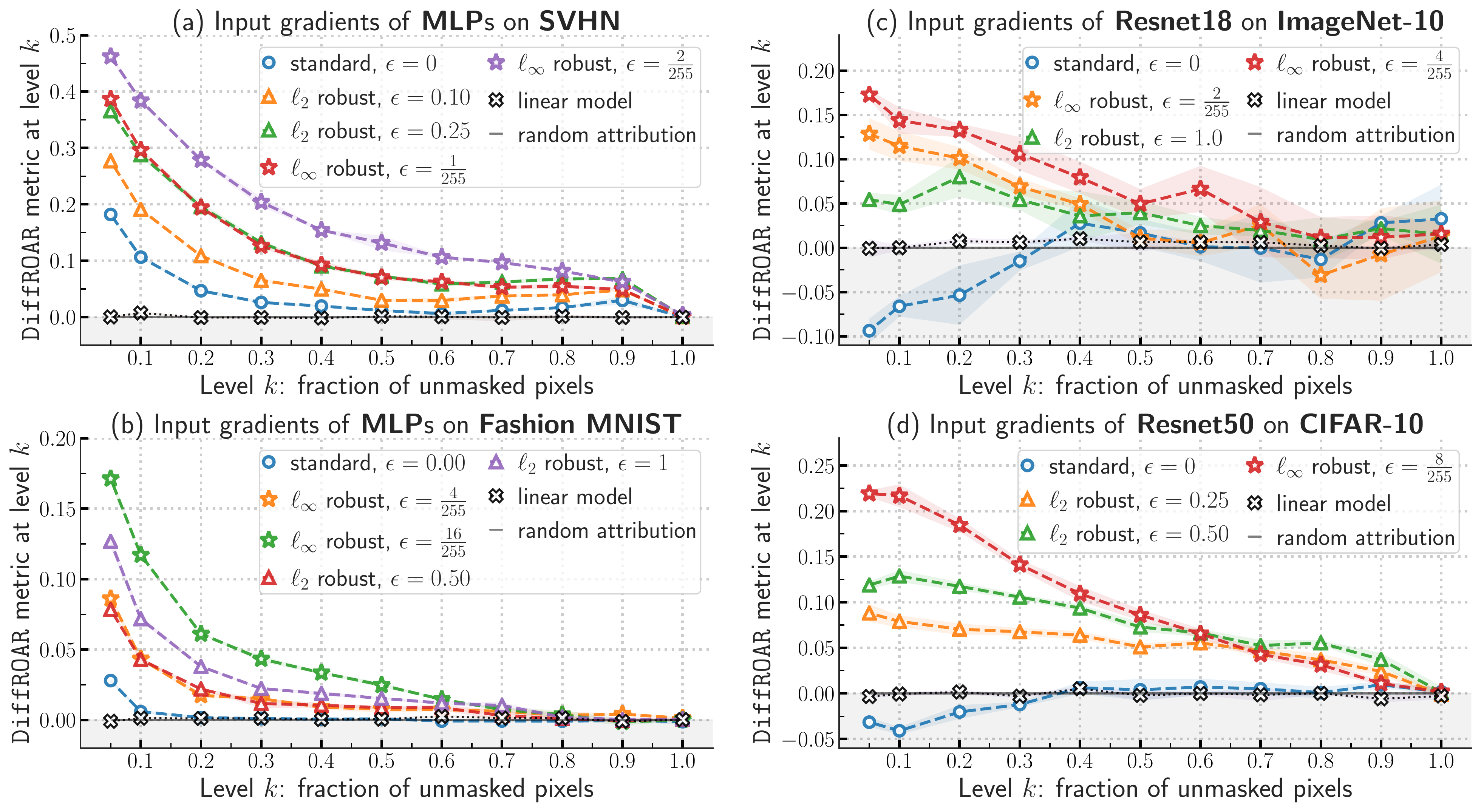}
	\vspace{-7px}
	\caption{
	\evalname~plots for input gradient attributions of standard and adversarially robust two-hidden-layer MLPs on (a) SVHN \& (b) Fashion MNIST, (c) Resnet18 on \imnet~and (d) Resnet50 on~\cifar.
	Subplot (a) indicates that adversarially robust MLPs consistently and considerably outperform standard MLPs on the~\evalname~metric for all $k < 100\%$.
	Subplot (b) shows that for most unmasking fractions $k$, standard MLPs trained on Fashion MNIST, unlike robust MLPs, fare no better than model-agnostic random attributions and input-agnostic attributions of linear models.
	Subplots (c) and (d) show that when $k<40\%$, standard Resnet models trained on~\cifar~and~\imnet~grossly violate~\premise, thereby implying that coordinates with top-most gradient attribution rank have worse predictive power than coordinates with bottom-most rank. 
	In stark contrast, input gradients of Resnets that are robust to $\ell_2$ and $\ell_{\infty}$ adversarial perturbations satisfy assumption \premise reasonably well.
	We observe that increasing the perturbation budget $\epsilon$ during adversarial training amplifies the magnitude of \evalname~for every $k$ across all four image classification benchmarks.
	} 
 \label{fig:real}
 \vspace{-15px}
\end{figure*}
\textbf{Experiment setup}.
Now, we analyze the~\evalname~metric as a function of  the unmasking fraction $k \in \{5, 10, 20, \ldots, 100\}\%$ in order to evaluate whether input gradient attributions of models trained on four image classification benchmarks satisfy assumption~\premise. 
In particular, as shown in~\Cref{fig:real}, we use~\evalname~to analyze input gradients of standard and adversarially robust two-hidden-layer MLPs on SVHN \& Fashion MNIST, Resnet18 on \imnet, and Resnet50 on~\cifar.
In order to calibrate our findings, we compare input gradient attributions of these models to two natural baselines: model-agnostic \emph{random} attributions and input-agnostic attributions of linear models. 

\textbf{Input gradients of standard models}.
Input gradient attributions of standard MLPs trained on SVHN satisfy assumption~\premise, as the \evalname~metric  in~\Cref{fig:real}(a) is  positive for all values of level $k$ < 100\%.
However, in~\Cref{fig:real}(b), the \evalname~curves of standard MLPs trained on Fashion MNIST indicate that input gradient attributions, consistent with findings in~\citet{Hooker2019ABF}, can fare no better than model-agnostic {random} attributions and input-agnostic attributions of linear models vis-a-vis assumption~\premise.
Furthermore, and rather surprisingly, the shaded area in~\Cref{fig:real}(c) and~\Cref{fig:real}(d) 
shows that when level $k$ < 40\%, \evalname~curves of standard Resnets trained on CIFAR-10 and Imagenet-10 are consistently \emph{negative} and perform considerably worse than model-agnostic and input-agnostic baseline attributions.
These results strongly suggest that on CIFAR-10 and Imagenet-10, input gradients of standard Resnets grossly violate assumption \premise and  suppress discriminative features. In other words, coordinates with \emph{larger} gradient magnitude have \emph{worse} predictive power than coordinates with \emph{smaller} gradient  magnitude.

\textbf{Input gradients of robust models}.
Models that are $\epsilon$-robust to $\ell_2$ and $\ell_{\infty}$ adversarial perturbations fare considerably better than standard models on the \evalname~metric.
For example, in~\Cref{fig:real}(a), when level $k$ equals $10$\%,  robust MLPs trained on SVHN outperform standard MLPs on the~\evalname~metric by roughly $10\text{-}30\%$.
The \evalname~curves of adversarially robust MLPs in~\Cref{fig:real} are positive at every level $k$ < $100$\%, which strongly suggests that input gradient attributions of robust MLPs satisfy assumption~\premise. 
Similarly, robust resnet50 models trained on \cifar~and \imnet~satisfy assumption \premise reasonably well and, unlike standard resnet50 models, starkly highlight discriminative features.
Furthermore, we observe that increasing the perturbation budget $\epsilon$ in $\ell_2$ or $\ell_{\infty}$ PGD adversarial training~\cite{madry2018towards} amplifies the magnitude of \evalname~across $k$ and for all four datasets.
That is, the adversarial perturbation budget $\epsilon$ determines the extent to which input gradients differentiates the most and least discriminative coordinates into two disjoint subsets. 

\textbf{Additional results}. 
In~\Cref{appendix:real}, we show that our~\evalname~results are robust to choice of model architecture \& SGD hyperparameters during retraining and also hold for input gradients taken with respect to cross-entropy loss. 
Additionally, while~\evalname~\emph{without retraining} gives qualitatively similar results, they are not as consistent across architectures as with retraining, particularly for small unmasking fraction $k$ that induce non-trivial distribution shifts.
\section{Analyzing input gradient attributions using~\semirealname~data}
\label{sec:semireal}
\vspace{-10px}
To verify whether input gradients satisfy assumption \premise~more thoroughly, we introduce and perform experiments on \semirealname, an \texttt{MNIST}-based dataset that \emph{by design} encodes a priori knowledge of ground-truth discriminative features.

\textbf{\semirealname~dataset design}: 
The design of the \semirealname~dataset is based on two intuitive properties of real-world object classification tasks: (i) for different images, the object of interest may appear in different parts of the image (e.g., top-left, bottom-right); (ii) the object of interest and the rest of the image often share low-level patterns such as edges that are not informative of the label on their own. 
We replicate these aspects in~\semirealname~instances, which are vertical concatenations of two $28 \times 28$ \emph{signal} and \emph{null} image blocks that are randomly placed at the top or bottom with equal probability.
The \emph{signal} block is an \texttt{MNIST} image of digit $0$ or digit $1$, corresponding to class $0$ or $1$ of the \semirealname~image respectively. 
On the other hand, the \emph{null} block in every \semirealname~image, independent of its class, contains a square patch made of two horizontal, vertical, and slanted lines, as shown in~\Cref{fig:semireal_robust}(a).
It is important to note that unlike the \texttt{MNIST} signal block that is fully predictive of the class, the non-discriminative null block contains no information about the class.
Standard as well as adversarially robust models trained on \semirealname~data attain 99.99\% test accuracy, thereby implying that model predictions are indeed based solely on the signal block for any given instance. 
We further verify this by noting that the predictions of trained model remain unchanged on almost every instance even when all pixels in the null block are set to zero.

\textbf{Do standard and robust models satisfy \premise?}
As discussed above, unlike the null block that has no task-relevant information, the \texttt{MNIST} digit in the signal block entirely determines the class of any given \semirealname~image.
Therefore, in this setting, we can restate assumption \premise~ as follows: \emph{Do input gradient attributions highlight the signal block over the null block?}
Surprisingly, as shown in~\Cref{fig:semireal_robust}(b,c), input gradient attributions of standard MLP and Resnet18 models highlight the signal block \emph{as well as} the non-discriminative null block. 
In stark contrast, subplots (d) and (e) in~\Cref{fig:semireal_robust} show that input gradient attributions of $\ell_2$ robust MLP and Resnet18 models exclusively highlight \texttt{MNIST} digits in the signal block and clearly suppress the square patch in the null block. 
These results validate our findings on real-world datasets by showing that unlike standard models, adversarially robust models satisfy \premise~on \semirealname~data. 

\textbf{Feature leakage hypothesis}:
Recall that the discriminative signal block in \semirealname~images is randomly placed at the top or bottom with equal probability.
Given our results in~\Cref{fig:semireal_robust}, we hypothesize that when discriminative features vary across instances (e.g., signal block at top vs. bottom), input gradients of standard models not only highlight instance-specific features but also \emph{leak} discriminative features {from} other instances. We term this hypothesis \emph{feature leakage}.
\begin{wrapfigure}{r}{0.58\linewidth}
	\centering
	\vspace{-13px}
	\includegraphics[width=\linewidth]{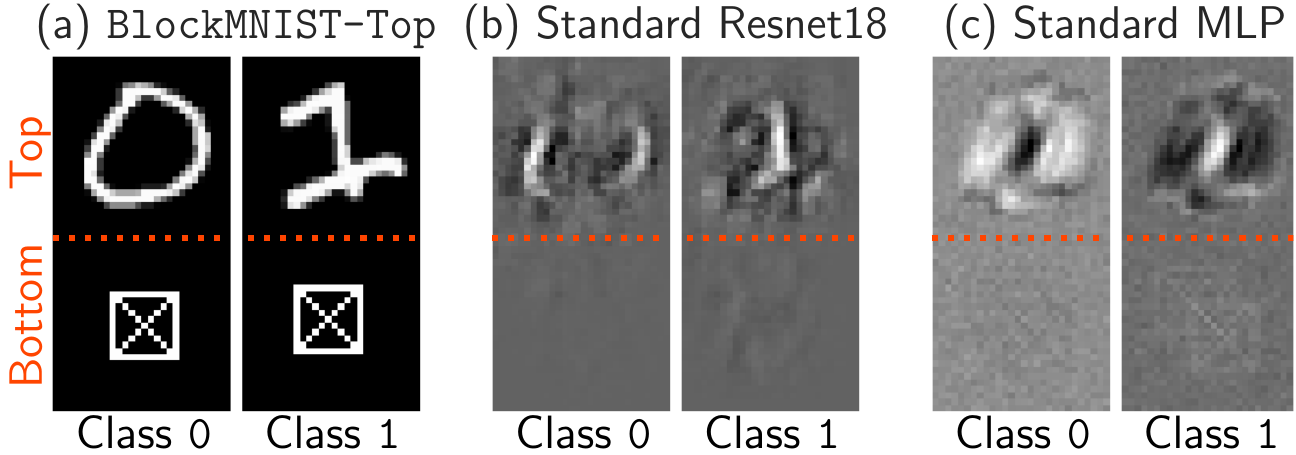}
	\vspace{-16px}
	\caption{
	(a) In \semirealnameone~images, the signal \& null blocks are fixed at the top \& bottom respectively.
	In contrast to results on \semirealname~in~\cref{fig:semireal_robust}, input gradients of standard (b) Resnet18 and (c) MLP trained on~\semirealnameone~highlight discriminative features in the signal block, suppress the null block, and satisfy \premise. 
	}
	\label{fig:semireal_leakage}
	\vspace{-10px}
\end{wrapfigure}
To test our hypothesis, we leverage the modular structure in \semirealname~to construct a slightly modified version, \semirealnameone, wherein the location of the \texttt{MNIST} signal block is fixed at the top for all instances (see~\cref{fig:semireal_leakage}). 
In this setting, in contrast to results on \semirealname, input gradients of \emph{standard} Resnet18 and MLP models trained on~\semirealnameone~ satisfy assumption \premise. 
Specifically, when the signal block is fixed at the top, input gradient attributions in~\Cref{fig:semireal_leakage}(b, c) clearly highlight the signal block and suppress the null block, thereby supporting our feature leakage hypothesis.
Based on our \semirealname~experiments, we believe that understanding \emph{how} adversarial robustness mitigates feature leakage is an interesting direction for future work.

\textbf{Additional results}. 
In~\Cref{app:leakage}, we 
(i) visualize input gradients of several \semirealname~and \semirealnameone~images, 
(ii) introduce a quantitative proxy metric to compare feature leakage between standard and robust models, 
(iii) show that our findings are fairly robust to the choice and number of classes in~\semirealname~data,
and (iv) evaluate feature leakage in five feature attribution methods. 
We also provide experiments that falsify hypotheses vis-a-vis input gradients and assumption \premise that we considered in addition to feature leakage.

%

\section{{Feature leakage in input gradient attributions}}
\label{sec:synth}
To understand the extent of feature leakage more thoroughly, we introduce a simplified version of the \semirealname~dataset that is amenable to theoretical analysis.
We rigorously show that input gradients of standard one-hidden-layer MLPs do not differentiate instance-specific features from other task-relevant features that are not pertinent to the given instance.

\textbf{Dataset}: 
Given dimension of each block $\dtilde$, feature vector $u^* \in \R^{\dtilde}$ with $\norm{u^*}=1$, number of blocks $d$ and noise parameter $\eps$, we will {construct} input {instances} of dimension $\dtilde \cdot d$. More concretely, a sample $(x,y) \in \R^{\dtilde \cdot d} \times \set{\pm 1}$ from the distribution $\D$ is generated as follows:
\begin{align}
	y &= \pm 1 \mbox{ with probability 0.5} \text{ and }  \label{eqn:synth} \nonumber \\
	x &= \begin{bmatrix}
		\eps g_1, & \eps g_2, & \ldots, & y u^* + \eps g_j, & \ldots, & \eps g_d
	\end{bmatrix} \mbox{ with } j \mbox{ chosen at random from } [\sfrac{d}{2}] 
\end{align}
where each $g_i \in \R^{\dtilde}$ is drawn uniformly at random from the unit ball. For simplicity, we take $d$ to be even so that $\sfrac{d}{2}$ is an integer.
We can think of each $x$ as a concatenation of $d$ $\dtilde$-dimensional blocks $\{x_1,\ldots,x_d\}$.
The first $\sfrac{d}{2}$ blocks, $\set{1,\ldots,\sfrac{d}{2}}$, are \emph{task-relevant}, as every example $(x,y)$ contains an instance-specific \emph{signal} block $x_i= y u^* + \eps g_i$ for some $i \in [\sfrac{d}{2}]$ that is informative of its label $y$.
Given instance $x$, we use $j^*(x)$ to denote the unique instance-specific signal block such that $x_{j^*(x)}=y u^* + \eps g_{j^*(x)}$. 
On the other hand, \emph{noise} blocks $\set{\sfrac{d}{2}+1,\ldots,d}$ do not contain task-relevant signal for any instance $x$.
At a high level, the instance-specific signal block $j^*(x)$ and noise blocks $\set{\sfrac{d}{2}+1,\ldots,d}$ in instance $x$ correspond to the discriminative \texttt{MNIST} digit and the null square patch in~\semirealname~images respectively.
For example, each row in~\Cref{fig:21_synth}(a) illustrates an instance $x$ where $d=10, \dtilde=1, \eta =0$ and $u^*=1$. 



\textbf{Model}: We consider one-hidden layer MLPs with ReLU nonlinearity in the infinite-width limit. More concretely, for a given width $m$, the network is parameterized by $R \in \R^{m \times \dtilde \cdot d}, b \in \R^m$ and $w \in \R^m$. Given an input instance $(x,y) \in \R^{\dtilde d} \times \set{\pm 1}$, the output score (or logit) $f$ and cross-entropy (CE) loss $\L$ are given by:
\begin{align*}
f((w,R,b),x) \defeq \iprod{w}{\phi\left(Rx+b\right)}, \quad 
\L((w,R,b),(x,y)) \defeq \log\left(1+\exp\left(-y \cdot f((w,R,b),x)\right)\right).
\end{align*}
where $\phi(t) \defeq \max(0,t)$ denotes the ReLU function.
A remarkable set of recent results~\cite{mei2018mean,chizat2018global,rotskoff2018trainability,sirignano2021mean} show that as $m\rightarrow \infty$, the training procedure is equivalent to gradient descent ({GD}) on an infinite dimensional Wasserstein space.
In the Wasserstein space, the network can be interpreted as a probability distribution $\nu$ over $\R \times \R^{\dtilde \cdot d} \times \R$ with output score $f$ and cross entropy loss $\L$ defined as:
\begin{align}\label{eqn:loss-fcn}
	f(\nu,x) \defeq \Enu{w \phi\left(\iprod{r}{x}+b\right)}, \quad 
	\L(\nu,(x,y)) \defeq \log\left(1+\exp\left(-y \cdot f(\nu,x)\right)\right).
\end{align}
\textbf{Theoretical analysis}:
Our approach leverages the recent result in~\citet{chizat2020implicit}, which shows that if GD in the Wasserstein space $\W^2\left(\R \times \R^{\dtilde d} \times \R\right)$ on~$\ED{\L(\nu,(x,y))}$ converges, it does so to a max-margin classifier given by:
\begin{align}\label{eqn:F1-max}
	\nu^* \defeq \argmax_{{\nu \in \P\left(\Sddtilde\right)}} \min_{(x,y) \sim \D} y \cdot f(\nu,x),
\end{align}
where $\Sddtilde$ denotes the surface of the Euclidean unit ball in $\R^{\dtilde d+2}$, and $\P\left(\Sddtilde\right)$ denotes the space of probability distributions over $\Sddtilde$.
Intuitively, our main result shows that on any data point $(x,y) \sim \D$, the input gradient magnitude of the max-margin classifier $\nu^*$ is \emph{equal} over all task-relevant blocks $\set{1,\ldots,\sfrac{d}{2}}$ and zero on the remaining \emph{noise} blocks $\set{\sfrac{d}{2}+1,\ldots,d}$.
\begin{theorem}\label{thm:main}
	Consider distribution $\D$~\eqref{eqn:synth} with $\eta < \frac{1}{10 d}$. \emph{There exists a} max-margin classifier $\nu^*$ for $\D$ in Wasserstein space (i.e., training both layers of FCN with $m\rightarrow\infty$) given by~\eqref{eqn:F1-max}, such that for all $ \forall \; (x,y) \sim \D$: (i) $\norm{\left(\nabla_x \L(\nu^*,(x,y))\right)_j} = c > 0$ for every $j \in [d/2]$ and (ii) $\norm{\left(\nabla_x \L(\nu^*,(x,y))\right)_j}=0$ for every $j \in \set{d/2+1,\cdots,d}$, where $\left(\nabla_x \L(\nu^*,(x,y))\right)_j$ denotes the $j^\textrm{th}$ block of the input gradient $\nabla_x \L(\nu^*,(x,y))$.
\end{theorem}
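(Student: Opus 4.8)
The plan is to first peel off the loss, reducing everything to the block structure of $\nabla_x f(\nu^*,x)$, and then to pin down a max-margin classifier $\nu^*$ explicitly enough that this block structure can be read off directly. For the reduction, write $\nabla_x \L(\nu^*,(x,y)) = -y\,\sigma(-y f(\nu^*,x))\,\nabla_x f(\nu^*,x)$, where $\sigma$ is the logistic function; the prefactor $-y\,\sigma(-yf(\nu^*,x))$ is a nonzero scalar (since $\sigma\in(0,1)$) common to every block. Hence $\norm{(\nabla_x\L)_j}$ equals this common scalar times $\norm{(\nabla_x f)_j}$, and the theorem reduces to showing $\norm{(\nabla_x f(\nu^*,x))_j}$ is a single positive value for $j\in[\sfrac{d}{2}]$ and is $0$ for the noise blocks. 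Now $(\nabla_x f(\nu,x))_j = \Enu{w\,\ind{\iprod{r}{x}+b>0}\,r_j}$. The key observation driving the argument: if $\nu^*$ is supported only on neurons whose weight is block-constant on the task-relevant blocks and zero on the noise blocks, i.e.\ $r=\beta R^*$ with $R^*=(u^*,\dots,u^*,0,\dots,0)$ and $\beta$ a per-neuron scalar, then $\iprod{r}{x}=\beta\,z(x)$ with $z(x):=\sum_{i\in[\sfrac{d}{2}]}\iprod{u^*}{x_i}=y+\eta\sum_{i\in[\sfrac{d}{2}]}\iprod{u^*}{g_i}$ \emph{independent of which task-relevant block $j^*(x)$ carries the signal}, while $r_j=\beta u^*$ for every $j\in[\sfrac{d}{2}]$. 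Consequently $(\nabla_x f(\nu^*,x))_j=\big(\Enus{w\beta\,\ind{\beta z(x)+b>0}}\big)u^*$ is the \emph{same} vector for all $j\in[\sfrac{d}{2}]$ and vanishes on the noise blocks, which is exactly (i)--(ii). So it suffices to construct one max-margin $\nu^*$ supported on such block-symmetric detector neurons.

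To motivate and then construct this $\nu^*$, I would exploit the symmetry of $\D$. The distribution is invariant under the group $G=S_{\sfrac{d}{2}}\times S_{\sfrac{d}{2}}$ permuting task-relevant blocks among themselves and noise blocks among themselves. This group acts on parameters by the corresponding orthogonal permutation of the blocks of $r$, preserving both the sphere $\Sddtilde$ and the margin objective in~\eqref{eqn:F1-max}; since $f(\cdot,x)$ is linear in $\nu$ and the margin $\nu\mapsto\min_{(x,y)} yf(\nu,x)$ is concave, averaging any max-margin solution over $G$ yields a $G$-invariant max-margin solution. However, invariance of the \emph{distribution} alone does not force each neuron to be block-constant (a uniform mixture of single-block detectors is $G$-invariant too), and for such mixtures the activation $\ind{\iprod{r}{x}+b>0}$ would depend on $j^*(x)$, breaking the per-instance equality needed in (i). This is why I would construct $\nu^*$ explicitly rather than merely symmetrize: take $\nu^*$ supported on a small symmetric set of block-constant detectors, concretely a neuron $(w,r,b)=(a,\beta R^*,b_0)$ and its sign-flipped partner $(-a,-\beta R^*,b_0)$, with the scalars chosen so that $(w,r,b)\in\Sddtilde$ and $f(\nu^*,x)=a\phi(\beta z+b_0)-a\phi(-\beta z+b_0)$ separates the two classes with the correct sign for every in-distribution $z$.

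The substantive step is to certify that this construction actually solves~\eqref{eqn:F1-max}. Here I would invoke the first-order/KKT characterization of mean-field max-margin from~\citet{chizat2020implicit}: a feasible $\nu^*$ of margin $\gamma$ is optimal iff there is a probability measure $\mu$ on the active data points (those at margin exactly $\gamma$) such that every $(w,r,b)\in\mathrm{supp}(\nu^*)$ maximizes the correlation $\mathbb{E}_{(x,y)\sim\mu}\!\left[y\,w\phi(\iprod{r}{x}+b)\right]$ over $\Sddtilde$ with optimal value $\gamma$, and no direction off the support exceeds it. By $G$-symmetry I would take $\mu$ to be the symmetrization over worst-case noise realizations of the boundary points, then verify (a) that the constructed detectors attain the maximal correlation and (b) that no competing direction --- in particular no single-block or partially supported detector --- does better. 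The hypothesis $\eta<\tfrac{1}{10d}$ is exactly what keeps the aggregated noise $|z(x)-y|\le \sfrac{d}{2}\cdot\eta<\tfrac{1}{20}$ small enough that the detectors stay correctly activated and the margin stays strictly positive, giving $\gamma>0$. Verifying this correlation-maximization over the entire sphere, while handling the ReLU activation boundary and the worst-case noise ranging over the unit ball, is the main obstacle; everything after it is bookkeeping. Finally, given the certified $\nu^*$, the computation of the first paragraph yields $(\nabla_x f(\nu^*,x))_j=\big(\int w\beta\,\ind{\beta z+b_0>0}\,d\nu^*\big)u^*$ for $j\in[\sfrac{d}{2}]$ --- a common vector whose norm is positive because the detector neurons are active on in-distribution points --- and $0$ on the noise blocks, which together with the nonzero loss prefactor establishes (i) and (ii).
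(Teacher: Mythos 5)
Your plan coincides, step for step, with the strategy of the paper's actual proof: the same reduction of the loss gradient to the logit gradient via the nonzero scalar prefactor, the same two-atom construction (a block-constant detector whose first-layer weight is proportional to $z=(u^*,\dots,u^*,0,\dots,0)$, together with its sign-flipped partner sharing the same bias), the same certification tool (the dual characterization of the mean-field max-margin classifier from \citet{chizat2020implicit}, invoked in the paper as Proposition~12), the same witness measure supported on worst-case points ($u^*$ in block $j$, $-\eta u^*$ in the remaining task-relevant blocks, zeros elsewhere), and the same final read-off of the block structure of the gradient. Your side remark that invariance of $\D$ under block permutations cannot by itself force per-neuron block-constancy (a uniform mixture of single-block detectors is also invariant) is correct, and it is exactly why an explicit construction plus a certificate is needed rather than a symmetrization argument.

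However, the proposal stops precisely where the proof begins. Essentially all of the mathematical content of the theorem lies in the verification you defer as ``the main obstacle'': showing that the two block-constant neurons maximize $\mathbb{E}_{(x,y)\sim p^*}\left[y\, w\phi\left(\iprod{r}{x}+b\right)\right]$ over the entire sphere $\Sddtilde$, and that the chosen worst-case points are indeed margin minimizers of $\nu^*$. In the paper this occupies nearly the whole of the appendix proof: one computes the margin of a generic sample against that of $y\utilde_j$ to settle the second condition; for the first, one reduces to $r$ whose task-relevant blocks are $\alpha_i u^*$ and whose noise blocks vanish, then splits on the sign of the bias --- for $b\ge 0$, using the function $\psi(a,b)=\phi(a+b)-\phi(-a+b)$ and its Lipschitz and monotonicity properties, one shows that negative $\alpha_i$ can be flipped at a strict gain and that equalizing the $\alpha_i$ only increases the objective; for $b<0$ one derives a strict upper bound, and this is exactly where the hypothesis $\eta<\frac{1}{10d}$ is used --- before optimizing the resulting scalar problem to recover the claimed atoms. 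None of this is routine bookkeeping: it is where the ReLU structure and the noise level are actually handled, and without it there is no proof that your constructed $\nu^*$ is max-margin, which is the entire assertion of the theorem, since the gradient computation afterward is immediate.
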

Theorem~\ref{thm:main} guarantees the \emph{existence} of a max-margin classifier such that the input gradient magnitude for any given instance is (i) a non-zero constant on each of the first $\sfrac{d}{2}$ task-relevant blocks, and (ii) equal to zero on the remaining $\sfrac{d}{2}$ \emph{noise} blocks that do not contain any information about the label. 
However, input gradients fail at highlighting the \emph{unique instance-specific signal} block over the remaining \emph{task-relevant} blocks.
This clearly demonstrates feature leakage, as input gradients for any given instance also highlight task-relevant features that are, in fact, \emph{not specific} to the given instance. 
Therefore, input gradients of standard one-hidden-layer MLPs do not highlight instance-specific discriminative features and grossly violate assumption \premise.
In Appendix~\ref{app:conj}, we present additional results that demonstrate that adversarially trained one-hidden-layer MLPs can suppress feature leakage and satisfy assumption~\premise.

\textbf{Empirical results}: 
Now, we supplement our theoretical results by evaluating input gradients of linear models as well as standard \& robust one-hidden-layer ReLU MLPs with width $10000$ on the dataset shown in~\Cref{fig:21_synth}. 
Note that all models obtain 100\% test accuracy on this linearly separable dataset, a simplified version of BlockMNIST that is obtained via eq.~\ref{eqn:synth} with $d=10, \dtilde=1, \eta=0$ and $u^*=1$. 
Due to insufficient expressive power, linear models have input-agnostic gradients that suppress all five noise coordinates, but do not differentiate the instance-specific signal coordinate from the remaining task-relevant coordinates. 
Consistent with~\Cref{thm:main}, even standard MLPs, which are expressive enough to have input gradients that correctly highlight instance-specific coordinates, apply equal weight on all five task-relevant coordinates and violate~\premise due to feature leakage. 
On the other hand,~\Cref{fig:21_synth}(c) shows that the same MLP architecture, if robust to $\ell_{\infty}$ adversarial perturbations with norm $0.35$, satisfies \premise~ by clearly highlighting the instance-specific signal coordinate over all other noise \emph{and} task-relevant coordinates

\begin{figure}[t]
	\centering
	\includegraphics[width=\linewidth]{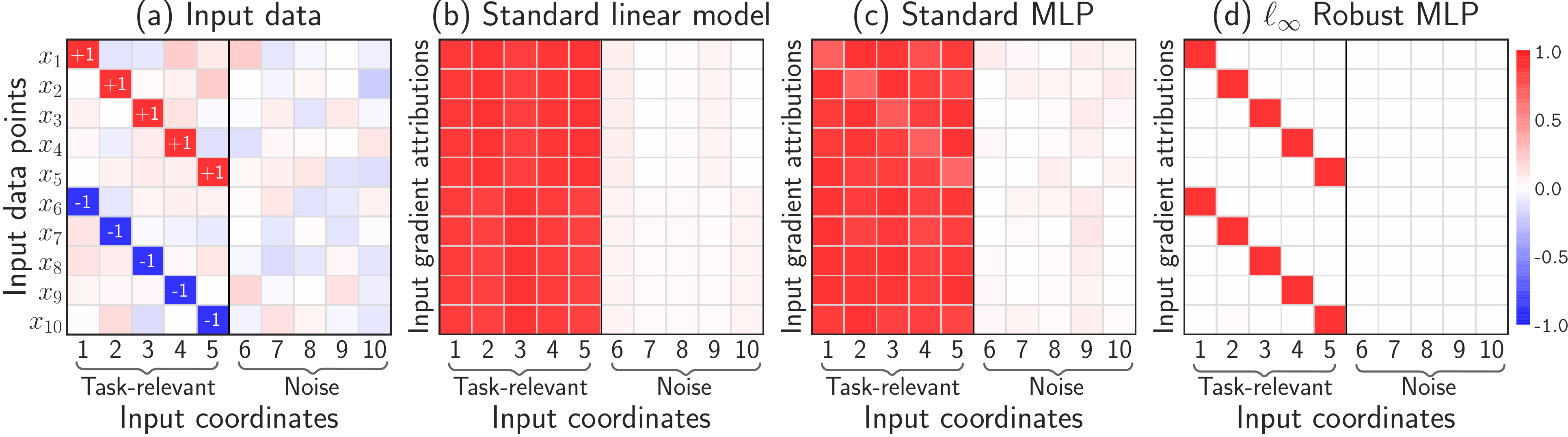}
	\vspace{-13px}
	\caption{
		Input gradients of linear models and standard \& robust MLPs trained on  data from eq.~\eqref{eqn:synth} with $d=10, \dtilde=1, \eta =0$ and $u^*=1$. 
		(a) Each row in corresponds to an instance $x$, and the highlighted coordinate denotes the signal block $j^*(x)$ \& label $y$.  
		(b) Linear models suppress noise coordinates but lack the expressive power to highlight instance-specific signal $j^*(x)$, as their input gradients in subplot (b) are identical across all examples.
		(c) Despite the expressive power to highlight instance-specific signal coordinate $j^*(x)$, input gradients of standard MLPs exhibit feature leakage (see \Cref{thm:main}) and violate \premise as well.
		(d) In stark contrast, input gradients of adversarially trained MLPs suppress feature leakage and starkly highlight instance-specific signal  coordinates $j^*(x)$. 
	}
	\label{fig:21_synth}
	\vspace{-5px}
\end{figure}

\section{Discussion and conclusion}
\label{sec:conc}
In this work, we took a three-pronged approach to investigate the validity of a key assumption made in several popular post-hoc attribution methods: 
\premise~\emph{coordinates with larger input gradient magnitude are more relevant for model prediction compared to coordinates with smaller input gradient magnitude}.
Through (i) evaluation on real-world data using our \evalname~framework, (ii) empirical analysis on \semirealname~data that encodes information of ground-truth discriminative features, and (iii) a rigorous theoretical study, we present strong evidence to suggest that standard models do not satisfy assumption \premise. In contrast, adversarially robust models satisfy \premise in a consistent manner. Furthermore, our analysis in~\Cref{sec:semireal} and~\Cref{sec:synth} indicates that \emph{feature leakage} sheds light on why input gradients of standard models tend to violate \premise. We provide additional discussion in~\Cref{appendix:discussion}.

This work exclusively focused on ``vanilla" input gradients due to their fundamental significance in \emph{feature attribution}. 
A similarly thorough investigation that analyzes other commonly-used attribution methods is an interesting avenue for future work.
Another interesting avenue for further analyses is to understand how adversarial training mitigates feature leakage in input gradient attributions.

\clearpage
\bibliographystyle{unsrtnat}
\bibliography{main.bib}

\begin{thebibliography}{68}
\providecommand{\natexlab}[1]{#1}
\providecommand{\url}[1]{\texttt{#1}}
\expandafter\ifx\csname urlstyle\endcsname\relax
  \providecommand{\doi}[1]{doi: #1}\else
  \providecommand{\doi}{doi: \begingroup \urlstyle{rm}\Url}\fi

\bibitem[Simonyan et~al.(2013)Simonyan, Vedaldi, and
  Zisserman]{simonyan2013deep}
Karen Simonyan, Andrea Vedaldi, and Andrew Zisserman.
\newblock Deep inside convolutional networks: Visualising image classification
  models and saliency maps.
\newblock \emph{arXiv preprint arXiv:1312.6034}, 2013.

\bibitem[Smilkov et~al.(2017)Smilkov, Thorat, Kim, Vi{\'e}gas, and
  Wattenberg]{smilkov2017smoothgrad}
Daniel Smilkov, Nikhil Thorat, Been Kim, Fernanda Vi{\'e}gas, and Martin
  Wattenberg.
\newblock Smoothgrad: removing noise by adding noise.
\newblock \emph{arXiv preprint arXiv:1706.03825}, 2017.

\bibitem[Leavitt and Morcos(2020)]{Leavitt2020TowardsFI}
Matthew~L. Leavitt and Ari~S. Morcos.
\newblock Towards falsifiable interpretability research.
\newblock \emph{ArXiv}, abs/2010.12016, 2020.

\bibitem[Ribeiro et~al.(2016)Ribeiro, Singh, and Guestrin]{lime}
Marco~Tulio Ribeiro, Sameer Singh, and Carlos Guestrin.
\newblock "why should {I} trust you?": Explaining the predictions of any
  classifier.
\newblock In \emph{Proceedings of the 22nd {ACM} {SIGKDD} International
  Conference on Knowledge Discovery and Data Mining, San Francisco, CA, USA,
  August 13-17, 2016}, pages 1135--1144, 2016.

\bibitem[Adebayo et~al.(2020)Adebayo, Muelly, Liccardi, and
  Kim]{Adebayo2020DebuggingTF}
Julius Adebayo, Michael Muelly, Ilaria Liccardi, and Been Kim.
\newblock Debugging tests for model explanations.
\newblock In H.~Larochelle, M.~Ranzato, R.~Hadsell, M.~F. Balcan, and H.~Lin,
  editors, \emph{Advances in Neural Information Processing Systems}, volume~33,
  pages 700--712. Curran Associates, Inc., 2020.
\newblock URL
  \url{https://proceedings.neurips.cc/paper/2020/file/075b051ec3d22dac7b33f788da631fd4-Paper.pdf}.

\bibitem[Zech et~al.(2018)Zech, Badgeley, Liu, Costa, Titano, and
  Oermann]{zech2018variable}
John~R Zech, Marcus~A Badgeley, Manway Liu, Anthony~B Costa, Joseph~J Titano,
  and Eric~Karl Oermann.
\newblock Variable generalization performance of a deep learning model to
  detect pneumonia in chest radiographs: a cross-sectional study.
\newblock \emph{PLoS medicine}, 15\penalty0 (11):\penalty0 e1002683, 2018.

\bibitem[Stiglic et~al.(2020)Stiglic, Kocbek, Fijacko, Zitnik, Verbert, and
  Cilar]{stiglic2020interpretability}
Gregor Stiglic, Primoz Kocbek, Nino Fijacko, Marinka Zitnik, Katrien Verbert,
  and Leona Cilar.
\newblock Interpretability of machine learning-based prediction models in
  healthcare.
\newblock \emph{Wiley Interdisciplinary Reviews: Data Mining and Knowledge
  Discovery}, 10\penalty0 (5):\penalty0 e1379, 2020.

\bibitem[Baehrens et~al.(2010)Baehrens, Schroeter, Harmeling, Kawanabe, Hansen,
  and M{\"u}ller]{baehrens2010explain}
David Baehrens, Timon Schroeter, Stefan Harmeling, Motoaki Kawanabe, Katja
  Hansen, and Klaus-Robert M{\"u}ller.
\newblock How to explain individual classification decisions.
\newblock \emph{The Journal of Machine Learning Research}, 11:\penalty0
  1803--1831, 2010.

\bibitem[Sundararajan et~al.(2017)Sundararajan, Taly, and
  Yan]{sundararajan2017axiomatic}
Mukund Sundararajan, Ankur Taly, and Qiqi Yan.
\newblock Axiomatic attribution for deep networks.
\newblock In \emph{International Conference on Machine Learning}, pages
  3319--3328. PMLR, 2017.

\bibitem[Murdoch et~al.(2019)Murdoch, Singh, Kumbier, Abbasi-Asl, and
  Yu]{murdoch2019definitions}
W~James Murdoch, Chandan Singh, Karl Kumbier, Reza Abbasi-Asl, and Bin Yu.
\newblock Definitions, methods, and applications in interpretable machine
  learning.
\newblock \emph{Proceedings of the National Academy of Sciences}, 116\penalty0
  (44):\penalty0 22071--22080, 2019.

\bibitem[Springenberg et~al.(2015)Springenberg, Dosovitskiy, Brox, and
  Riedmiller]{springenberg2015striving}
J~Springenberg, Alexey Dosovitskiy, Thomas Brox, and M~Riedmiller.
\newblock Striving for simplicity: The all convolutional net.
\newblock In \emph{ICLR (workshop track)}, 2015.

\bibitem[Adebayo et~al.(2018)Adebayo, Gilmer, Muelly, Goodfellow, Hardt, and
  Kim]{adebayo2018sanity}
Julius Adebayo, Justin Gilmer, Michael Muelly, Ian Goodfellow, Moritz Hardt,
  and Been Kim.
\newblock Sanity checks for saliency maps.
\newblock \emph{Advances in Neural Information Processing Systems},
  31:\penalty0 9505--9515, 2018.

\bibitem[Kindermans et~al.(2019)Kindermans, Hooker, Adebayo, Alber, Sch{\"u}tt,
  D{\"a}hne, Erhan, and Kim]{kindermans2019reliability}
Pieter-Jan Kindermans, Sara Hooker, Julius Adebayo, Maximilian Alber, Kristof~T
  Sch{\"u}tt, Sven D{\"a}hne, Dumitru Erhan, and Been Kim.
\newblock The (un) reliability of saliency methods.
\newblock In \emph{Explainable AI: Interpreting, Explaining and Visualizing
  Deep Learning}, pages 267--280. Springer, 2019.

\bibitem[Hooker et~al.(2019)Hooker, Erhan, Kindermans, and Kim]{Hooker2019ABF}
Sara Hooker, D.~Erhan, P.~Kindermans, and Been Kim.
\newblock A benchmark for interpretability methods in deep neural networks.
\newblock In \emph{NeurIPS}, 2019.

\bibitem[Madry et~al.(2018)Madry, Makelov, Schmidt, Tsipras, and
  Vladu]{madry2018towards}
Aleksander Madry, Aleksandar Makelov, Ludwig Schmidt, Dimitris Tsipras, and
  Adrian Vladu.
\newblock Towards deep learning models resistant to adversarial attacks.
\newblock In \emph{International Conference on Learning Representations}, 2018.
\newblock URL \url{https://openreview.net/forum?id=rJzIBfZAb}.

\bibitem[Springenberg et~al.(2014)Springenberg, Dosovitskiy, Brox, and
  Riedmiller]{springenberg2014striving}
Jost~Tobias Springenberg, Alexey Dosovitskiy, Thomas Brox, and Martin
  Riedmiller.
\newblock Striving for simplicity: The all convolutional net.
\newblock \emph{arXiv preprint arXiv:1412.6806}, 2014.

\bibitem[Shrikumar et~al.(2016)Shrikumar, Greenside, Shcherbina, and
  Kundaje]{shrikumar2016not}
Avanti Shrikumar, Peyton Greenside, Anna Shcherbina, and Anshul Kundaje.
\newblock Not just a black box: Learning important features through propagating
  activation differences.
\newblock \emph{arXiv preprint arXiv:1605.01713}, 2016.

\bibitem[Yang and Kim(2019)]{yang2019benchmarking}
Mengjiao Yang and Been Kim.
\newblock Benchmarking attribution methods with relative feature importance.
\newblock \emph{arXiv preprint arXiv:1907.09701}, 2019.

\bibitem[Sixt et~al.(2020)Sixt, Granz, and Landgraf]{pmlr-v119-sixt20a}
Leon Sixt, Maximilian Granz, and Tim Landgraf.
\newblock When explanations lie: Why many modified {BP} attributions fail.
\newblock In Hal~Daumé III and Aarti Singh, editors, \emph{Proceedings of the
  37th International Conference on Machine Learning}, volume 119 of
  \emph{Proceedings of Machine Learning Research}, pages 9046--9057. PMLR,
  13--18 Jul 2020.
\newblock URL \url{http://proceedings.mlr.press/v119/sixt20a.html}.

\bibitem[Yeh et~al.(2019)Yeh, Hsieh, Suggala, Inouye, and
  Ravikumar]{NEURIPS2019_a7471fdc}
Chih-Kuan Yeh, Cheng-Yu Hsieh, Arun Suggala, David~I Inouye, and Pradeep~K
  Ravikumar.
\newblock On the (in)fidelity and sensitivity of explanations.
\newblock In \emph{Advances in Neural Information Processing Systems},
  volume~32, pages 10967--10978, 2019.

\bibitem[Ancona et~al.(2018)Ancona, Ceolini, Öztireli, and
  Gross]{ancona2018towards}
Marco Ancona, Enea Ceolini, Cengiz Öztireli, and Markus Gross.
\newblock Towards better understanding of gradient-based attribution methods
  for deep neural networks.
\newblock In \emph{International Conference on Learning Representations}, 2018.
\newblock URL \url{https://openreview.net/forum?id=Sy21R9JAW}.

\bibitem[Sundararajan et~al.(2016)Sundararajan, Taly, and
  Yan]{sundararajan2016gradients}
Mukund Sundararajan, Ankur Taly, and Qiqi Yan.
\newblock Gradients of counterfactuals.
\newblock \emph{arXiv preprint arXiv:1611.02639}, 2016.

\bibitem[Shrikumar et~al.(2017)Shrikumar, Greenside, and
  Kundaje]{shrikumar2017learning}
Avanti Shrikumar, Peyton Greenside, and Anshul Kundaje.
\newblock Learning important features through propagating activation
  differences.
\newblock In \emph{International Conference on Machine Learning}, pages
  3145--3153. PMLR, 2017.

\bibitem[Bach et~al.(2015)Bach, Binder, Montavon, Klauschen, Müller, and
  Samek]{10.1371/journal.pone.0130140}
Sebastian Bach, Alexander Binder, Grégoire Montavon, Frederick Klauschen,
  Klaus-Robert Müller, and Wojciech Samek.
\newblock On pixel-wise explanations for non-linear classifier decisions by
  layer-wise relevance propagation.
\newblock \emph{PLOS ONE}, 10\penalty0 (7):\penalty0 1--46, 07 2015.
\newblock \doi{10.1371/journal.pone.0130140}.
\newblock URL \url{https://doi.org/10.1371/journal.pone.0130140}.

\bibitem[{Samek} et~al.(2017){Samek}, {Binder}, {Montavon}, {Lapuschkin}, and
  {Müller}]{7552539}
W.~{Samek}, A.~{Binder}, G.~{Montavon}, S.~{Lapuschkin}, and K.~{Müller}.
\newblock Evaluating the visualization of what a deep neural network has
  learned.
\newblock \emph{IEEE Transactions on Neural Networks and Learning Systems},
  28\penalty0 (11):\penalty0 2660--2673, 2017.
\newblock \doi{10.1109/TNNLS.2016.2599820}.

\bibitem[Arras et~al.(2019)Arras, Osman, M{\"u}ller, and
  Samek]{arras-etal-2019-evaluating}
Leila Arras, Ahmed Osman, Klaus-Robert M{\"u}ller, and Wojciech Samek.
\newblock Evaluating recurrent neural network explanations.
\newblock In \emph{Proceedings of the 2019 ACL Workshop BlackboxNLP: Analyzing
  and Interpreting Neural Networks for NLP}, pages 113--126, Florence, Italy,
  August 2019. Association for Computational Linguistics.
\newblock \doi{10.18653/v1/W19-4813}.
\newblock URL \url{https://www.aclweb.org/anthology/W19-4813}.

\bibitem[Tomsett et~al.(2020)Tomsett, Harborne, Chakraborty, Gurram, and
  Preece]{Tomsett2020SanityCF}
Richard Tomsett, D.~Harborne, S.~Chakraborty, Prudhvi Gurram, and A.~Preece.
\newblock Sanity checks for saliency metrics.
\newblock \emph{ArXiv}, abs/1912.01451, 2020.

\bibitem[Tramer et~al.(2020)Tramer, Carlini, Brendel, and
  Madry]{tramer2020adaptive}
Florian Tramer, Nicholas Carlini, Wieland Brendel, and Aleksander Madry.
\newblock On adaptive attacks to adversarial example defenses.
\newblock \emph{arXiv preprint arXiv:2002.08347}, 2020.

\bibitem[Santurkar et~al.(2019)Santurkar, Tsipras, Tran, Ilyas, Engstrom, and
  Madry]{santurkar2019image}
Shibani Santurkar, Dimitris Tsipras, Brandon Tran, Andrew Ilyas, Logan
  Engstrom, and Aleksander Madry.
\newblock Image synthesis with a single (robust) classifier.
\newblock \emph{Advances in Neural Information Processing Systems}, 32, 2019.

\bibitem[Ross and Doshi-Velez(2018)]{Ross2018ImprovingTA}
A.~Ross and Finale Doshi-Velez.
\newblock Improving the adversarial robustness and interpretability of deep
  neural networks by regularizing their input gradients.
\newblock In \emph{AAAI}, 2018.

\bibitem[Kim et~al.(2019{\natexlab{a}})Kim, Seo, Jeon, Koo, Choe, and
  Jeon]{Kim2019WhyAS}
Beomsu Kim, Junghoon Seo, Seunghyun Jeon, Jamyoung Koo, J.~Choe, and Taegyun
  Jeon.
\newblock Why are saliency maps noisy? cause of and solution to noisy saliency
  maps.
\newblock \emph{2019 IEEE/CVF International Conference on Computer Vision
  Workshop (ICCVW)}, pages 4149--4157, 2019{\natexlab{a}}.

\bibitem[Chalasani et~al.(2020)Chalasani, Chen, Chowdhury, Jha, and
  Wu]{Chalasani2020ConciseEO}
P.~Chalasani, J.~Chen, A.~Chowdhury, S.~Jha, and X.~Wu.
\newblock Concise explanations of neural networks using adversarial training.
\newblock In \emph{ICML}, 2020.

\bibitem[Kim et~al.(2019{\natexlab{b}})Kim, Seo, and Jeon]{Kim2019BridgingAR}
Beomsu Kim, Junghoon Seo, and Taegyun Jeon.
\newblock Bridging adversarial robustness and gradient interpretability.
\newblock \emph{ArXiv}, abs/1903.11626, 2019{\natexlab{b}}.

\bibitem[Hacohen et~al.(2020)Hacohen, Choshen, and
  Weinshall]{pmlr-v119-hacohen20a}
Guy Hacohen, Leshem Choshen, and Daphna Weinshall.
\newblock Let’s agree to agree: Neural networks share classification order on
  real datasets.
\newblock In Hal~Daumé III and Aarti Singh, editors, \emph{Proceedings of the
  37th International Conference on Machine Learning}, volume 119 of
  \emph{Proceedings of Machine Learning Research}, pages 3950--3960. PMLR,
  13--18 Jul 2020.
\newblock URL \url{http://proceedings.mlr.press/v119/hacohen20a.html}.

\bibitem[Li et~al.(2015)Li, Yosinski, Clune, Lipson, and
  Hopcroft]{pmlr-v44-li15convergent}
Yixuan Li, Jason Yosinski, Jeff Clune, Hod Lipson, and John Hopcroft.
\newblock Convergent learning: Do different neural networks learn the same
  representations?
\newblock In Dmitry Storcheus, Afshin Rostamizadeh, and Sanjiv Kumar, editors,
  \emph{Proceedings of the 1st International Workshop on Feature Extraction:
  Modern Questions and Challenges at NIPS 2015}, volume~44 of \emph{Proceedings
  of Machine Learning Research}, pages 196--212, Montreal, Canada, 11 Dec 2015.
  PMLR.
\newblock URL \url{http://proceedings.mlr.press/v44/li15convergent.html}.

\bibitem[Kalimeris et~al.(2019)Kalimeris, Kaplun, Nakkiran, Edelman, Yang,
  Barak, and Zhang]{NEURIPS2019_b432f34c}
Dimitris Kalimeris, Gal Kaplun, Preetum Nakkiran, Benjamin Edelman, Tristan
  Yang, Boaz Barak, and Haofeng Zhang.
\newblock Sgd on neural networks learns functions of increasing complexity.
\newblock In H.~Wallach, H.~Larochelle, A.~Beygelzimer, F.~d\textquotesingle
  Alch\'{e}-Buc, E.~Fox, and R.~Garnett, editors, \emph{Advances in Neural
  Information Processing Systems}, volume~32. Curran Associates, Inc., 2019.
\newblock URL
  \url{https://proceedings.neurips.cc/paper/2019/file/b432f34c5a997c8e7c806a895ecc5e25-Paper.pdf}.

\bibitem[Shah et~al.(2020)Shah, Tamuly, Raghunathan, Jain, and
  Netrapalli]{NEURIPS2020_6cfe0e61}
Harshay Shah, Kaustav Tamuly, Aditi Raghunathan, Prateek Jain, and Praneeth
  Netrapalli.
\newblock The pitfalls of simplicity bias in neural networks.
\newblock In H.~Larochelle, M.~Ranzato, R.~Hadsell, M.~F. Balcan, and H.~Lin,
  editors, \emph{Advances in Neural Information Processing Systems}, volume~33,
  pages 9573--9585. Curran Associates, Inc., 2020.
\newblock URL
  \url{https://proceedings.neurips.cc/paper/2020/file/6cfe0e6127fa25df2a0ef2ae1067d915-Paper.pdf}.

\bibitem[Netzer et~al.(2011)Netzer, Wang, Coates, Bissacco, Wu, and
  Ng]{netzer2011reading}
Yuval Netzer, Tao Wang, Adam Coates, Alessandro Bissacco, Bo~Wu, and Andrew~Y
  Ng.
\newblock Reading digits in natural images with unsupervised feature learning.
\newblock 2011.

\bibitem[Xiao et~al.(2017)Xiao, Rasul, and Vollgraf]{xiao2017fashion}
Han Xiao, Kashif Rasul, and Roland Vollgraf.
\newblock Fashion-mnist: a novel image dataset for benchmarking machine
  learning algorithms.
\newblock \emph{arXiv preprint arXiv:1708.07747}, 2017.

\bibitem[Krizhevsky et~al.(2009)Krizhevsky, Hinton,
  et~al.]{krizhevsky2009learning}
Alex Krizhevsky, Geoffrey Hinton, et~al.
\newblock Learning multiple layers of features from tiny images.
\newblock 2009.

\bibitem[Deng et~al.(2009)Deng, Dong, Socher, Li, Li, and
  Fei-Fei]{imagenet_cvpr09}
J.~Deng, W.~Dong, R.~Socher, L.-J. Li, K.~Li, and L.~Fei-Fei.
\newblock {ImageNet: A Large-Scale Hierarchical Image Database}.
\newblock In \emph{CVPR09}, 2009.

\bibitem[Russakovsky et~al.(2015)Russakovsky, Deng, Su, Krause, Satheesh, Ma,
  Huang, Karpathy, Khosla, Bernstein, Berg, and Fei-Fei]{ILSVRC15}
Olga Russakovsky, Jia Deng, Hao Su, Jonathan Krause, Sanjeev Satheesh, Sean Ma,
  Zhiheng Huang, Andrej Karpathy, Aditya Khosla, Michael Bernstein,
  Alexander~C. Berg, and Li~Fei-Fei.
\newblock {ImageNet Large Scale Visual Recognition Challenge}.
\newblock \emph{International Journal of Computer Vision (IJCV)}, 115\penalty0
  (3):\penalty0 211--252, 2015.
\newblock \doi{10.1007/s11263-015-0816-y}.

\bibitem[He et~al.(2015)He, Zhang, Ren, and Sun]{He2015}
Kaiming He, Xiangyu Zhang, Shaoqing Ren, and Jian Sun.
\newblock Deep residual learning for image recognition.
\newblock \emph{arXiv preprint arXiv:1512.03385}, 2015.

\bibitem[Mei et~al.(2018)Mei, Montanari, and Nguyen]{mei2018mean}
Song Mei, Andrea Montanari, and Phan-Minh Nguyen.
\newblock A mean field view of the landscape of two-layer neural networks.
\newblock \emph{Proceedings of the National Academy of Sciences}, 115\penalty0
  (33):\penalty0 E7665--E7671, 2018.

\bibitem[Chizat and Bach(2018)]{chizat2018global}
L{\'e}na{\"\i}c Chizat and Francis Bach.
\newblock On the global convergence of gradient descent for over-parameterized
  models using optimal transport.
\newblock In \emph{Proceedings of the 32nd International Conference on Neural
  Information Processing Systems}, pages 3040--3050, 2018.

\bibitem[Rotskoff and Vanden-Eijnden(2018)]{rotskoff2018trainability}
Grant~M Rotskoff and Eric Vanden-Eijnden.
\newblock Trainability and accuracy of neural networks: An interacting particle
  system approach.
\newblock \emph{arXiv preprint arXiv:1805.00915}, 2018.

\bibitem[Sirignano and Spiliopoulos(2021)]{sirignano2021mean}
Justin Sirignano and Konstantinos Spiliopoulos.
\newblock Mean field analysis of deep neural networks.
\newblock \emph{Mathematics of Operations Research}, 2021.

\bibitem[Chizat and Bach(2020)]{chizat2020implicit}
L{\'{e}}na{\"\i}c Chizat and Francis Bach.
\newblock Implicit bias of gradient descent for wide two-layer neural networks
  trained with the logistic loss.
\newblock In \emph{Conference on Learning Theory}, pages 1305--1338. PMLR,
  2020.

\bibitem[Ghorbani et~al.(2019)Ghorbani, Abid, and
  Zou]{Ghorbani2019InterpretationON}
A.~Ghorbani, Abubakar Abid, and James~Y. Zou.
\newblock Interpretation of neural networks is fragile.
\newblock In \emph{AAAI}, 2019.

\bibitem[Heo et~al.(2019)Heo, Joo, and Moon]{Heo2019FoolingNN}
Juyeon Heo, Sunghwan Joo, and T.~Moon.
\newblock Fooling neural network interpretations via adversarial model
  manipulation.
\newblock In \emph{NeurIPS}, 2019.

\bibitem[Dombrowski et~al.(2019)Dombrowski, Alber, Anders, Ackermann, Müller,
  and Kessel]{conf/nips/DombrowskiAAAMK19}
Ann-Kathrin Dombrowski, Maximilian Alber, Christopher~J. Anders, Marcel
  Ackermann, Klaus-Robert Müller, and Pan Kessel.
\newblock Explanations can be manipulated and geometry is to blame.
\newblock In Hanna~M. Wallach, Hugo Larochelle, Alina Beygelzimer, Florence
  d'Alché Buc, Emily~B. Fox, and Roman Garnett, editors, \emph{NeurIPS}, pages
  13567--13578, 2019.
\newblock URL
  \url{http://dblp.uni-trier.de/db/conf/nips/nips2019.html#DombrowskiAAAMK19}.

\bibitem[Bansal et~al.(2020)Bansal, Agarwal, and Nguyen]{Bansal2020SAMTS}
Naman Bansal, Chirag Agarwal, and Anh~M Nguyen.
\newblock Sam: The sensitivity of attribution methods to hyperparameters.
\newblock \emph{2020 IEEE/CVF Conference on Computer Vision and Pattern
  Recognition Workshops (CVPRW)}, pages 11--21, 2020.

\bibitem[Singh et~al.(2020)Singh, Kumari, Mangla, Sinha, Balasubramanian, and
  Krishnamurthy]{Singh2020AttributionalRT}
Mayank Singh, Nupur Kumari, P.~Mangla, Abhishek Sinha, V.~Balasubramanian, and
  Balaji Krishnamurthy.
\newblock Attributional robustness training using input-gradient spatial
  alignment.
\newblock In \emph{ECCV}, 2020.

\bibitem[Lakkaraju et~al.(2020)Lakkaraju, Arsov, and
  Bastani]{Lakkaraju2020RobustAS}
H.~Lakkaraju, Nino Arsov, and Osbert Bastani.
\newblock Robust and stable black box explanations.
\newblock In \emph{ICML}, 2020.

\bibitem[Chu et~al.(2020)Chu, Roy, and Andreas]{Chu2020AreVE}
E.~Chu, Deb Roy, and Jacob Andreas.
\newblock Are visual explanations useful? a case study in model-in-the-loop
  prediction.
\newblock \emph{ArXiv}, abs/2007.12248, 2020.

\bibitem[Poursabzi-Sangdeh et~al.(2018)Poursabzi-Sangdeh, Goldstein, Hofman,
  Vaughan, and Wallach]{PoursabziSangdeh2018ManipulatingAM}
Forough Poursabzi-Sangdeh, D.~Goldstein, J.~Hofman, Jennifer~Wortman Vaughan,
  and H.~Wallach.
\newblock Manipulating and measuring model interpretability.
\newblock \emph{ArXiv}, abs/1802.07810, 2018.

\bibitem[Pruthi et~al.(2020)Pruthi, Dhingra, Soares, Collins, Lipton, Neubig,
  and Cohen]{Pruthi2020EvaluatingEH}
Danish Pruthi, Bhuwan Dhingra, Livio~Baldini Soares, M.~Collins, Zachary~C.
  Lipton, Graham Neubig, and William~W. Cohen.
\newblock Evaluating explanations: How much do explanations from the teacher
  aid students?
\newblock \emph{ArXiv}, abs/2012.00893, 2020.

\bibitem[Chen et~al.(2020)Chen, Agarwal, and Nguyen]{chen2020shape}
Peijie Chen, Chirag Agarwal, and Anh Nguyen.
\newblock The shape and simplicity biases of adversarially robust
  imagenet-trained cnns.
\newblock \emph{arXiv preprint arXiv:2006.09373}, 2020.

\bibitem[Kayhan and Gemert(2020)]{kayhan2020translation}
Osman~Semih Kayhan and Jan C~van Gemert.
\newblock On translation invariance in cnns: Convolutional layers can exploit
  absolute spatial location.
\newblock In \emph{Proceedings of the IEEE/CVF Conference on Computer Vision
  and Pattern Recognition}, pages 14274--14285, 2020.

\bibitem[Tanay and Griffin(2016)]{tanay2016boundary}
Thomas Tanay and Lewis Griffin.
\newblock A boundary tilting persepective on the phenomenon of adversarial
  examples.
\newblock \emph{arXiv preprint arXiv:1608.07690}, 2016.

\bibitem[Ilyas et~al.(2019)Ilyas, Santurkar, Tsipras, Engstrom, Tran, and
  Madry]{ilyas2019adversarial}
Andrew Ilyas, Shibani Santurkar, Dimitris Tsipras, Logan Engstrom, Brandon
  Tran, and Aleksander Madry.
\newblock Adversarial examples are not bugs, they are features.
\newblock \emph{arXiv preprint arXiv:1905.02175}, 2019.

\bibitem[Shafahi et~al.(2018)Shafahi, Huang, Studer, Feizi, and
  Goldstein]{shafahi2018adversarial}
Ali Shafahi, W~Ronny Huang, Christoph Studer, Soheil Feizi, and Tom Goldstein.
\newblock Are adversarial examples inevitable?
\newblock \emph{arXiv preprint arXiv:1809.02104}, 2018.

\bibitem[Bubeck et~al.(2019)Bubeck, Lee, Price, and
  Razenshteyn]{bubeck2019adversarial}
S{\'e}bastien Bubeck, Yin~Tat Lee, Eric Price, and Ilya Razenshteyn.
\newblock Adversarial examples from computational constraints.
\newblock In \emph{International Conference on Machine Learning}, pages
  831--840. PMLR, 2019.

\bibitem[Salman et~al.(2020)Salman, Ilyas, Engstrom, Kapoor, and
  Madry]{salman2020adversarially}
Hadi Salman, Andrew Ilyas, Logan Engstrom, Ashish Kapoor, and Aleksander Madry.
\newblock Do adversarially robust imagenet models transfer better?
\newblock \emph{arXiv preprint arXiv:2007.08489}, 2020.

\bibitem[Engstrom et~al.(2019)Engstrom, Ilyas, Santurkar, Tsipras, Tran, and
  Madry]{engstrom2019adversarial}
Logan Engstrom, Andrew Ilyas, Shibani Santurkar, Dimitris Tsipras, Brandon
  Tran, and Aleksander Madry.
\newblock Adversarial robustness as a prior for learned representations.
\newblock \emph{arXiv preprint arXiv:1906.00945}, 2019.

\bibitem[Shamir et~al.(2021)Shamir, Melamed, and BenShmuel]{shamir2021dimpled}
Adi Shamir, Odelia Melamed, and Oriel BenShmuel.
\newblock The dimpled manifold model of adversarial examples in machine
  learning.
\newblock \emph{arXiv preprint arXiv:2106.10151}, 2021.

\bibitem[Zeiler and Fergus(2013)]{zeiler2013visualizing}
Matthew~D Zeiler and Rob Fergus.
\newblock Visualizing and understanding convolutional networks (2013).
\newblock \emph{arXiv preprint arXiv:1311.2901}, 2013.

\bibitem[Chizat(2021)]{chizat-comm}
L{\'{e}}na{\"\i}c Chizat.
\newblock Personal communication, 2021.

\end{thebibliography}



\newpage
\appendix
\appendixpage

The supplementary material is organized as follows. 
We first discuss additional related work~\Cref{sec:related}.
\Cref{appendix:discussion} provides additional discussion.
\Cref{appendix:real} describes additional experiments based on 
the \evalname~framework to analyze the fidelity of input gradient attributions on real-world datasets.
In~\Cref{appendix-semireal}, we provide additional experiments on feature leakage using~\semirealname-based datasets.
Then, \Cref{app:theory} contains the proof of Theorem 1 and~\Cref{appendix-adv-train} discusses the effect of adversarial training on input gradients of models that are adversarially trained on a simplified version of \semirealname~data.
We plan to open-source our trained models, code primitives, and Jupyter notebooks soon, which can be used to reproduce our empirical results.
\section{Additional related work}
\label{appendix:related}

In this section, we briefly describe works that analyze two properties of post-hoc instance-specific explanations that are related to explanation fidelity or ``correctness''.
In particular, we outline recent works that study the \emph{robustness} and \emph{practical utility} of instance-specific explanation methods.

\textbf{Robustness of explanations}: Several commonly used instance-specific explanation methods lack robustness in practice. 
\citet{Ghorbani2019InterpretationON} show that instance-specific explanations and exempler-based explanations are not robust to imperceptibly small adversarial perturbations to the input. 
\citet{Heo2019FoolingNN} show that instance-specific explanations are highly vulnerable to adversarial \emph{model} manipulations as well.
\citet{conf/nips/DombrowskiAAAMK19} show that explanations lack robustness to to \emph{arbitrary} manipulations and show that non-robustness stems from geometric properties of neural networks.  
\citet{Bansal2020SAMTS} show that explanation methods are considerably sensitive to method-specific hyperparameters such as sample size, blur radius, and random seeds.
Recent works promote robustness in explanations using smoothing~\cite{conf/nips/DombrowskiAAAMK19} or variants of adversarial training~\cite{Singh2020AttributionalRT,Lakkaraju2020RobustAS}

\textbf{Utility of explanations}:
A recent line of work propose evaluation frameworks to assess the practical utility of post-hoc instance-specific explanation methods via proxy downstream tasks.
\citet{Chu2020AreVE} employ a randomized controlled trial to show that using explanation methods as additional information does not improve \emph{human} accuracy on classification tasks.
More generally,~\citet{PoursabziSangdeh2018ManipulatingAM} analyze the effect of model transparency (e.g., number of input features, black-box vs. white-box) on the accuracy of human decisions with respect to the task and model.
Similarly,~\citet{Adebayo2020DebuggingTF} conduct a human subject study to show that subjects fail to identify defective models using attributions and instead primarily rely on model predictions. 
~\cite{Pruthi2020EvaluatingEH} formalize the ``value'' of explanations as the explanation utility (i.e., as side information) in a student-teacher learning framework.
In contrast to the works above, we propose an evaluation framework, \evalname, to evaluate the fidelity, or ``correctness'', of explanations in classification tasks.
In particular, using benchmark image classification tasks and synthetic data, we empirically and theoretically characterize input gradient attributions of standard as well as adversarially robust models.

\textbf{Stability of explanations}.
Explanation stability and explanation correctness (also known as explanation fidelity) are two distinct desirable properties of explanations~\cite{NEURIPS2019_a7471fdc}. That is, stability does not imply fidelity. For example, an input-agnostic constant explanation is stable but lacks fidelity. Conversely, fidelity does not imply stability---if the underlying model is itself unstable, then any correct high-fidelity explanation of that model must also be unstable.
\citet{Bansal2020SAMTS} and~\citet{chen2020shape} identify and explain why input gradients of adversarially trained models are more stable compared to those of standard models.
In contrast, our work focuses on identifying and explaining why input gradients of adversarially trained models have more fidelity compared to those of standard models. 
Furthermore, we also take the first step towards theoretically showing that adversarial robustness can provably improve input gradient fidelity in~\Cref{app:theory}.


\clearpage
\section{Additional discussion}
\label{appendix:discussion}

\textbf{Translation invariance in \semirealname~models}.
Intuitively, CNNs are translation-invariant only if the object of interest is not closer to the boundary than the receptive field of the final layer; In~\semirealname, the digits are either close to the top boundary or the bottom boundary. Given that  the receptive field of Resnets is quite large, translation invariance would not hold in this case. This is further supported by recent work~\cite{kayhan2020translation}, which demonstrates that “CNNs can and will exploit the absolute spatial location by learning filters that respond exclusively to particular absolute locations by exploiting image boundary effects”.
We observe this phenomenon empirically in our \semirealnameone~ experiments as well. That is, while models trained on BlockMNIST-Top data (i.e., MNIST digit in top block) attain 100\% test accuracy on \semirealnameone~images, the accuracy of these models degrades to approximately 55\% (i.e., 5\% better than random chance) when evaluated on \texttt{BlockMNIST-Bottom} images, wherein the MNIST digit (signal) is placed in the bottom block.

\textbf{Choice of removal operator in \evalname~framework}.
Recall that in \evalname, the predictive power of a new model retrained on the unmasked dataset (i.e, data points after removal operation) is used to evaluate the fidelity of post-hoc explanation methods. 
Note that this approach employs retraining to account for and nullify distribution shifts induced by feature removal operators such as gaussian noise, zeros etc.
Since the same removal operation is applied to unmask every image (across classes), the choice of removal operator has no effect on our \evalname~results in~\Cref{sec:real}.
To verify this, we evaluated \evalname~on \cifar~with another removal operator in which pixels are masked/replaced by random gaussian noise (instead of zeros) and observed that the results do not change (i.e., same as in~\Cref{fig:real}). 

\textbf{Counterfactual changes vis-a-vis feature leakage}.
As evidenced in the \semirealname~experiments, input gradient attributions of standard models incorporate counterfactual changes in the null block. While this phenomenon seems natural and ``intuitive'' in hindsight, it can be misleading in the context of feature attributions. For example, consider the typical use case for feature attributions: to highlight regions within the given instance/image that are most relevant for model prediction. Now, in the~\semirealname~setting, if input gradients leak digit-like features into the null block, then the feature attributions in the null block can be easily (mis)interpreted as the non-discriminative null patch being highly relevant for model prediction.

\textbf{Comparison to results in~\citet{Kim2019BridgingAR}}.
\citet{Kim2019BridgingAR} use the \texttt{ROAR} framework to conjecture that adversarial training ``tilts'' input gradients to better align with the data manifold. 
First, in contrast to~\citet{Kim2019BridgingAR}, we thoroughly establish our \evalname~results across datasets/architectures/hyper-parameters, revealing a significantly larger gap between the attribution quality of standard and adversarially robust models.
Second, motivated by the boundary tilting hypothesis~\cite{tanay2016boundary}, \citet{Kim2019BridgingAR} use a two-dimensional synthetic dataset to empirically show that the decision boundary of robust models aligns better with the vector between the two class-conditional means. However, this empirical evidence might be misleading, as~\citet{ilyas2019adversarial} theoretically demonstrates that “this exact statement is not true beyond two dimensions” (pg. 15).
Furthermore, several recent works have also provided concrete evidence to support alternative hypotheses~\cite{ilyas2019adversarial,NEURIPS2020_6cfe0e61,shafahi2018adversarial,bubeck2019adversarial} for the existence of adversarial examples that counter the boundary tilting hypothesis that Kim et al. build upon.
This discrepancy in these results motivates the need for a multipronged approach, which we adopt to empirically identify the feature leakage hypothesis using BlockMNIST and theoretically verify the hypothesis in~\Cref{sec:synth}.

\textbf{Connections between adversarial robustness and data manifold}: In the recent past, there have been several results showing unexpected benefits of adversarially trained models beyond adversarial robustness such as visually perceptible input gradients~\cite{santurkar2019image} and feature representations that transfer better~\cite{salman2020adversarially}. One reason for this phenomenon widely considered in the literature~\cite{engstrom2019adversarial,shamir2021dimpled} is that the input data lies on a low dimensional manifold and unlike standard training, adversarial training encourages the decision boundary to lie on this manifold (i.e. alignment with data manifold). Our experiments and theoretical results on feature leakage suggest that this reasoning is indeed true for both the~\semirealname~and its simplified version presented in Section~\ref{sec:synth}. Furthermore, we believe that the simplified version of~\semirealname~in~eq. \eqref{eqn:synth} can be used as a tool to thoroughly investigate both the benefits and potential drawbacks of adversarially trained models.
\clearpage 

\textbf{Why focus on input gradient attributions?}.
As discussed in~\Cref{sec:intro}, several feature attributions such as guided backprop~\cite{springenberg2014striving} and integrated gradients~\cite{sundararajan2016gradients} that output visually sharper saliency maps fail basic sanity checks such as model randomization and label randomization~\cite{adebayo2018sanity,kindermans2019reliability,NEURIPS2019_a7471fdc}.
We focus on vanilla input gradient attributions for two key reasons: (i) vanilla input gradients pass both sanity checks mentioned above and (ii) the input gradient operation is the key building block of several feature attribution methods. 
Our experiments and theoretical analysis are specifically designed  to identify and verify feature leakage in input gradient attributions of standard models. 

\textbf{Comparing \texttt{ROAR} and \texttt{DiffROAR}}. The following questions below illustrate key differences between \texttt{ROAR}~\cite{Hooker2019ABF} and our work:
\begin{itemize}[leftmargin=*]
	\vspace{-5px}
	\item \textit{Does the framework verify assumption \premise?} In~\citet{Hooker2019ABF}, the \texttt{ROAR} framework essentially computes the top-$k$ predictive power only, which is not sufficient to test assumption \premise. In our paper, DiffROAR directly compares the top-$k$ and bottom-$k$ predictive power to test whether the given attribution method satisfies assumption \premise.
	\item \textit{Are the results in the paper conclusive?} Both, \texttt{ROAR} and \evalname, make a key assumption: models retrained on unmasked datasets learn the same features as the model trained on the original dataset. Although empirically supported~\cite{pmlr-v119-hacohen20a,pmlr-v44-li15convergent,NEURIPS2020_6cfe0e61}, this assumption makes it difficult to conclusively test assumption \premise. Therefore, we empirically (Section 5) and theoretically (Section 6) verify our \evalname~findings in settings wherein ground-truth features are known a priori.
	\item \textit{Does the work identify why standard input gradients violate \premise?} \citet{Hooker2019ABF} do not discuss why input gradients lack explanation fidelity. In our paper, we hypothesize feature leakage as the key reason for ineffectiveness of input gradients, and validate it with empirical as well as theoretical analysis on \semirealname-based data
\end{itemize}

\textbf{Limitations of \texttt{ROAR} and \evalname}. The major limitation of \texttt{ROAR} and \evalname~is the key assumption that models retrained on unmasked datasets learn the same features as the model trained on the original dataset. In the absence of ground-truth features, this assumption is empirically supported by findings that suggest that different runs of models sharing the same architecture learn similar features~\cite{pmlr-v119-hacohen20a,pmlr-v44-li15convergent,NEURIPS2020_6cfe0e61}. 
Another limitation is that \texttt{ROAR}-based frameworks are not useful in the following setting. Consider a redundant dataset where features are either all negative (in which case label $y=0$) or all positive (in which case label $y=1$). In such cases, no feature is more or less informative than any other, so no information can be gained by ranking or removing input coordinates/features.

\clearpage
\section{Experiments on real-world datasets using \evalname}
\label{appendix:real}
\vspace{-6px}



In this section, 
we first provide additional details about datasets, training, and performance of trained models vis-a-vis generalization and robustness.
We also present top-$k$ and bottom-$k$ predictive power of input gradient unmasking schemes obtained via standard and robust models.
Next, we show that our results on image classification benchmarks are robust to CNN architectures and SGD hyperparameters used during retraining.
Then, we use \evalname~to show that our results hold with input \emph{loss} gradients, but \emph{signed} input logit gradients do not satisfy assumption \premise~for standard \emph{or} robust models.
Finally, we discuss \evalname~results obtained {without} retraining and provide additional example images that are masked using input gradients of standard \& robust models.

\vspace{-5px}
\subsection{{Additional details about \evalname~experiments and trained models}}
\label{app:expt_deets}
We first provide additional details about standard and adversarial training, and describe the performance of trained models vis-a-vis generalization and robustness to $\ell_2$ \& $\ell_{\infty}$ perturbations.

Recall that we use~\evalname~to analyze input gradients of standard and adversarially robust two-hidden-layer MLPs on SVHN \& Fashion MNIST, Resnet18 on \imnet, and Resnet50 on~\cifar~in~\Cref{fig:real}.
In these experiments, we train models using stochastic gradient descent (SGD), with momentum $0.9$, batch size $256$, $\ell_2$ regularization $0.0005$ and initial learning rate $0.1$ that decays by a factor of $0.75$ every $20$ epochs;
We obtain $\ell_2$ and $\ell_{\infty}$ $\epsilon$-robust models with perturbation budget $\epsilon$ using PGD adversarial training~\cite{madry2018towards}.
In PGD adversarial training, we use learning rate $\sfrac{\epsilon}{4}$, $8$ steps of PGD and no random initialization in order to compute $\epsilon$-norm $\ell_2$ and $\ell_{\infty}$ perturbations.
In both cases, we use standard data augmentation and train models for at most $500$ epochs, stopping early if cross-entropy (standard or adversarial) loss on training data goes below $0.001$. 
Unless mentioned otherwise, we set the depth and width of MLPs trained on real datasets to be $2$ and $2\times$ the input dimension respectively.

\Cref{fig:app_perf} depicts standard test accuracy (i.e., when perturbation budget $\epsilon=0$) and $\epsilon$-robust test accuracy (for multiple values of $\epsilon$) of standard as well as $\ell_2$ and $\ell_{\infty}$ robust models trained on SVHN, Fashion MNIST, CIFAR-10 and ImageNet-10.
Note that to estimate $\epsilon$-robust test accuracy, we use PGD-based adversarial \emph{test} examples, computed using $2 \times$ the number of PGD steps used during training. 
As expected, we observe that (i) compared to standard models, adversarially trained MLPs and CNNs attain significantly better robust test accuracy, (ii) models trained with larger perturbation budget are more robust to larger-norm adversarial perturbations at test time, and (iii) standard test accuracies (when $\epsilon=0$) of adversarially trained models are worse than those of standard models.

\begin{figure*}[h]
	\centering
	\includegraphics[width=\linewidth]{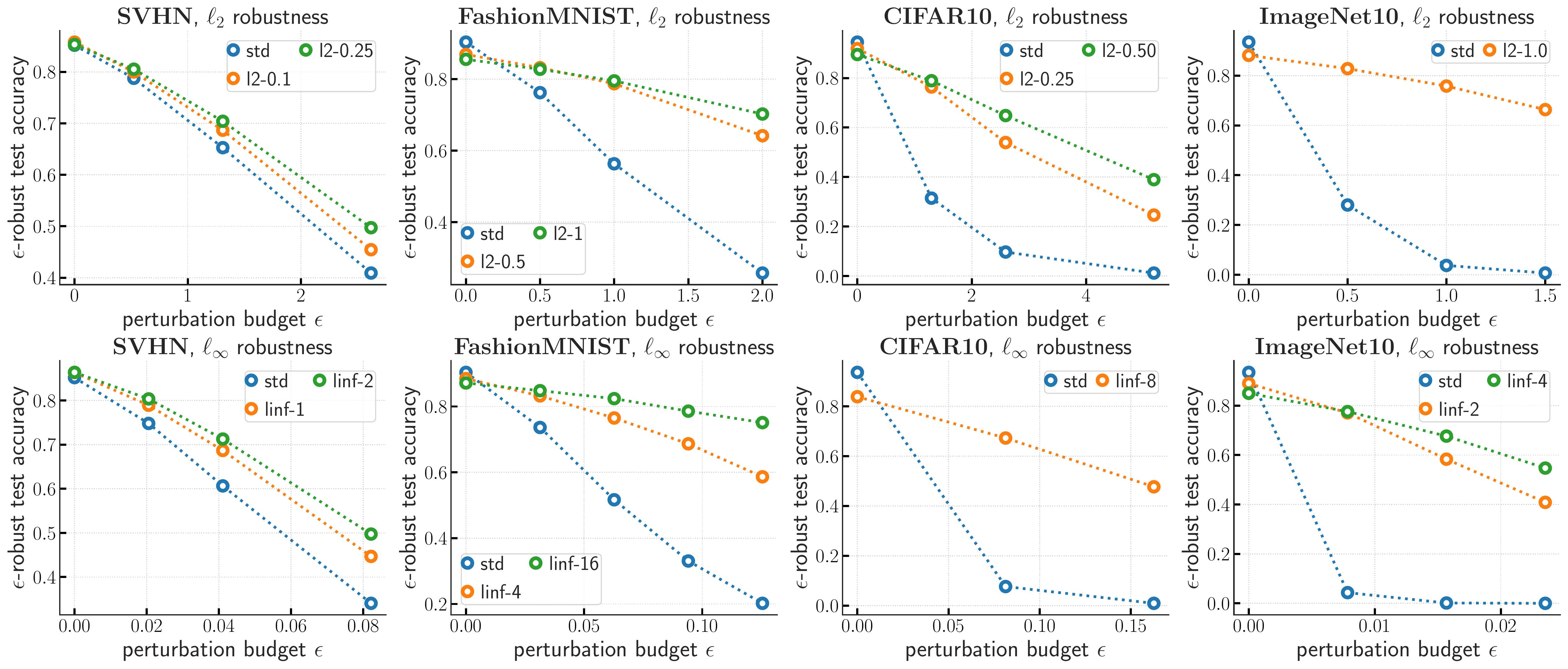}
	\vspace{-15px}
	\caption{
		\textbf{Standard and $\epsilon$-robust test accuracies} of MLPs trained on SVHN and Fashion MNIST, Resnet50 trained on CIFAR-10, and Resnet18 trained on ImageNet10. Details in~\Cref{app:expt_deets}.
	} 
 \label{fig:app_perf}
 \vspace{-5px}
\end{figure*}

\subsection{{Top-$k$ and bottom-$k$ predictive power of input gradient attributions}}
\label{app:expt_topk_bottomk}

Now, we describe the top-$k$ and bottom-$k$ predictive power curves for unmasking schemes of input gradients of standard and robust models.
Recall that top-$k$ predictive power simply estimates the test accuracy of models  that are retrained on datasets wherein only coordinates with top-$k$ (\%) of the coordinates are unmasked in every image.
The top and bottom rows in~\Cref{fig:app_tkbk} show how top-$k$ and bottom-$k$ predictive power of input gradient attributions of standard and robust models vary with unmasking fraction $k$ respectively. 
The subplots in~\Cref{fig:app_tkbk} show that (i) decreasing the unmasking fraction $k$ decreases top-$k$ and bottom-$k$ predictive power, and (ii) models retrained on attribution-masked datasets attain non-trivial unmasked test dataset accuracy even when a significant fraction of coordinates with the top-most and bottom-most attributions are masked. 

As described in~\Cref{sec:prelim}, for a given attribution scheme and unmasking fraction or level $k$, \evalname~(see equation~\eqref{eqn:diffroar}) is positive when the top-$k$ predictive power is greater than the bottom-$k$ predictive power.
The subplots in the first column indicate that standard models trained on Fashion MNIST do not satisfy assumption~\premise~because the top-$k$ and bottom-$k$ unmasking schemes are \emph{equally} ineffective at masking discriminative features.
Conversely, the difference between the top-$k$ and bottom-$k$ predictive power of input gradient attributions of robust models is significant. 
For example, in the second column, for the SVHN model adversarially trained with $\ell_{\infty}$ perturbations and budget $\epsilon=\sfrac{2}{255}$ (purple line), top-$k$ predictive power is roughly $40\%$ more than the bottom-$k$ predictive power when $k=5\%$. 
Furthermore, as shown in the third and fourth columns, the top-$k$ and bottom-$k$ curves of standard CNNs trained on CIFAR-10 and ImageNet-10 are ``inverted'', thereby explaining why  \evalname~is \emph{negative} when unmasking fraction is roughly less than $40\%$. 

\begin{figure*}[h]
	\centering
	\includegraphics[width=\linewidth]{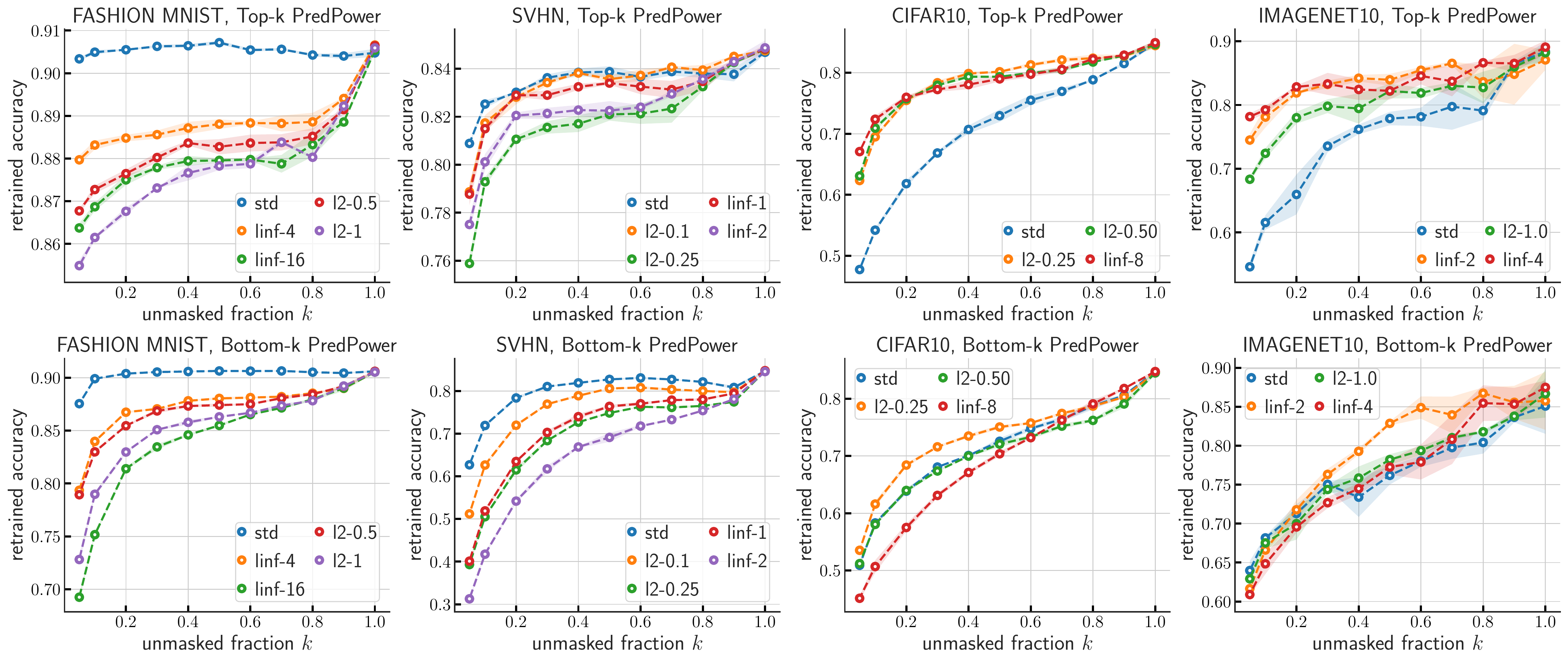}
	\vspace{-15px}
	\caption{
		\textbf{Predictive power of top-$k$ and bottom-$k$ input gradient unmasking schemes} vs. unmasking fraction, or level, $k$ for standard and adversarially robust models trained on $4$ image classification benchmarks. Please see~\Cref{app:expt_topk_bottomk} for details. 
	} 
 \label{fig:app_tkbk}
 \vspace{-5px}
\end{figure*}

\vspace{-5px}
\subsection{{Effect of model architecture on \evalname~results}}
\vspace{-5px}
\label{app:diffroar_arch}
Recall that in~\Cref{sec:real}, we used the~\evalname~metric to evaluate whether input gradient attributions of models trained on real-world datasets satisfy or violate assumption~\premise. 
For CNNs, we evaluated input gradient attributions of standard Resnet50 and Resnet18 models trained on CIFAR-10 and Imagenet-10 respectively. 
In this section, we show that our empirical findings based on these architectures extend to three other commonly-used and well-known CNN architectures: Densenet121, InceptionV3, and VGG11.

As shown in~\Cref{fig:app_arch}, the \evalname~results support key empirical findings made using input gradients of Resnet models in~\Cref{sec:real}: (i) standard models perform poorly, often no better or even worse than the random attribution baseline, and (ii) \evalname~curves of adversarially robust models are positive and significantly better than that of the standard model.
For example, for Densenet121, InceptionV3, and VGG11, when unmasking fraction $k=20\%$, standard training yields input gradient attributions that attain~\evalname~scores roughly $-5\%$, $2\%$ and $1\%$ respectively, whereas $\ell_{\infty}$ adversarial training with budget $\epsilon=\sfrac{6}{255}$ results in input gradients with \evalname~metric roughly $15\%$. 

\begin{figure*}[h]
	\centering
	\includegraphics[width=\linewidth]{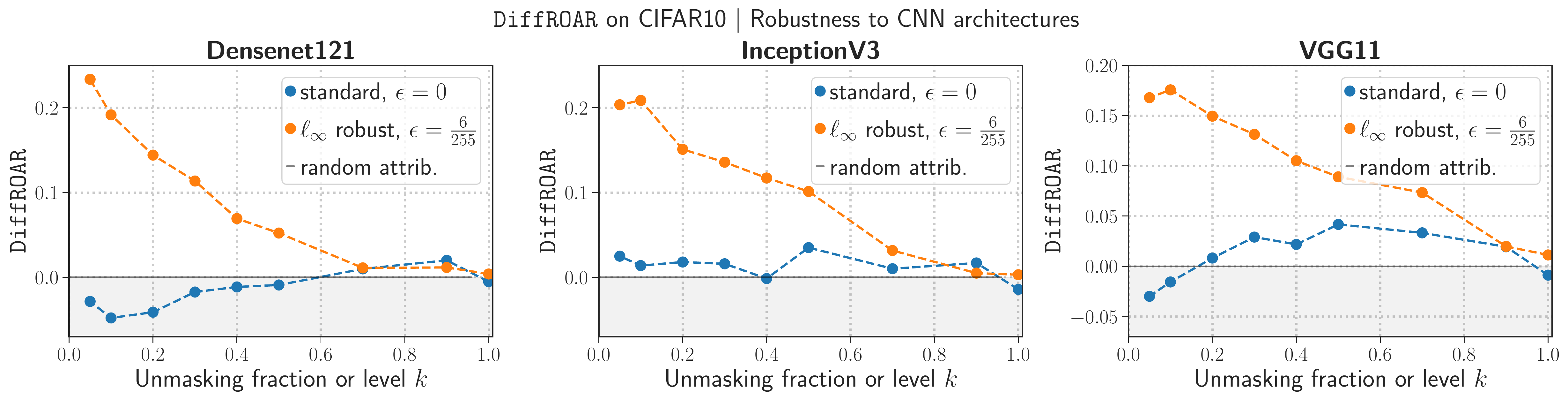}
	\vspace{-15px}
	\caption{
		\textbf{\evalname~results on input gradients of additional CNN architectures}.
		\evalname~curves for three well-known NN architectures---Densenet121, InceptionV3, and VGG11---indicate that empirical findings vis-a-vis input gradients of standard and robust models (\Cref{sec:real}) are robust to choice of CNN architecture. 
		Please see~\Cref{app:diffroar_arch} for details.	} 
 \label{fig:app_arch}
 \vspace{-5px}
\end{figure*}

\vspace{-5px}
\subsection{{Effect of SGD Hyperparameters on \evalname~results}}
\label{app:hyp}
\vspace{-5px}
\begin{figure*}[t]
	\centering
	\includegraphics[width=\linewidth]{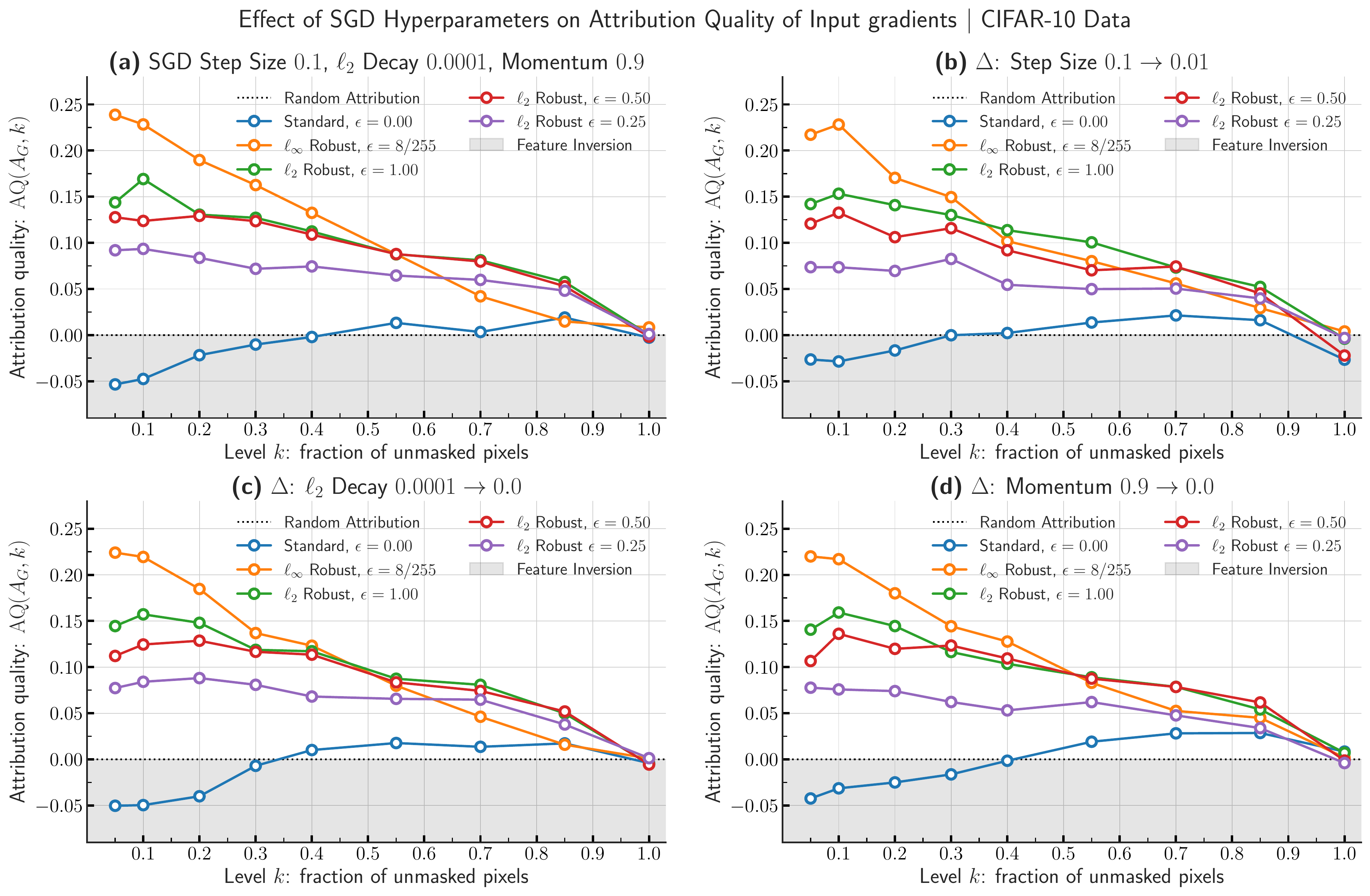}
	\vspace{-15px}
	\caption{
		\textbf{\evalname~robust to SGD hyperparameters in retraining}.
	\evalname~curves for input gradients of standard and robust models trained on \cifar~data show that our empirical findings presented in~\Cref{sec:real} are robust to SGD hyperparameters that are used in retraining. 
	Specifically, we show that our findings vis-a-vis \evalname~are not sensitive to changes in SGD hyperparameters such as learning rate, momentum, and weight decay that are used to retrain models on unmasked \cifar~data.
	For example, the subplots above show that across multiple SGD hyperparameter values, when the fraction of unmasked pixels $k < 30$-$40\%$, standard models violate \premise~whereas robust models satisfy \premise.
	See~\Cref{app:hyp} for details.
	} 
 \label{fig:app_sgd}
 \vspace{-15px}
\end{figure*}

In this section, we show that \evalname~results for input gradient attribu of standard and robust models are not sensitive to the choice of SGD hyperparameters used during retraining. 
In particular, we show that \evalname~curves on \cifar~are not sensitive to the learning rate, weight decay, or the momentum used in SGD to train models on top-$k$ or bottom-$k$ attribution-masked datasets.
The four subplots in~\Cref{fig:app_sgd} collectively show that decreasing learning rate from $0.1$ to $0.01$, weight decay from $0.0001$ to $0$, and momentum from $0.9$ to $0$ does not alter our findings: (i) input gradient attributions of standard models do not satisfy \premise~when unmasking fraction $k$ is roughly less than $30$-$40$\%; (ii) models that are robust to $\ell_2$ and $\ell{\infty}$ perturbations consistently satisfy \premise; (iii) increasing perturbation budget $\epsilon$ during PGD adversarial training increases \evalname~metric for most values of unmasking fraction $k$.
To summarize, our results based on the \evalname~evaluation framework are robust to SGD hyperparameters used to retrain models on top-$k$ and bottom-$k$ unmasked datasets.

\vspace{-5px}
\subsection{{Evaluating input \emph{loss} gradient attributions using \evalname}}
\vspace{-5px}
\begin{figure*}[t]
	\centering
	\includegraphics[width=\linewidth]{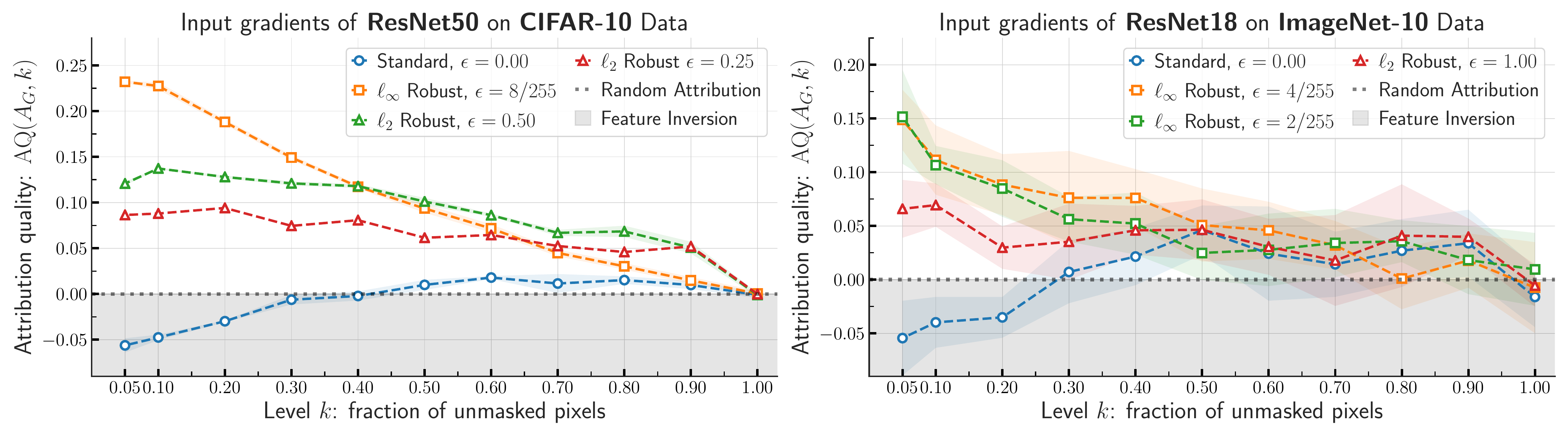}
	\vspace{-15px}
	\caption{
		\textbf{\evalname~results for input loss gradient attributions}.
	\evalname~plots for input \emph{loss} gradient attributions of standard and adversarially Resnet50 on \cifar~and Resnet18 on~\imnet.
	In both subplots, standard models violate \premise~when the fraction of unmasked pixels $k < 30\%$.
	That is, input coordinates that have the largest gradient magnitude are not as important performance-wise as the coordinates with smallest gradient magnitude.
	Conversely, $\{\ell_2,\ell_{\infty}\}$-adversarially trained models satisfy \premise, as the \evalname~metric is positive for all $k < 100\%$.
	Similar to our results with input logit gradients, we observe that increasing the perturbation budget $\epsilon$ during adversarial training amplifies the magnitude of \evalname~for every $k$ across all four image classification benchmarks.
	}
 \label{fig:app_loss}
 \vspace{-5px}
\end{figure*}

Recall that our experiments in~\Cref{sec:real} evaluate whether input gradients taken w.r.t. the logit of the predicted label satisfy or violate assumption~\premise~on image classification benchmarks.
In this section, we show that our empirical findings generalize to input \emph{loss} gradients---input gradients w.r.t loss (e.g., cross-entropy)---of standard and robust models evaluated on image classification benchmarks.
Specifically, we apply \evalname~ to input \emph{loss} gradients of standard and robust ResNet models trained on \cifar~and~\imnet.

\Cref{fig:app_loss} illustrates \evalname~curves for input \emph{loss} gradient attributions on \cifar~and~\imnet~data.
In both cases, we observe that (i) input loss gradient attributions of robust models, unlike those of standard models, satisfy \premise and (ii) PGD adversarial training  with larger perturbation budget $\epsilon$ increases the \evalname~metric in a consistent manner. Recall that the magnitude in \evalname~quantifies the extent to which the attribution order separates discriminative and task-relevant features from features that are unimportant for model prediction; see~\Cref{sec:prelim} for more information about \evalname.

\vspace{-5px} 
\subsection{{Evaluating \emph{signed} input gradient attributions using \evalname}}
\label{app:signed}
\vspace{-5px}
In addition to input loss gradient magnitude attributions and input logit gradient magnitude attributions, our results vis-a-vis \evalname~evaluation on image classification benchmarks extend to \emph{signed} input logit gradients as well.
In signed input gradient attributions, input coordinates are ranked based on $\text{sgn}{x_i} \cdot g_i$ where $\text{sgn}(x_i)$ is the sign of input coordinate $x_i$ and $g_i$ is the signed input gradient value for input coordinate $x_i$. 
\vspace{-10px}

\begin{figure*}[h]
	\centering
	\includegraphics[width=0.95\linewidth]{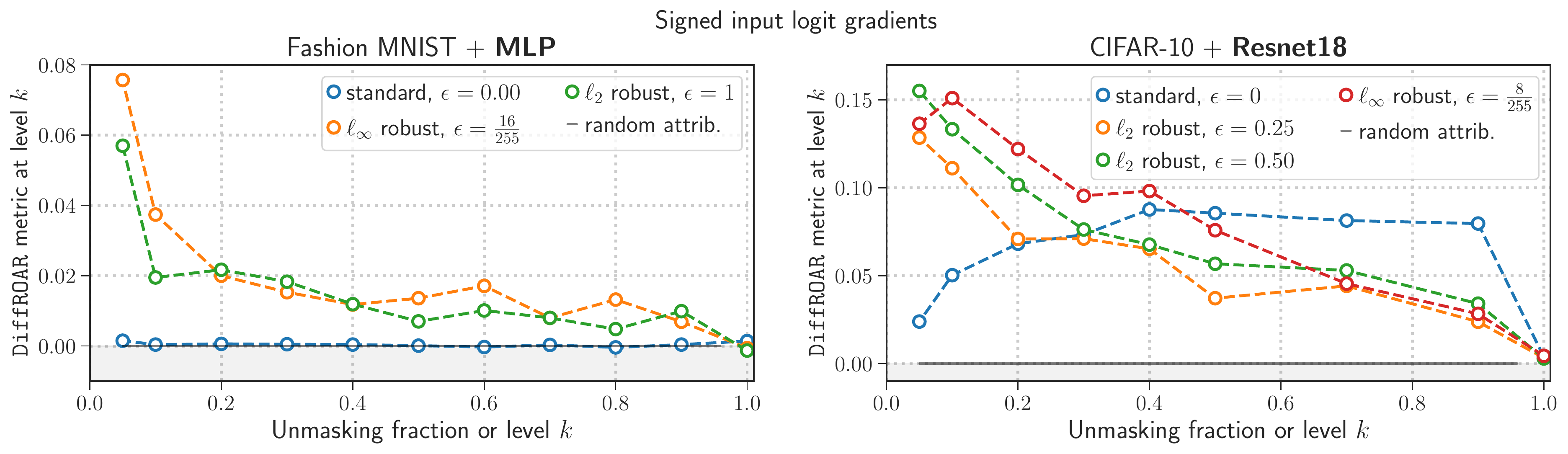}
	\vspace{-6px}
	\caption{
		\textbf{\evalname~results for signed input logit gradients}.
		\evalname~results for attributions based on signed input gradients of standard and robust MLPs \& CNNs trained on Fashion MNIST \& CIFAR-10. See~\Cref{app:signed} for details.
	} 
 \label{fig:app_signed}
 \vspace{-10px}
\end{figure*}

\Cref{fig:app_signed} shows \evalname~curves for attributions based on \emph{signed} input gradients taken with respect to the logit of the predicted label. 
The left and right subplot evaluate \evalname~for standard and robust (i) MLP trained on Fashion MNIST and (ii) Resnet18 models trained on CIFAR-10. 
Consistent with our findings in~\Cref{sec:real}, while standard MLPs trained on Fashion MNIST fare no better than random attributions, signed input gradients of robust MLPs attain positive \evalname~scores for all $k < 100\%$ and perform considerably better than gradients of standard MLPs. 
Similarly, based on the \evalname~metric, when $k<50\%$, while signed input gradients of standard Resnet18 models perform better than absolute logit and loss gradients, signed input gradients of robust Resnet18 models continue to fare better than standard models.

\subsection{{The role of retraining in \evalname~evaluation}}
Figure~\ref{fig:app_no_retraining} shows the results on \evalname~without retraining on the masked datasets. As we can see from the figures, the trends are not consistent across model architectures and datasets, possibly due to varying levels of distribution shift. For this reason, we employ \evalname~with retraining as described in Section~\ref{sec:prelim}.

\begin{figure*}[h]
	\centering
	\includegraphics[width=\linewidth]{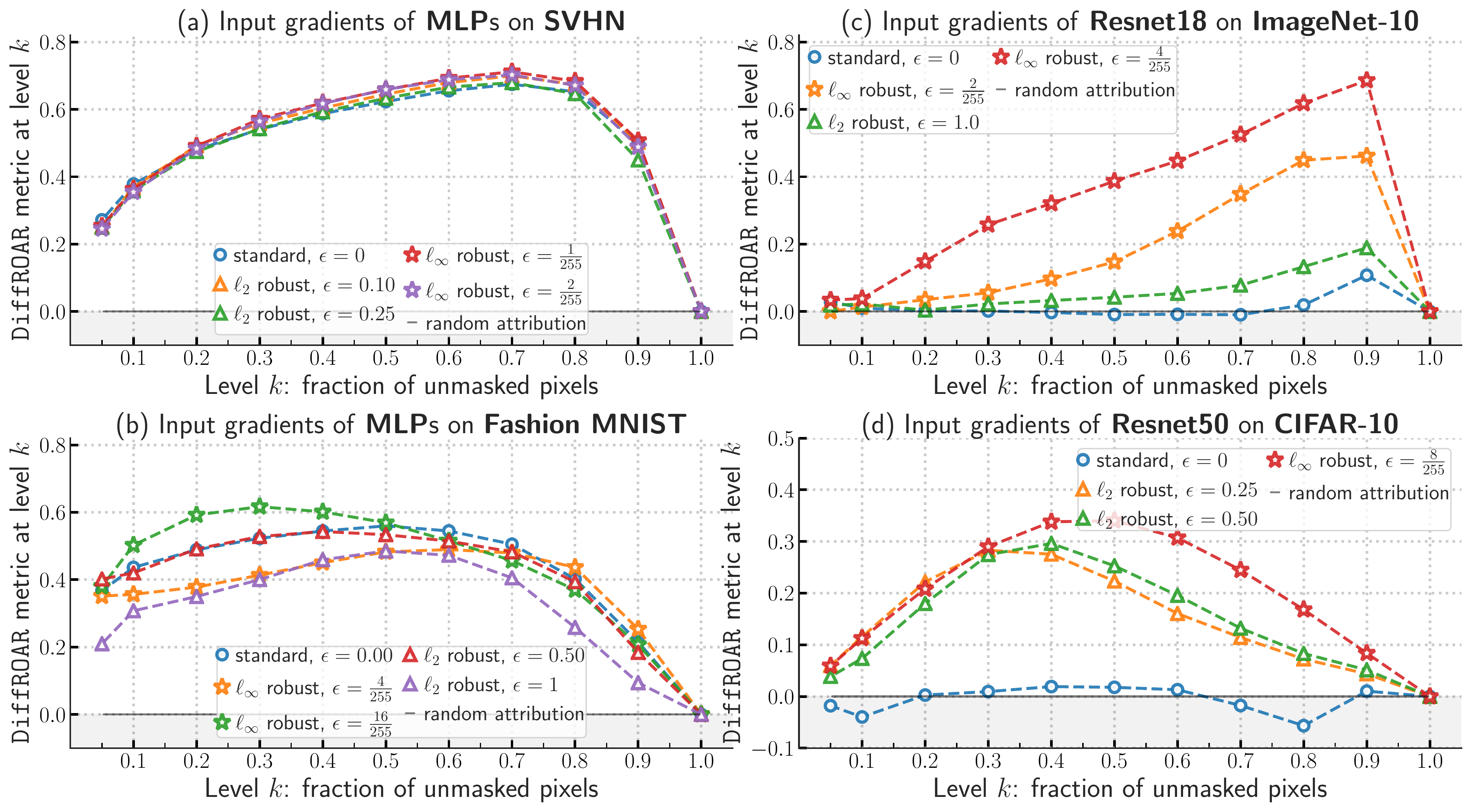}
	\vspace{-15px}
	\caption{
		\textbf{\evalname~results without retraining}. While we observe that standard models violate \premise~while adversarially trained models satisfy \premise~for the Resnet models, we see that both standard and adversarially trained models satisfy \premise~for MLP models, showing that this evaluation methodology does not yield consistent results across model architectures/datasets. Further, the \evalname~metric may be unrealiable for small unmasking fractions since this incurs heavy distribution shift. Consequently, we employ \evalname~after retraining on the new unmasked data.
	} 
 \label{fig:app_no_retraining}
 \vspace{-10px}
\end{figure*}

\subsection{{Imagenet-10 images unmasked using input gradients attributions of Resnet18 models}}
\label{app:viz}
\vspace{-5px}
\begin{figure*}[b]
	\centering
	\includegraphics[width=\linewidth]{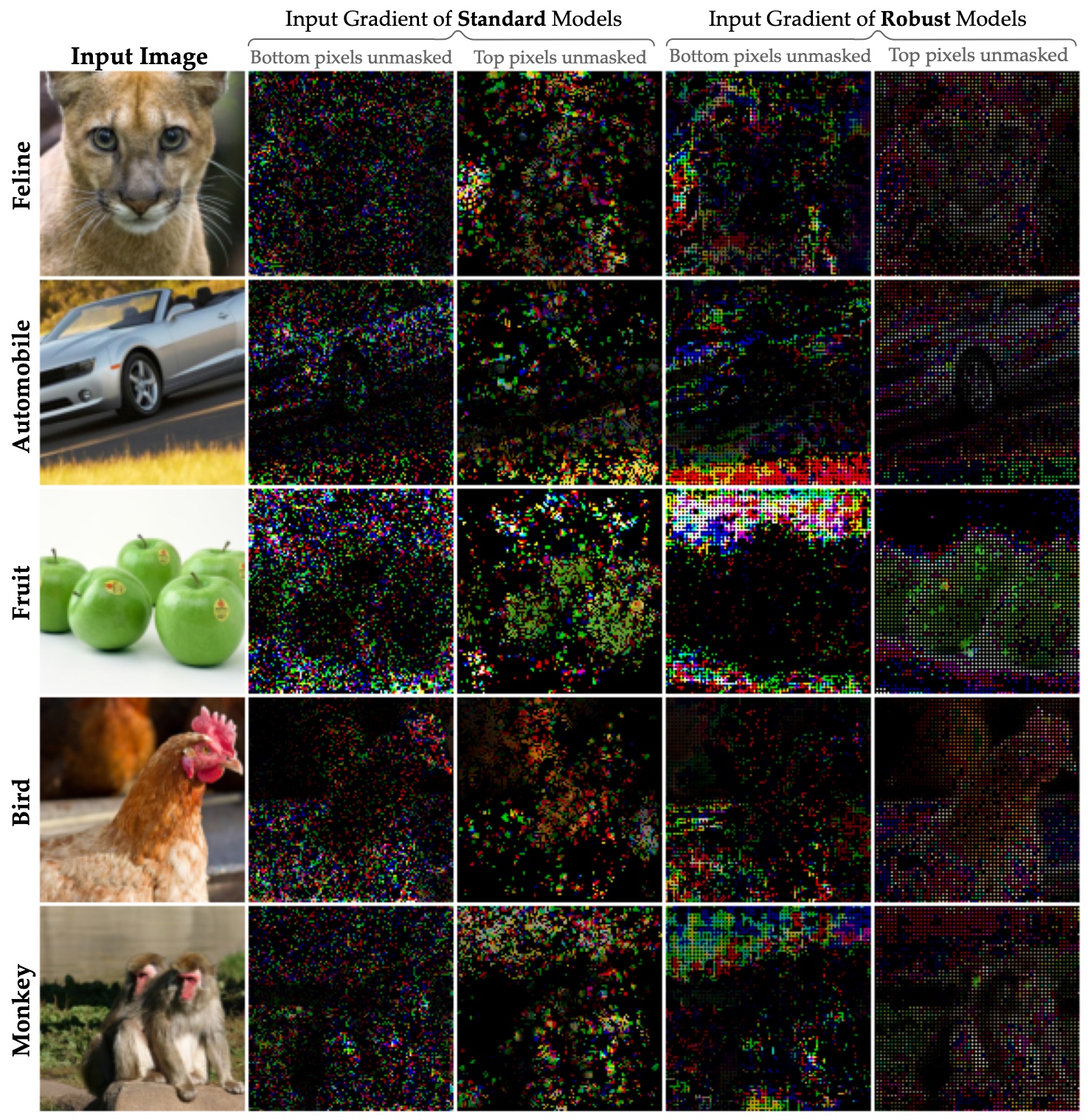}
	\caption{
		\textbf{ImageNet10 images unmasked using input gradient attributions}.
	Visualizing ImageNet-10 images that are unmasked using unmasking fraction, or level, $k=15\%$ using input gradient attributions of standard and $\ell_{\infty}$-robust Resnet18 models.
	Top-$k$ unmasked images (i.e., images in which \emph{only} ``top'' gradient attributions are unmasked) and bottom-$k$ unmasked images, attained via input gradients of standard models, share visual commonalities, suggestive of poor attribution quality. 
	Unlike bottom-$k$ unmasked images, images unmasked using top-$k$ attributions of robust models' input gradients highlight salient aspects  of images.
	See~\Cref{app:viz} for details.
	} 
 \label{fig:app_viz}
 \vspace{-5px}
\end{figure*}

Recall that in~\Cref{sec:real}, we showed that unlike input gradients of standard models, robust models consistently satisfy assumption~\premise. 
That is, input gradients of robust models highlight discriminative features, whereas input gradients of standard models tend to highlight non-discriminative features and suppress discriminative task-relevant features.
In this section, we qualitatively substantiate these findings by visualizing~\imnet~images that are unmasked using top-$k$ and bottom-$k$ input gradient attributions of standard and robust Resnet18 models. 
\emph{Please note that the following visual assessments are only meant to \emph{qualitatively} support findings made in~\Cref{sec:real} using the evaluation framework described in~\Cref{sec:prelim}.}
As discussed in~\Cref{sec:prelim}, if input gradients attain high-magnitude \evalname~score, images unmasked using top-$k$ attributions should highlight discriminative features, whereas images unmasked using bottom-$k$ should highlight non-discriminative features.

We make two observations using~\Cref{fig:app_viz} that qualitatively support our empirical findings in~\Cref{sec:real}.
First, we observe that images unmasked using top-$k$ gradient attributions of robust models tend to highlight salient aspects of images (e.g., shape of fruit or face of monkey in~\Cref{fig:app_viz}), whereas bottom-$k$ attributions often mask salient aspects of images either completely or partially.
Second, images unmasked using top-$k$ and bottom-$k$ attributions using input gradients of standard models exhibit visual commonalities, supporting the fact that for standard models, \evalname~is close to $0$ for multiple values of $k$.

\clearpage

\clearpage
\section{Additional experiments on feature leakage and \semirealname~data}
\label{appendix-semireal}


In this section, we first provide additional evidence that supports the feature leakage hypothesis in the setting used in~\Cref{sec:semireal}: \semirealname~data with \texttt{MNIST} digits $0$ and $1$ corresponding to the signal block in class $0$ and class $1$ respectively.
Then, we show that our results vis-a-vis feature leakage and \semirealname~are robust to the choice of \texttt{MNIST} digits used in the signal block as well as the number of classes in the \semirealname~classification task.
Finally, we end with a brief description of experiments that we conducted in order to test another hypothesized cause to understand why input gradients of standard models tend to violate~\premise.

\subsection{Additional analysis to demonstrate feature leakage in~\semirealname~data}
\label{app:leakage}
In this section, we provide (i) additional examples of~\semirealname~images and inputs gradients of standard and robust models, (ii) additional examples of \semirealnameone~images and input gradients, and (iii) describe a \emph{proxy} metric to measure feature leakage in \semirealname-based data.

\Cref{fig:app_blockmnist_01} shows $40$ \semirealname~images in the first row and their corresponding input gradients for standard and robust MLPs and Resnet18 models in the subsequent rows.
We observe that input gradient attributions of standard MLP and Resnet18 models consistently highlight the signal block \emph{as well as} the non-discriminative null block for all images. 
On the other hand, input gradient attributions of $\ell_2$ robust MLP and Resnet18 models exclusively highlight \texttt{MNIST} digits in the signal block and clearly suppress the square patch in the null block. 
These results further substantiate our results in~\Cref{fig:real} by showing that unlike standard models, adversarially robust models satisfy \premise~on \semirealname~data. 
\Cref{fig:app_blockmnist_top_01_topk} provides $20$ \semirealnameone~images in the first row and the corresponding input gradients of standard MLP and Resnet models in the subsequent rows. 
As shown in~\Cref{fig:app_blockmnist_01_topk}, in this setting, in contrast to results on \semirealname, input gradients of \emph{standard} Resnet18 and MLP models trained on~\semirealnameone~ satisfy assumption \premise. 

We further substantiate these findings using a \emph{proxy} metric to quantitatively measure feature leakage in \semirealname-based datasets.
As discussed in~\Cref{sec:semireal}, in the \semirealname~setting, we can restate assumption \premise~ as follows: \emph{Do input gradient attributions highlight the signal block over the null block?}
We measure the extent to which input gradients of a given trained model satisfies assumption \premise~by evaluating the fraction of top-$k$ attributions that are placed in the null block. 
In~\Cref{fig:app_blockmnist_01_topk}, we show that the fraction of top-$k$ attributions in the null block, when averaged over all images in the test dataset, is significantly greater for standard MLPs \& CNNs than for robust MLP \& CNNs. 
In~\Cref{fig:app_blockmnist_top_01_topk}, we show that input gradient attributions of standard models trained on~\semirealnameone~place significantly fewer attributions in the null block, compared to attributions of standard models trained on~\semirealname.
In both cases, the proxy metric further validates our findings vis-a-vis input gradients of standard \& robust models and feature leakage.

\begin{figure*}[b]
	\centering
	\includegraphics[width=\linewidth]{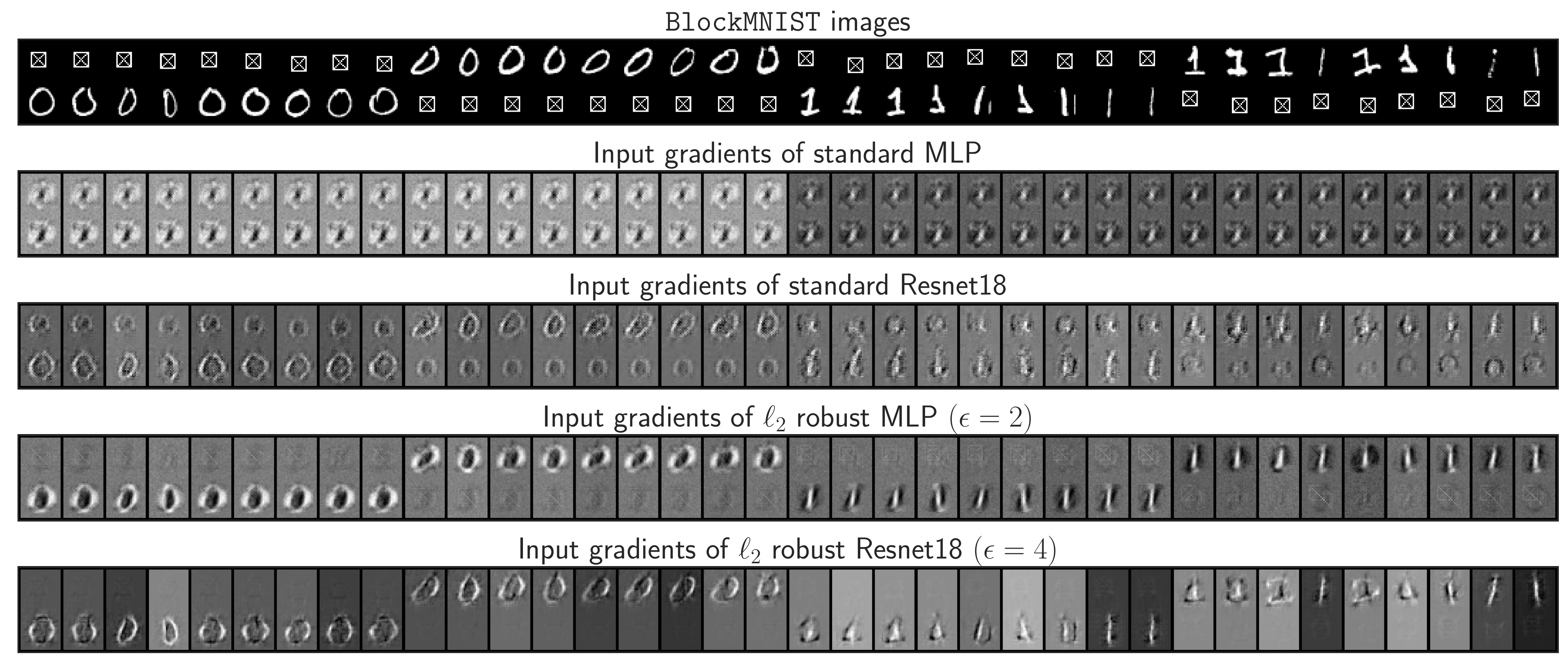}
	\vspace{-15px}
	\caption{
			\textbf{BlockMNIST 0 vs. 1}.
		$40$ \semirealname~(\texttt{MNIST} $0$ vs. $1$) images and their corresponding input gradients. 
		Recall that every image consists of a \emph{signal} and \emph{null} block, each randomly placed at the \emph{top} or \emph{bottom}. The \emph{signal} block, containing the \texttt{MNIST} digit $0$ or $1$, determines the image class, $0$ or $1$. 
		The \emph{null} block, containing the square patch, does not encode any information of the image class.
		The second, third, and fourth rows show input gradients of standard Resnet18, standard MLP, $\ell_2$ robust Resnet18 $(\epsilon=2)$ and $\ell_2$ robust MLP $(\epsilon=4)$ respectively.
		The plots clearly show that input gradients of standard \semirealname~models incorrectly highlight \emph{the non-discriminative null block} as well, thereby violating \premise.
		In contrast, input gradients of robust models highlight the signal block, suppress the null block, and satisfy \premise. 
		See~\Cref{app:leakage} for details.
	} 
 \label{fig:app_blockmnist_01}
 \vspace{-5px}
\end{figure*}

\begin{figure*}[b]
	\centering
	\includegraphics[width=\linewidth]{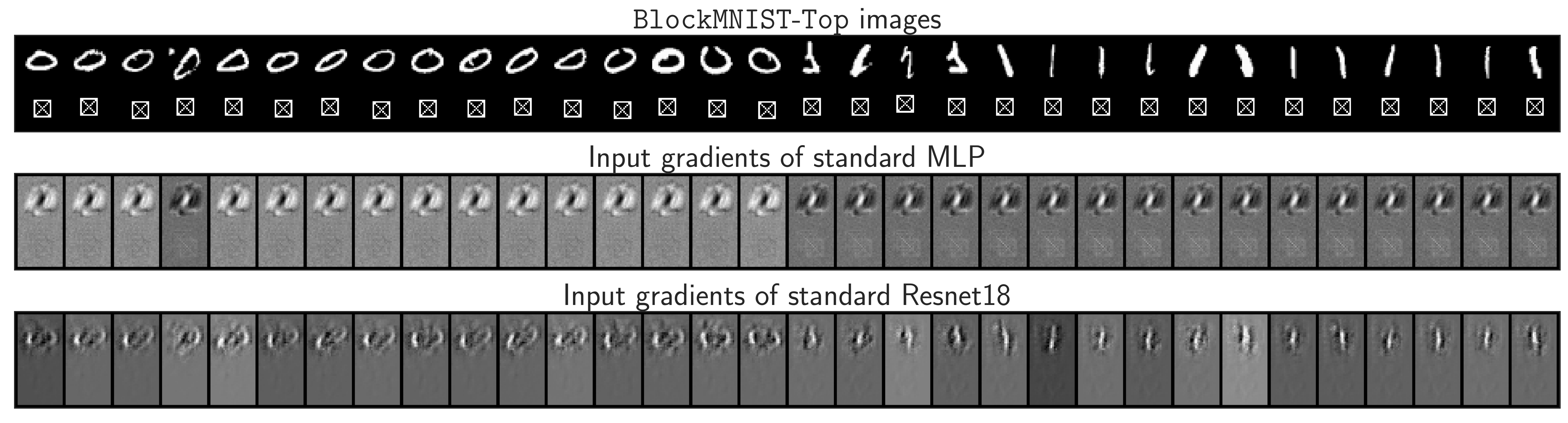}
	\vspace{-15px}
	\caption{
		\textbf{BlockMNIST-Top 0 vs. 1}.
		$20$ \semirealnameone~(\texttt{MNIST} $0$ vs. $1$) images and input gradients of standard MLP and Resnet18 models.
		As shown in the first row, 	the signal \& null blocks are fixed at the top \& bottom respectively in \semirealnameone~images.
		In contrast to results on \semirealname~in~\cref{fig:semireal_robust}, input gradients of standard models trained on~\semirealnameone~highlight the signal block, suppress the null block, and satisfy \premise. 
		Please see~\Cref{app:leakage} for details.
		} 
 \label{fig:app_blockmnist_top_01}
 \vspace{-5px}
\end{figure*}

\begin{figure*}[t]
	\centering
	\includegraphics[width=\linewidth]{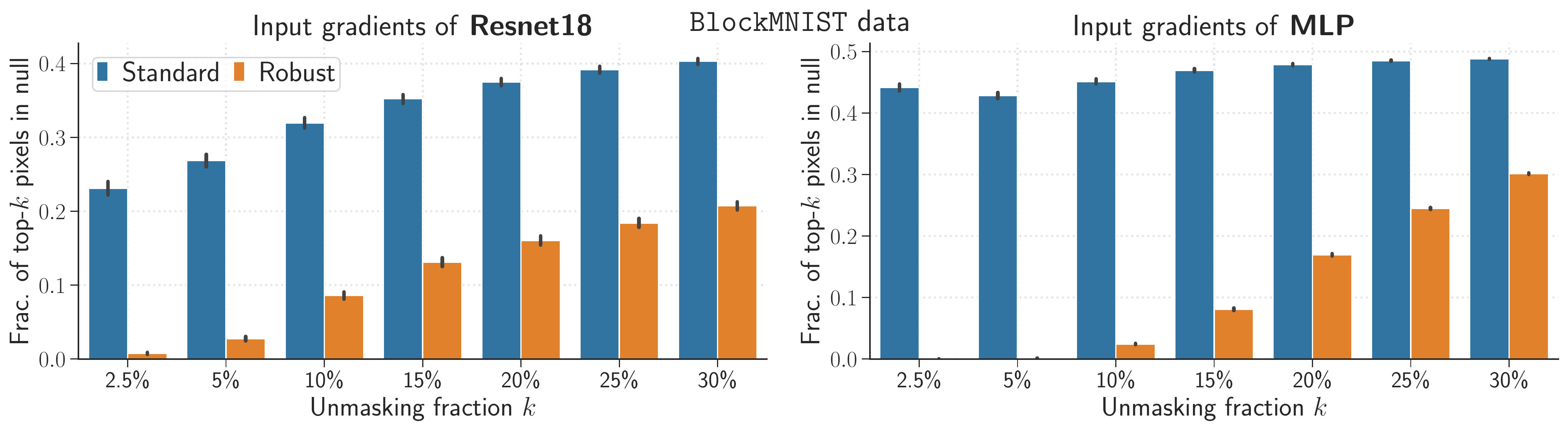}
	\vspace{-15px}
	\caption{
		\textbf{Proxy metric to compare input gradients of standard and robust models trained on \semirealname~($0$ vs. $1$) data}. 
The proxy metric measures the fraction of top-$k$ attributions that are placed in the null block of images in the test dataset. 
The left and right subplots evaluate this metric on input gradient attributions of standard and robust Resnet18 models and MLPs respectively. 
Compared to input gradients of standard models, adversarially trained models place significantly fewer top-$k$ attributions in the null block for multiple values of unmasking fraction $k$. Details in~\Cref{app:leakage}.
	} 
 \label{fig:app_blockmnist_01_topk}
 \vspace{-5px}
\end{figure*}

\begin{figure*}[t]
	\centering
	\includegraphics[width=\linewidth]{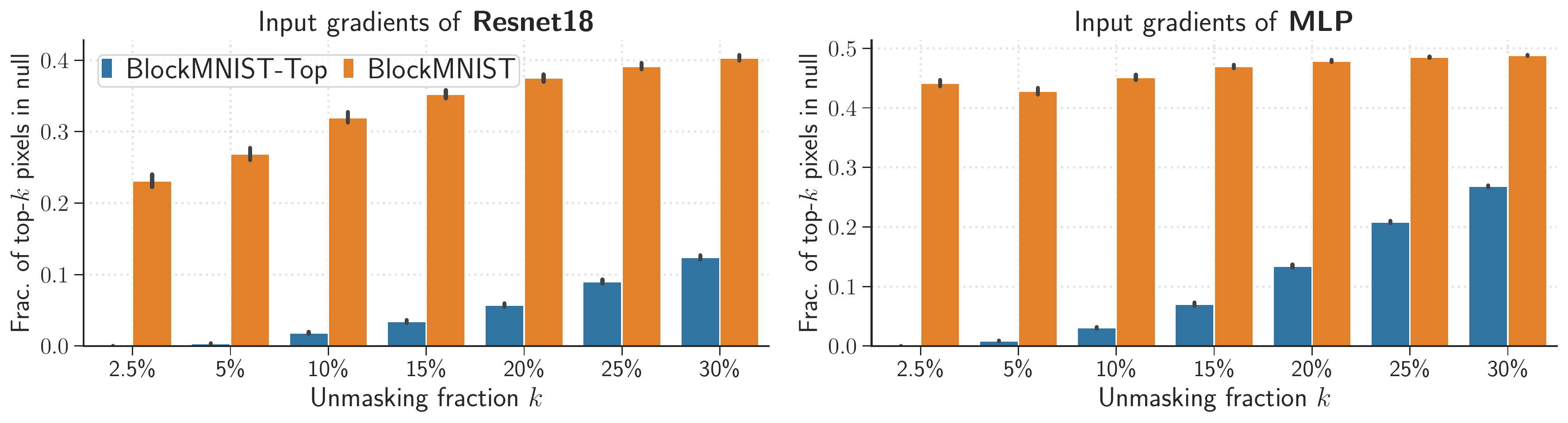}
	\vspace{-15px}
	\caption{
		\textbf{Proxy metric to compare input gradients of standard models trained on \semirealname~and \semirealnameone~($0$ vs. $1$) data}. The proxy metric measures the fraction of top-$k$ attributions that are placed in the null block of images. The left and right subplots evaluate this metric on input gradient attributions of standard Resnet18 models and MLPs trained on \semirealname~and \semirealnameone~data respectively.  Compared to input gradients of models trained on~\semirealname, standard models trained on~\semirealnameone~place significantly fewer top-$k$ attributions in the null block for multiple values of unmasking fraction $k$. Details in~\Cref{app:leakage}.
	} 
 \label{fig:app_blockmnist_top_01_topk}
 \vspace{-5px}
\end{figure*}

\subsection{Effect of choice and number of classes in~\semirealname~data}
\label{app:bm_rob}
In this section, we show that our analysis on~\semirealname-based datasets in~\Cref{sec:semireal} is robust to the choice and number of classes in~\semirealname~data.
In particular, we reproduce our empirical findings vis-a-vis feature leakage and input gradient attributions of standard vs. robust models on three additional~\semirealname-based tasks.
In~\Cref{fig:app_blockmnist_24} and~\Cref{fig:app_blockmnist_top_24}, we evaluate input gradients of standard and robust models trained on~\semirealname~and \semirealnameone~data, wherein the \texttt{MNIST} digits in class $0$ and class $1$ correspond to digits $2$ and $4$ (in the signal block) respectively. 
Similarly, in~\Cref{fig:app_blockmnist_73} and~\Cref{fig:app_blockmnist_top_73}, we reproduce our empirical findings from~\Cref{sec:semireal} on~\semirealname~and \semirealnameone~data in which the \texttt{MNIST} digits in class $0$ and class $1$ correspond to digits $3$ and $7$ (in the signal block) respectively. 
In~\Cref{fig:app_blockmnist_multiclass} and~\Cref{fig:app_blockmnist_top_multiclass}, we show that (i) input gradients of standard models violate assumption~\premise due to feature leakage and (ii) adversarial training mitigates feature leakage on $10$-class \semirealname~and \semirealnameone~data, wherein each class $i \{0,\ldots,9\}$ corresponds to \texttt{MNIST} digit $i$ in the signal block.

\begin{figure*}[t]
	\centering
	\includegraphics[width=\linewidth]{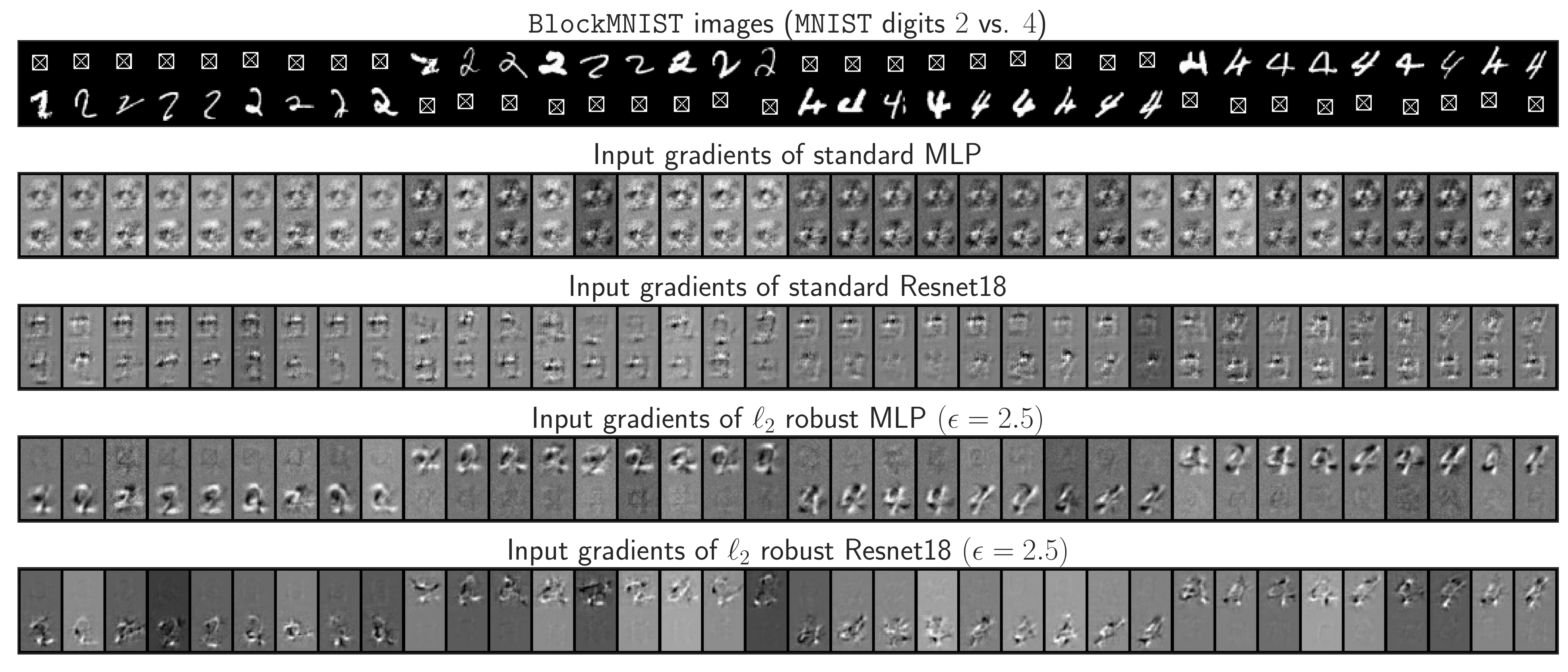}
	\vspace{-15px}
	\caption{
			\textbf{BlockMNIST 2 vs. 4}.
		$40$ \semirealname~(\texttt{MNIST} $2$ vs. $4$) images and their corresponding input gradients. 
		The \emph{signal} block, containing the \texttt{MNIST} digit $2$ or $4$, determines the image class, $0$ or $1$. 
		The second, third, and fourth rows show input gradients of standard Resnet18, standard MLP, $\ell_2$ robust Resnet18 $(\epsilon=2.5)$ and $\ell_2$ robust MLP $(\epsilon=2.5)$ respectively.
		The plots clearly show that input gradients of standard \semirealname~models incorrectly highlight \emph{the non-discriminative null block} as well, thereby violating \premise.
		In contrast, input gradients of robust models highlight the signal block, suppress the null block, and satisfy \premise. 
		See~\Cref{app:leakage} for details.
	} 
 \label{fig:app_blockmnist_24}
 \vspace{-5px}
\end{figure*}

\begin{figure*}[t]
	\centering
	\includegraphics[width=\linewidth]{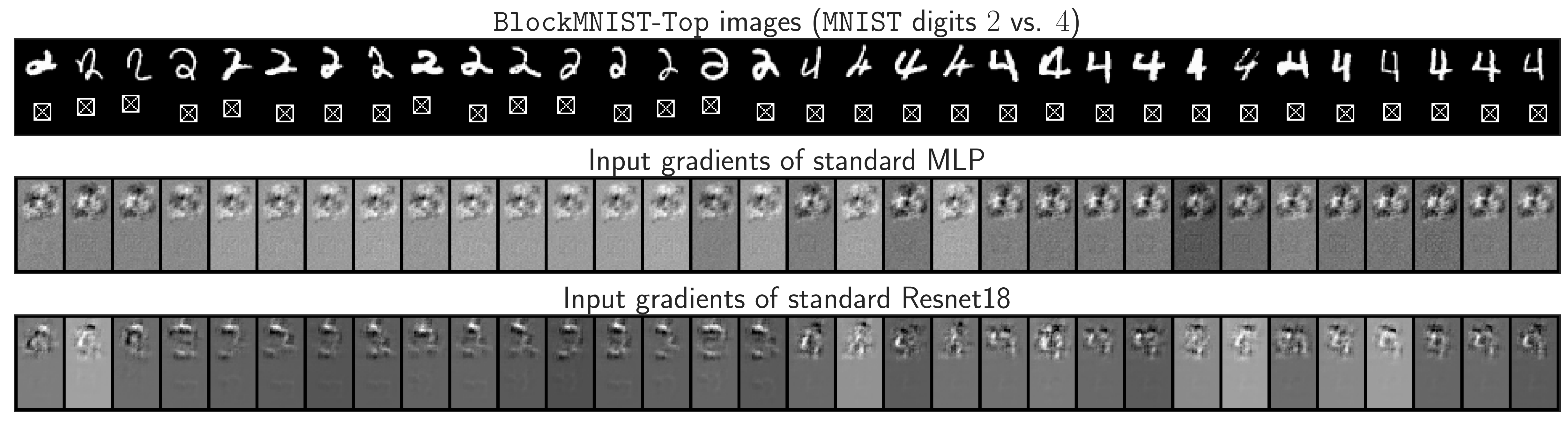}
	\vspace{-15px}
	\caption{
		\textbf{BlockMNIST-Top 2 vs. 4}.
		$20$ \semirealnameone~(\texttt{MNIST} $2$ vs. $4$) images and corresponding input gradients of standard MLP and Resnet18 models.
		As shown in the first row, 	the signal \& null blocks are fixed at the top \& bottom respectively in \semirealnameone~images.
		In contrast to results on \semirealname~in~\cref{fig:semireal_robust}, input gradients of standard models trained on~\semirealnameone~highlight the signal block, suppress the null block, and satisfy \premise. 
		Please see~\Cref{app:leakage} for details.
	} 
 \label{fig:app_blockmnist_top_24}
 \vspace{-5px}
\end{figure*}

\begin{figure*}[t]
	\centering
	\includegraphics[width=\linewidth]{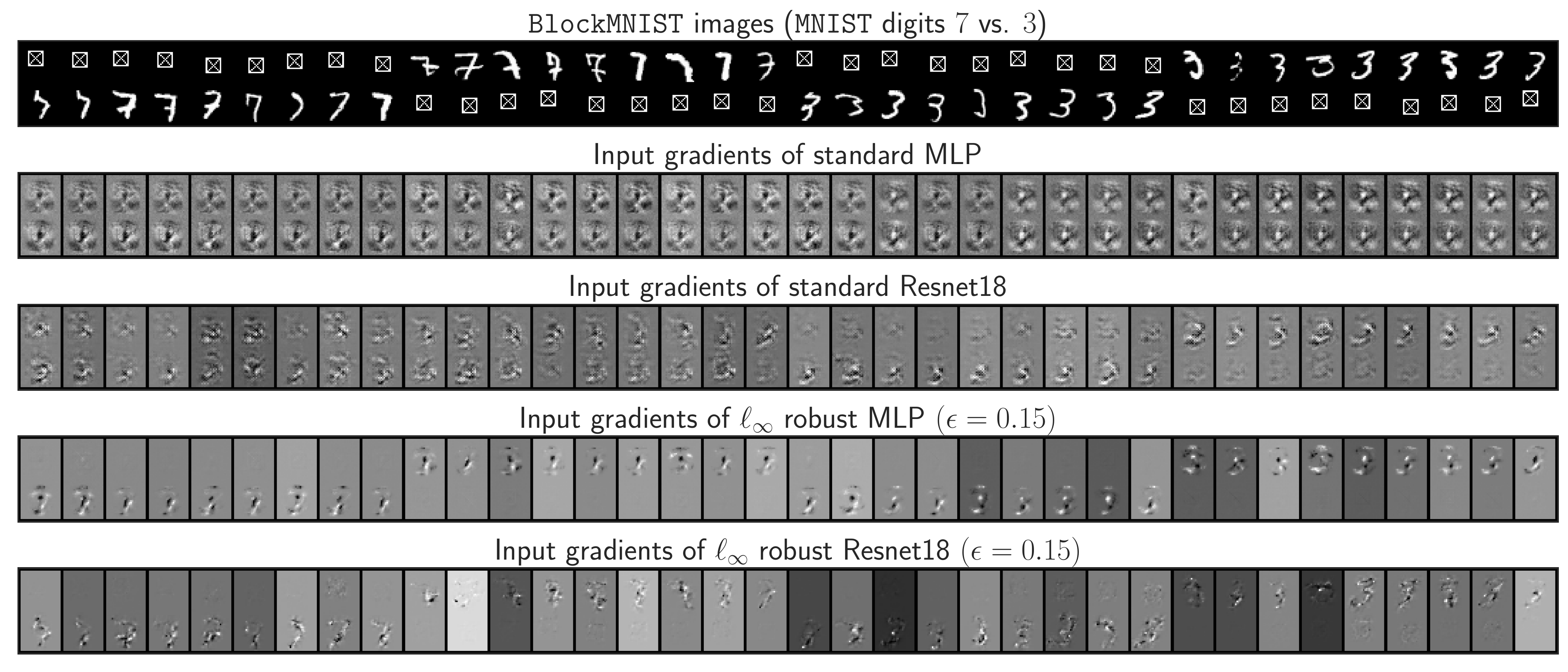}
	\vspace{-15px}
	\caption{
			\textbf{BlockMNIST 3 vs. 7}.
		$40$ \semirealname~(\texttt{MNIST} $3$ vs. $7$) images and their corresponding input gradients. 
		The \emph{signal} block, containing the \texttt{MNIST} digit $3$ or $7$, determines the image class, $0$ or $1$. 
		The second, third, and fourth rows show input gradients of standard Resnet18, standard MLP, $\ell_{\infty}$ robust Resnet18 $(\epsilon=0.15)$ and $\ell_{\infty}$ robust MLP $(\epsilon=0.15)$ respectively.
		The plots clearly show that input gradients of standard \semirealname~models incorrectly highlight \emph{the non-discriminative null block} as well, thereby violating \premise.
		In contrast, input gradients of robust models highlight the signal block, suppress the null block, and satisfy \premise. 
		See~\Cref{app:leakage} for details.
	} 
 \label{fig:app_blockmnist_73}
 \vspace{-5px}
\end{figure*}

\begin{figure*}[t]
	\centering
	\includegraphics[width=\linewidth]{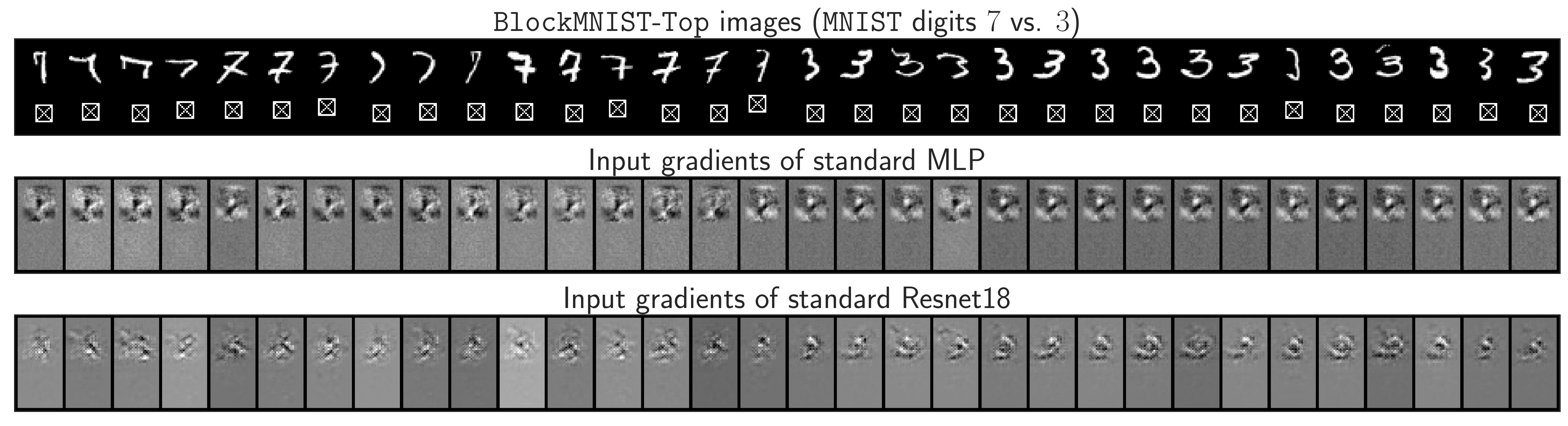}
	\vspace{-15px}
	\caption{
		\textbf{BlockMNIST-Top 3 vs. 7}.
		$20$ \semirealnameone~(\texttt{MNIST} $3$ vs. $7$) images and input gradients of standard MLP and Resnet18 models.
		As shown in the first row, 	the signal \& null blocks are fixed at the top \& bottom respectively in \semirealnameone~images.
		In contrast to results on \semirealname~in~\cref{fig:semireal_robust}, input gradients of standard models trained on~\semirealnameone~highlight the signal block, suppress the null block, and satisfy \premise. 
		Please see~\Cref{app:leakage} for details.
	} 
 \label{fig:app_blockmnist_top_73}
 \vspace{-5px}
\end{figure*}
		
\begin{figure*}[t]
	\centering
	\includegraphics[width=\linewidth]{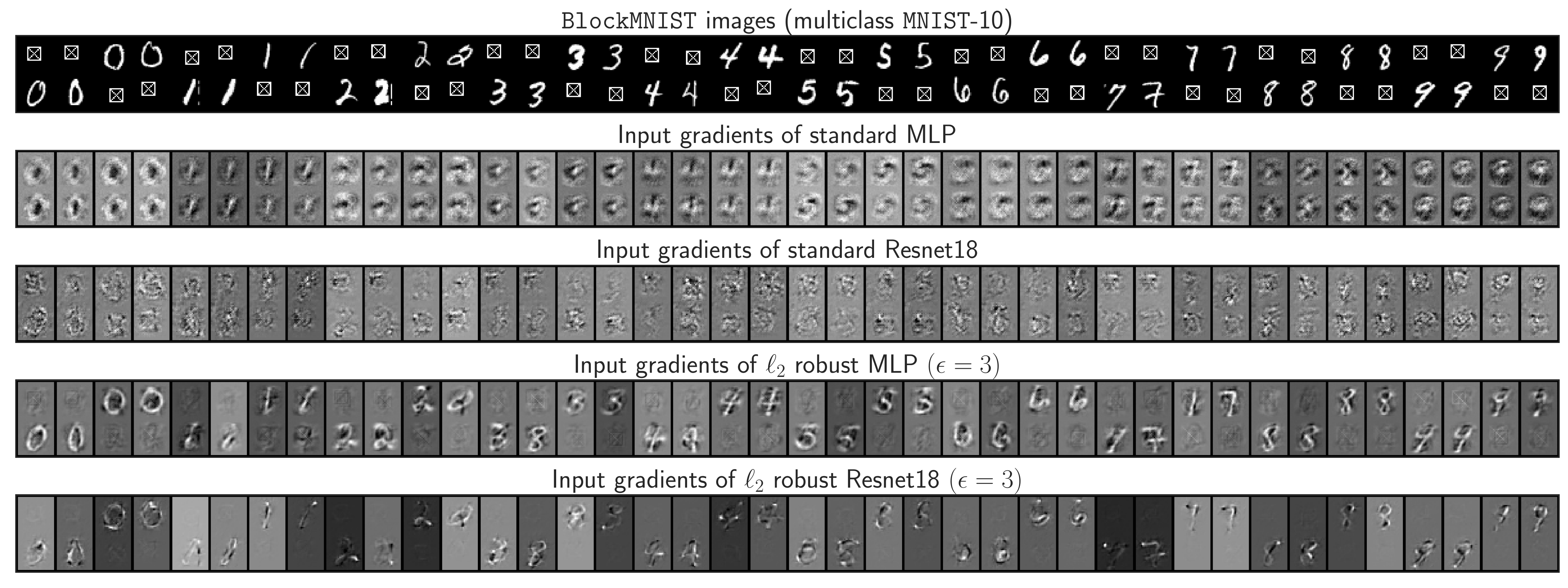}
	\vspace{-15px}
	\caption{
	\textbf{Multiclass BlockMNIST}.
	$40$ \semirealname~(all \texttt{MNIST} classes) images and their corresponding input gradients. 
	dataset. 
	In this setting, the \emph{signal} block, containing an \texttt{MNIST} digit sampled from a class chosen uniformly at random, determines the image class $y \in \{0,\ldots,9\}$. 
	The second, third, and fourth rows show input gradients of standard Resnet18, standard MLP, $\ell_2$ robust Resnet18  $(\epsilon=3)$ and $\ell_2$ robust MLP $(\epsilon=3)$ respectively.
	The plots clearly show that input gradients of standard \semirealname~models incorrectly highlight \emph{the non-discriminative null block} as well, thereby violating \premise.
	In contrast, input gradients of robust models highlight the signal block, suppress the null block, and satisfy \premise. 
	See~\Cref{app:leakage} for details.
	} 
 \label{fig:app_blockmnist_multiclass}
 \vspace{-5px}
\end{figure*}

\begin{figure*}[t]
	\centering
	\includegraphics[width=\linewidth]{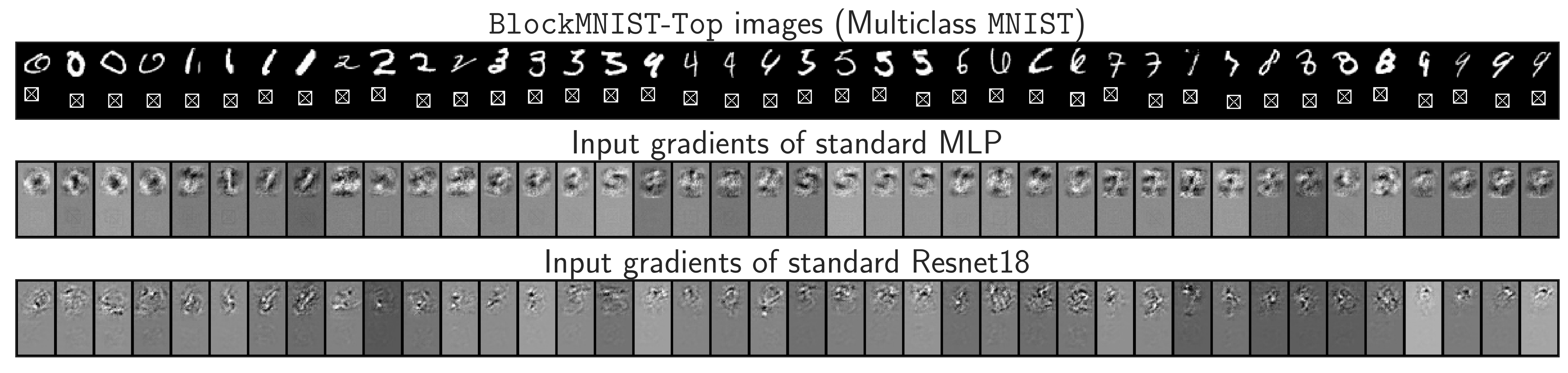}
	\vspace{-15px}
	\caption{
		\textbf{Multiclass BlockMNIST-Top}.
		$40$ \semirealnameone~(all \texttt{MNIST} classes) images and corresponding input gradients of standard MLP and Resnet18 models.
		As shown in the first row, 	the signal \& null blocks are fixed at the top \& bottom respectively in \semirealnameone~images.
		In contrast to results on \semirealname~in~\cref{fig:semireal_robust}, input gradients of standard models trained on~\semirealnameone~highlight the signal block, suppress the null block, and satisfy \premise. 
		Please see~\Cref{app:leakage} for details.
	} 
 \label{fig:app_blockmnist_top_multiclass}
 \vspace{-5px}
\end{figure*}

\subsection{Does randomness in initialization explain why input gradients violate \premise?}
In this section, \emph{we investigate whether the poor quality of input gradients in standard models is due to randomness retained from the initialization}. Figure~\ref{fig:app_init} shows scatter plots of input gradient values over all pixels in all images before (x-axis) and after (y-axis) standard training on four image classification benchmarks. The results indicate that (i) the scale of gradients after training is at least an order of magnitude larger than those before training and (ii) the gradient values before and after training are uncorrelated. Together, these results suggest that random initialization does not have much of a role in determining the input gradients after training.

\begin{figure*}[t]
	\centering
	\includegraphics[width=\linewidth]{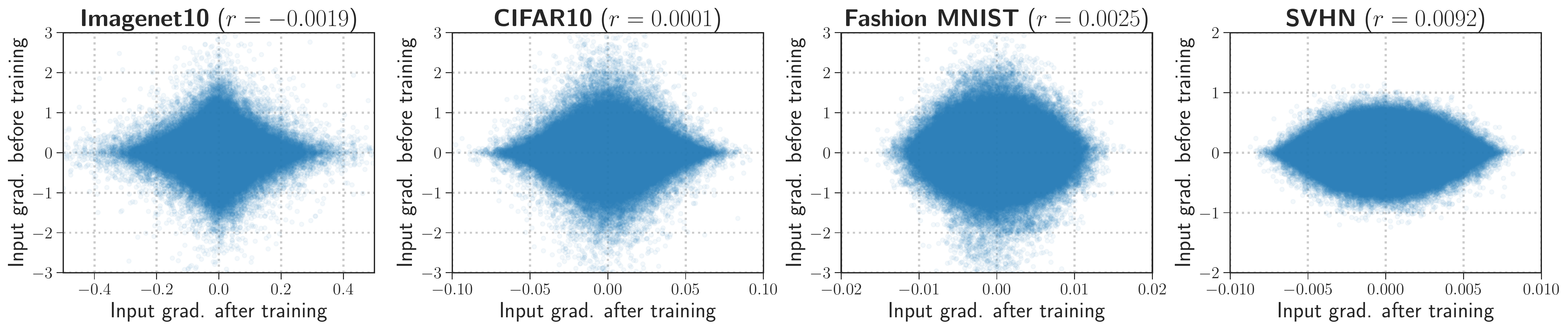}
	\vspace{-15px}
	\caption{\textbf{Does random initialization affect input gradients after training?} The scatter plots above show the gradient values at the beginning of training and after end of training on y-axis and x-axis respectively. We can see that the scale of gradients is much larger at the end of training compared to that at the beginning of training and both of them are uncorrelated. This suggests that the poor quality of input gradients of standard trained models is a result of the training process, and not because of random initialization.
	} 
 \label{fig:app_init}
 \vspace{-5px}
\end{figure*}

\subsection{Do other feature attribution methods exhibit feature leakage?}
\label{app:other_methods}
In this section, we evaluate feature leakage in five feature attribution methods: Integrated Gradients~\cite{sundararajan2016gradients}, Layer-wise Relevance Propagation (LRP)~\cite{10.1371/journal.pone.0130140}, Guided Backprop~\cite{springenberg2014striving}, Smoothgrad~\cite{smilkov2017smoothgrad} (with standard deviation $\sigma \in \{0.1, 0.3, 0.5\}$), and Occlusion~\cite{zeiler2013visualizing} (with patch size $\rho \in \{5, 10\}$).
First, we evaluate the aforementioned feature attribution methods on standard models trained on \semirealname~data.
As shown in~\Cref{fig:blockmnist_pval0.5_std-mlp} and~\Cref{fig:blockmnist_pval0.5_std-res18}, in addition to vanilla input gradients, all five feature attribution methods evaluated on standard MLPs and Resnet18 models highlight the \texttt{MNIST} signal block as well as the null block.
Conversely,~\Cref{fig:blockmnist_pval1.0_std-mlp} and~\Cref{fig:blockmnist_pval1.0_std-res18} show that when standard MLPs and Resnet18 models are trained on \semirealnameone~data, all feature attribution methods exclusively highlight the \texttt{MNIST} signal block.
These results collectively indicate that similar to vanilla input gradient attributions, multiple feature attribution methods exhibit feature leakage.
Furthermore, consistent with our findings on adversarial robustness vis-a-vis feature leakage,~\Cref{fig:blockmnist_pval0.5_rob-mlp} and~\Cref{fig:blockmnist_pval0.5_rob-res18} show that feature attribution method evaluated on adversarially robust MLPs and Resnet18 model do not exhibit feature leakage on~\semirealname~data.

\begin{figure*}
	\centering 
	\includegraphics[width=\linewidth]{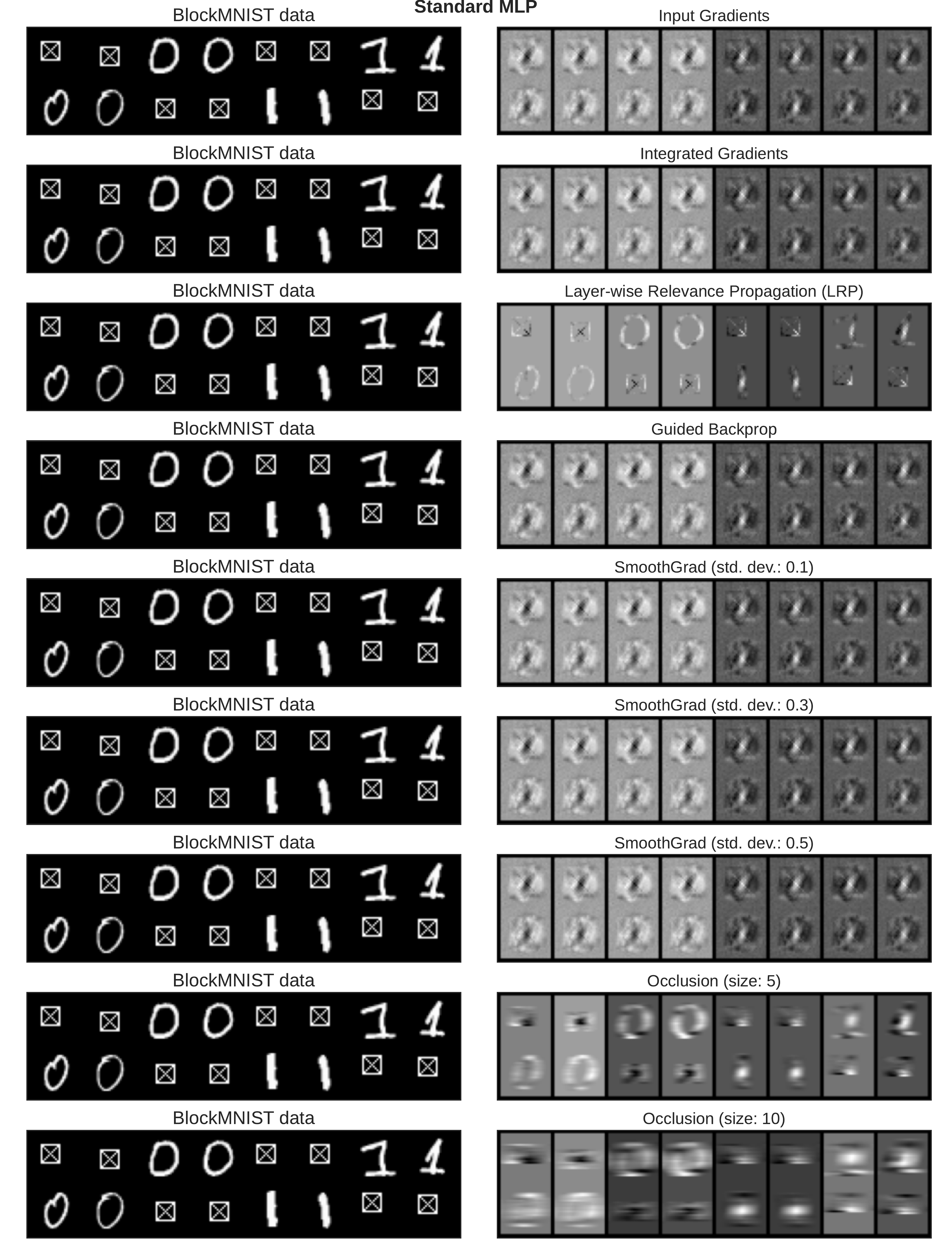}
	\vspace{-15px}
	\caption{Multiple instance-specific feature attribution methods evaluated using a standard two-layer MLP trained on~\semirealname~data. All feature attribution methods exhibit feature leakage, as the attributions highlight the non-predictive null block in addition to the \texttt{MNIST} signal block.}
	\label{fig:blockmnist_pval0.5_std-mlp}
\end{figure*}

\begin{figure*}
	\centering 
	\includegraphics[width=\linewidth]{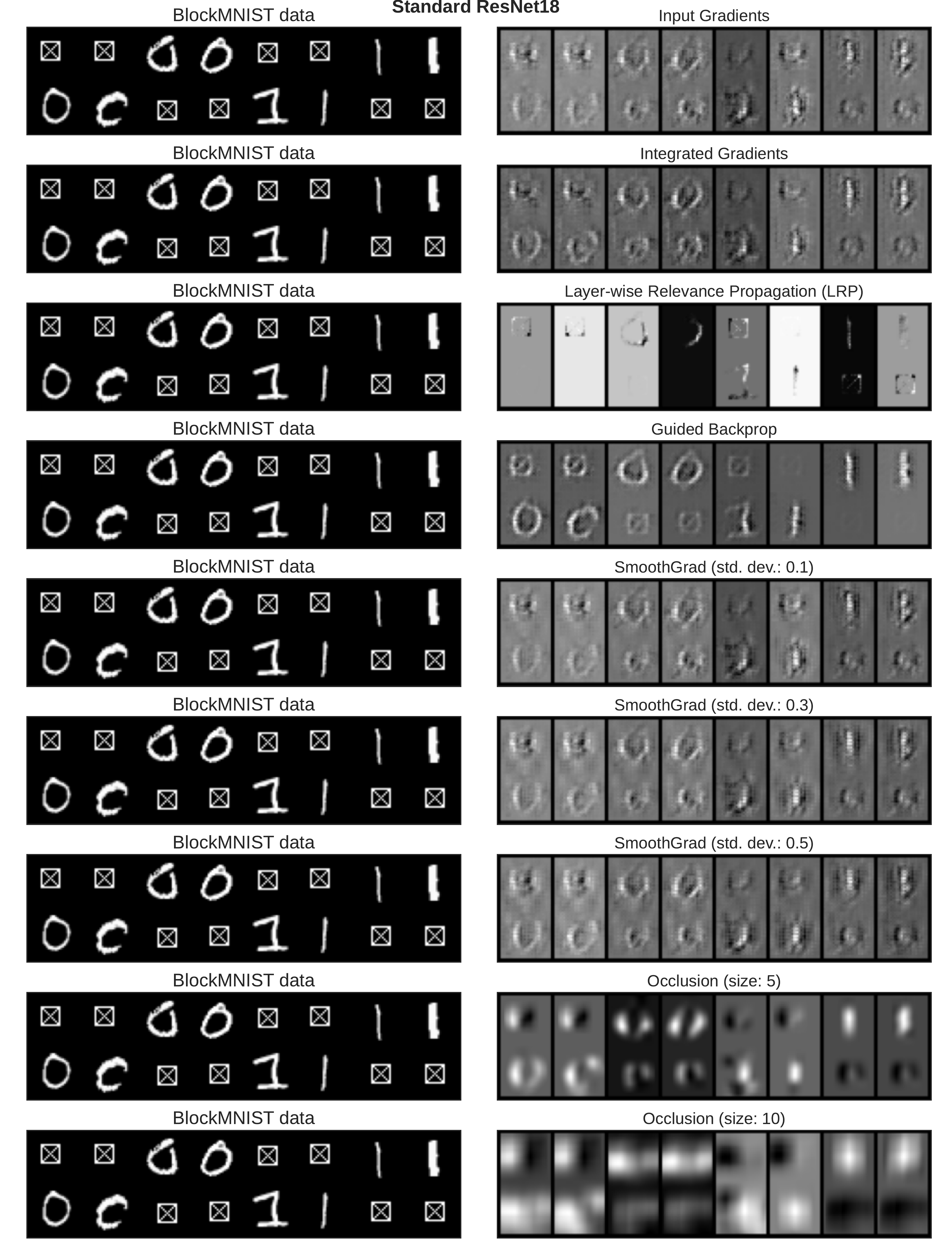}
	\vspace{-15px}
	\caption{Multiple instance-specific feature attribution methods evaluated using a standard ResNet18 trained on~\semirealname~data. All feature attribution methods exhibit feature leakage, as the attributions highlight the non-predictive null block in addition to the \texttt{MNIST} signal block. Surprisingly, in some cases, \texttt{LRP} (third row) exclusively highlights the null block.}
	\label{fig:blockmnist_pval0.5_std-res18}
\end{figure*}

\begin{figure*}
	\centering 
	\includegraphics[width=\linewidth]{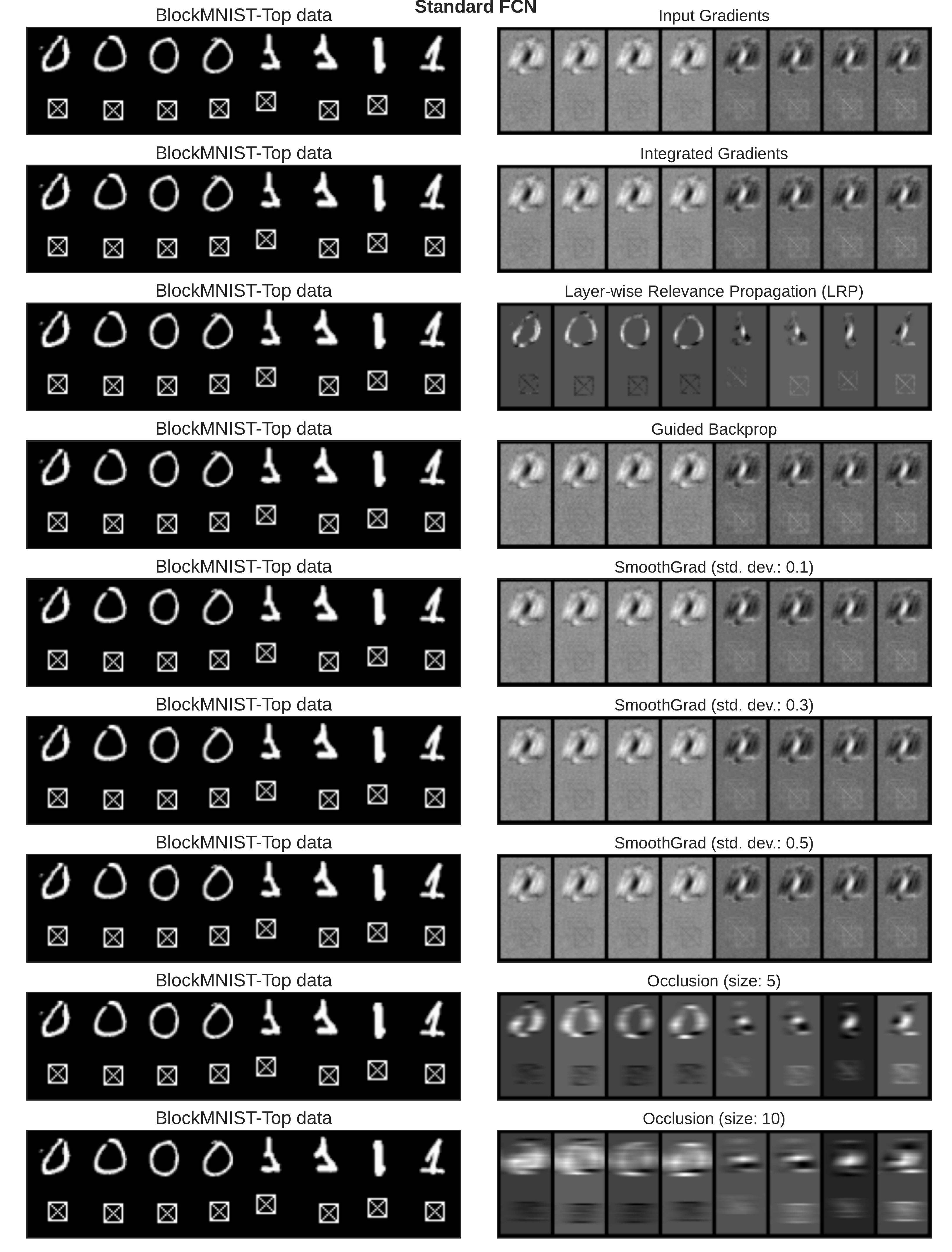}
	\vspace{-15px}
	\caption{Multiple instance-specific feature attribution methods evaluated using a standard two-layer MLP trained on~\semirealnameone~images, in which the \texttt{MNIST} signal block is fixed at the top. 
		On this dataset, feature attributions of all five methods highlight discriminative features in the signal block, suppress the null block, and satisfy \premise. 	
	}
	\label{fig:blockmnist_pval1.0_std-mlp}
\end{figure*}

\begin{figure*}
	\centering 
	\includegraphics[width=\linewidth]{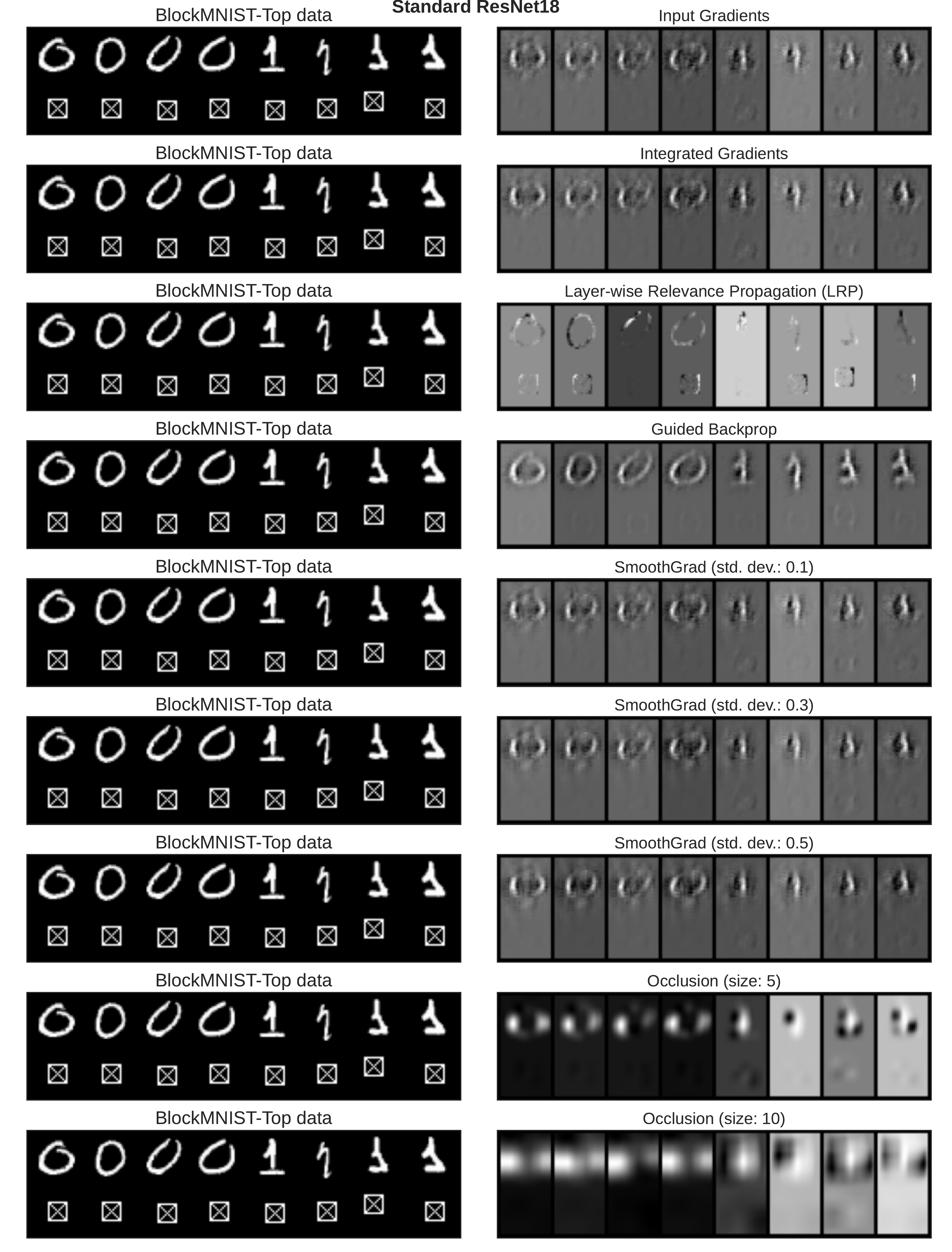}
	\vspace{-15px}
	\caption{Multiple instance-specific feature attribution methods evaluated using a standard ResNet18 trained on~\semirealnameone~data, in which the \texttt{MNIST} signal block is fixed at the top. 
		On this dataset, feature attributions of all five methods highlight discriminative features in the signal block, suppress the null block, and satisfy \premise. 	
		Surprisingly, \texttt{LRP} attributions (third row) highlight the null patch of \semirealnameone~images as well.}
	\label{fig:blockmnist_pval1.0_std-res18}
\end{figure*}

\begin{figure*}
	\centering 
	\includegraphics[width=\linewidth]{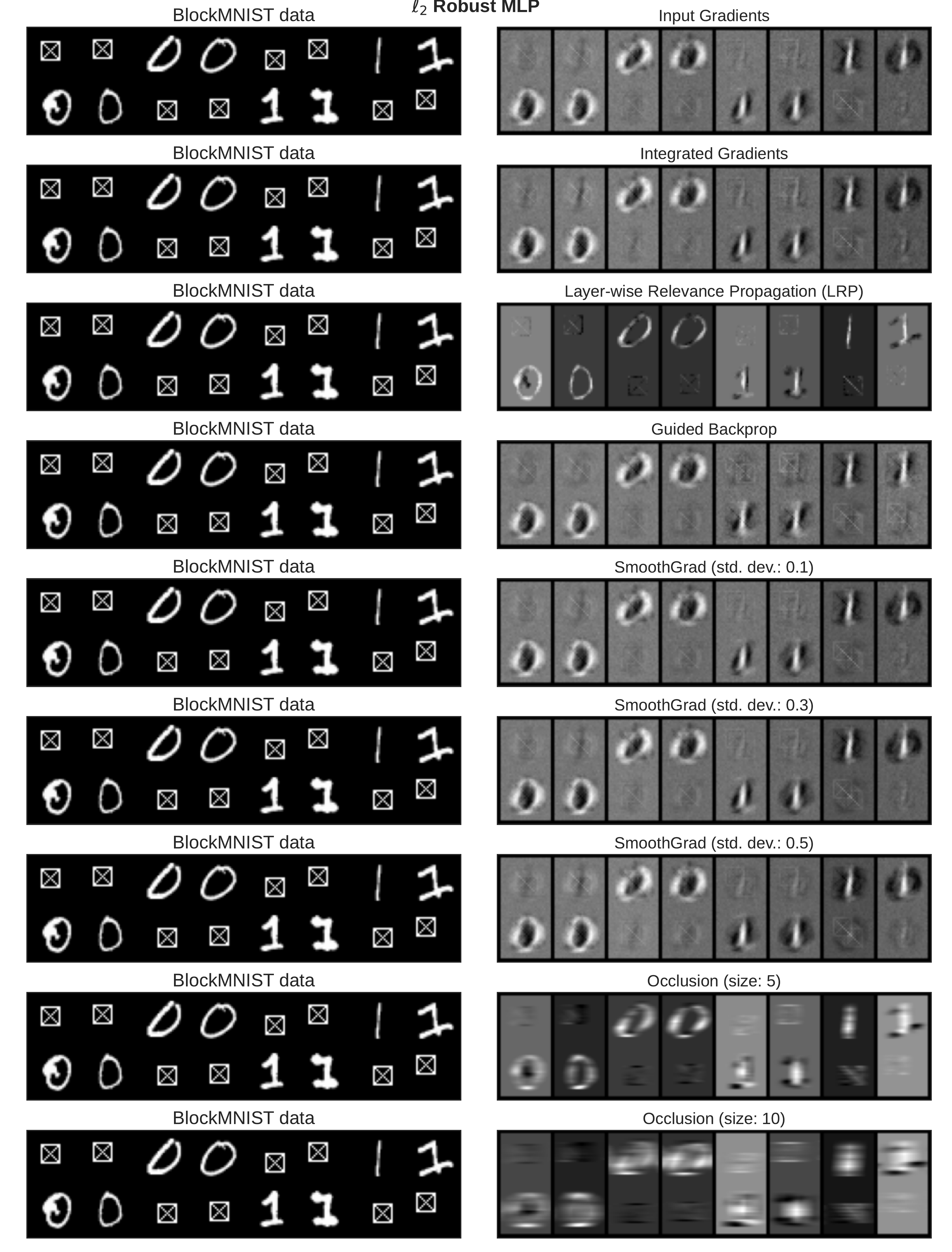}
	\vspace{-15px}
	\caption{Multiple instance-specific feature attribution methods evaluated using a $\ell_2$ robust two-layer MLP trained on~\semirealname~data. Consistent with our findings on adversarial robustness vis-a-vis feature leakage, feature attributions of all methods of robust models do not exhibit feature leakage.}
	\label{fig:blockmnist_pval0.5_rob-mlp}
\end{figure*}

\begin{figure*}
	\centering 
	\includegraphics[width=\linewidth]{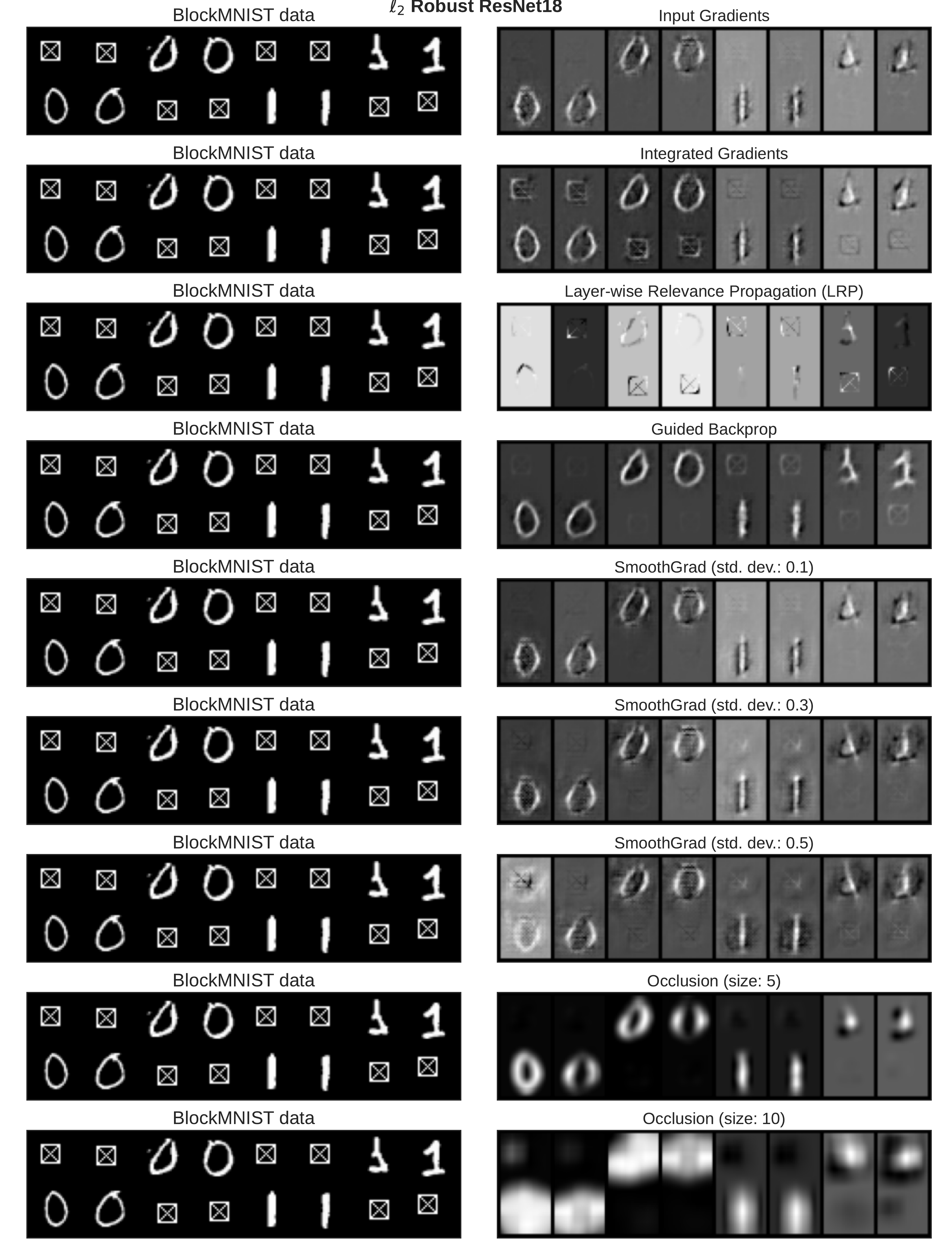}
	\vspace{-15px}
	\caption{Multiple instance-specific feature attribution methods evaluated using a $\ell_2$ robust ResNet18 trained on~\semirealname~data. Consistent with our findings on adversarial robustness vis-a-vis feature leakage, feature attributions of all methods (except \texttt{LRP}) of robust models do not exhibit feature leakage.}
	\label{fig:blockmnist_pval0.5_rob-res18}
\end{figure*}

%

\clearpage

\section{Proof of Theorem~\ref{thm:main}}\label{app:theory}
We first begin with the definition of a function, $\psi: \R^2 \rightarrow \R$ which will prove useful in the analysis:
\begin{align}
	\psi(a,b) &\defeq \phi(a+b) - \phi(-a+b) = \left\{
	\begin{array}{cc}
		a - b & \mbox{ if } a \leq -\abs{b} \\
		0 & \mbox{ if } b \leq 0, \; \abs{a} \leq \abs{b} \\
		2a & \mbox{ if } b \geq 0, \; \abs{a} \leq \abs{b} \\
		a + b & \mbox{ if } a \geq \abs{b} \\
	\end{array},\right.\label{eqn:psi}
\end{align}
where we recall that $\phi(a) = \max(a,0)$ is the ReLU nonlinearity.
\begin{proof}[Proof of Theorem~\ref{thm:main} in the rich regime]
	We first claim that the max-margin classifier~\eqref{eqn:F1-max} is given by $\nu^* = \frac{1}{2}\delta_{\theta_1^*} + \frac{1}{2}\delta_{\theta_2^*}$, where $\theta_1 \defeq \left(\frac{1}{\sqrt{2}}, \frac{1}{\sqrt{2((d/2)+(1-(\eps d/2)^2)}}z, \frac{(1-(\eps d/2))}{\sqrt{2((d/2)+(1-(\eps d/2))^2)}}\right)$ and $\theta_2 \defeq \left(\frac{-1}{\sqrt{2}}, \frac{-1}{\sqrt{2((d/2)+(1-(\eps d/2))^2)}}z, \frac{(1-(\eps d/2))}{\sqrt{2((d/2)+(1-(\eps d/2))^2)}}\right)$ with $z \in \R^{\dtilde \cdot d}$ denoting the concatenation of $d/2$ copies of $u^* \in \R^{\dtilde}$ and $d/2$ copies of $0$ vectors of dimension $\dtilde$ each. To do so, we use~\citep[Proposition~12]{chizat2020implicit} which requires us to verify that there exists a probability distribution $p^*$ over the training data points such that:
	\begin{align}
		\textrm{Support}(\nu^*) &\in \argmax_{(w,r,b) \in \Sddtilde} \Eps{y \cdot \left({w \phi\left(\iprod{r}{x}+b\right)}\right)}, \mbox{ and } \label{eqn:supp}\\	
		\textrm{Support}(p^*) &\in \argmin_{(x,y) \in \D} y \cdot \Enus{w \phi\left(\iprod{r}{x}+b\right)}, \label{eqn:supp-p}
	\end{align}
	where $\Sddtilde$ denotes the unit sphere in $\R^{d\dtilde+2}$. In order to verify this condition, we use $p^* = \frac{1}{d} \sum_{\stackrel{j \in [d/2]}{y \in \set{\pm 1}}} \delta_{(y\utilde_j,y)}$, where $\utilde_j \in \R^{\dtilde \cdot d}$ is defined as the concatenation of $d$ vectors, each in $\R^{\dtilde}$, with the $j^{\textrm{th}}$ one being $u^*$, the remaining $[d/2]\setminus\set{j}$ being $-\eps u^*$ and the last $[d/2]$ being all zero vectors.
	
	We first prove~\eqref{eqn:supp-p}. Consider the point $(\uhat,y=1)$ in the support of the training distribution with $\uhat = (u^*+\eps g_1, \eps g_2,\cdots,\eps g_{d/2}, 0, \cdots,0)$. We see that:
	\begin{align*}
		& y \cdot \Enus{w \phi\left(\iprod{r}{\uhat}+b\right)} \\
		&= \frac{1}{2} \cdot \frac{1}{\sqrt{2}}\left( \phi\left(\frac{\iprod{z}{\uhat}}{\sqrt{2\left((d/2)+(1-(\eps d/2))^2\right)}} + \frac{1-(\eps d/2)}{\sqrt{2\left((d/2)+(1-(\eps d/2))^2\right)}}\right) \right. \\ &\qquad \qquad \left. - \phi\left(\frac{-\iprod{z}{\uhat}}{\sqrt{2\left((d/2)+(1-(\eps d/2))^2\right)}} + \frac{1-(\eps d/2)}{\sqrt{2\left((d/2)+(1-(\eps d/2))^2\right)}}\right)\right)
	\end{align*}
	Since $\iprod{z}{\uhat} = 1 + \eps \sum_{i \in [d/2]} \iprod{g_i}{u^*} \geq 1 - (\eps d/2) > 0$. Consequently, using~\eqref{eqn:psi}, we have that:
	\begin{align*}
		y \cdot \Enus{w \phi\left(\iprod{r}{\uhat}+b\right)} &\geq \frac{1}{2 \sqrt{2}} \cdot \frac{2 \left(1-(\eps d/2)\right)}{\sqrt{2\left((d/2)+(1-(\eps d/2))^2\right)}} = y \cdot \Enus{w \phi\left(\iprod{r}{y\utilde_j}+b\right)}.
	\end{align*}
	This proves~\eqref{eqn:supp-p}.
	
	We now prove~\eqref{eqn:supp}.
	For $\theta = (w,r,b)$, denote $\L(\theta) \defeq \ED{y\cdot \left({w \phi\left(\iprod{r}{x}+b\right)}\right)}$.
	We have $\L(\theta_1) = \L(\theta_2) = \frac{1}{d} \cdot \sum_{j \in [d/2]}\frac{1}{\sqrt{2}}\frac{2(1-(\eps d/2))}{\sqrt{2((d/2)+(1-(\eps d/2))^2)}} = \frac{1-(\eps d/2)}{2\sqrt{(d/2)+(1-(\eps d/2))^2}}$. We will now show that $\max_{\theta \in \Sd} \L(\theta) = \frac{1-(\eps d/2)}{2 \sqrt{(d/2)+(1-(\eps d/2))^2}}$. For a given $\theta = (w,r,b)$, we first show that it is sufficient to consider the case where $r$ is the concatenation of $\alpha_i u^*$ for some $\alpha_1,\cdots,\alpha_d$.
	In order to do this, given any $\theta = (w,r,b)$, let $\alpha_i \defeq \iprod{r_i}{u^*}$ for $i \in [d/2]$ denote the inner product of the $i^{\textrm{th}}$-block vector of $r$ with $u^*$ and let $\alphabar \defeq \frac{1}{d/2} \sum_{i \in [d/2]} \alpha_i$ be the mean of $\alpha_i$. The function $\L(\theta)$ can be simplified as:
	\begin{align*}
		\L(\theta) = \frac{w}{d} \sum_{i \in [d/2]} \left(\phi\left(\alpha_i - (\eps d/2) \alphabar  + b\right) - \phi\left(-\alpha_i + (\eps d/2) \alphabar  + b\right)\right).
	\end{align*}
	We can now consider $r'$ to be the concatenation of $\iprod{r_i}{u^*} u^*$ for $i \in [d/2]$ and the remaining coordinates equal to zero, which ensures that $\L(\theta) = \L(\theta')$ for $\theta' = (w,r',b)$ and $\norm{\theta} \geq \norm{\theta'}$. We can then choose $\abs{w'} \geq \abs{w}$ such that $\norm{(w',r',b)}=1$ and $\L((w',r',b)) > \L(\theta)$. So it suffices to consider $\L(\theta)$ for $\theta = (w,r,b)$ where $r$ is the concatenation of $\alpha_i u^*$ for some $\alpha_i$ for $i \in [d/2]$ and the remaining coordinates being set to zero.
	Let us consider two situations separately:
	
	\textbf{Case I, $b \geq 0$}: 
	Recall from~\eqref{eqn:psi} the definition $\psi(a,b) \defeq \phi(a+b) - \phi(-a+b)$. First note from~\eqref{eqn:psi} that, $\abs{\psi(a,b) - \psi(a',b)} \leq 2\abs{a-a'}$ and $\psi(a,b) - \psi(a',b) \geq a-a'$ for every $a > a'$.
	If $\alpha_j < 0$ for some $j$, then choosing $r'_j = r_j - 2 \alpha_i w^*$ with $r'_i = r_i$ for all $i \neq j$ gives us a corresponding $\theta'$ satisfying
	\begin{align*}
		\L(\theta') &\geq \frac{w}{d} \sum_{i \neq j} \left(\phi\left(\alpha_i - (\eps d/2) \alphabar  + b\right) - \phi\left(-\alpha_i + (\eps d/2) \alphabar  + b\right)\right) - 2 \eps (d/2-1) \abs{\alpha_i} \\
		&\quad + \frac{w}{d}\left(\phi\left(\alpha_i - (\eps d/2) \alphabar  + b\right) - \phi\left(-\alpha_i + (\eps d/2) \alphabar  + b\right)\right) + (1-\eps) \abs{\alpha_i} \\
		&\geq \L(\theta) + \frac{w}{d} \cdot \left(1-\eps d\right) \cdot \abs{\alpha_i}.
	\end{align*}
	So, it suffices to restrict our attention to $\theta$ such that $\alpha_i \geq 0$ for all $i$ in order to prove~\eqref{eqn:supp}. We will now show that making all $\alpha_i$ equal will further increase the value of $\L$. In order to see this, let $\alpha_1 = \min_{i \in [d/2]} \alpha_i$ and $\alpha_2 = \max_{i \in [d/2]} \alpha_i$. Then constructing $r'$ from $r$ by replacing $\alpha_1$ and $\alpha_2$ with $\alpha' \defeq \frac{\alpha_1 + \alpha_2}{2}$ ensures that $\norm{r'} \leq \norm{r}$ while at the same time $\L((w,r',b)) - \L((w,r,b))$ since $\alpha_1 \geq 0$ implies $\alpha_1 - (\eps d/2) \alphabar > -\left(\alpha_2 - (\eps d/2) \alphabar\right)$. If $\alpha_i = \alpha_j$ for all $i,j \in [d]$, then from~\eqref{eqn:psi},
	\begin{align*}
		\L(\theta) &= \frac{w}{d} \sum_{i \in [d]} \alpha_i - (\eps d/2) \alphabar + \min\left(\alpha_i - (\eps d/2) \alphabar,b\right)
		= {(1- (\eps d/2) )w}{} \left(\alphabar + \min\left(\alphabar,\frac{b}{1- (\eps d/2) }\right)\right).
	\end{align*}
	The maximizer of the above expression under the constraint $\norm{\theta}^2 = w^2 + (d/2) \alphabar^2 + b^2 = 1$ can be seen to be when $w= \pm \frac{1}{\sqrt{2}}$, $\alphabar = \frac{2}{\sqrt{(d/2)+(1-(\eps d/2))^2}}$ and $b = \frac{1-(\eps d/2)}{\sqrt{2\left((d/2)+(1-(\eps d/2))^2\right)}}$ achieving value $\frac{1-(\eps d/2)}{2 \sqrt{(d/2)+(1-(\eps d/2))^2}}$.
	
	\textbf{Case II, $b < 0$}: In this case, we have from~\eqref{eqn:psi} that
	\begin{align*}
		\L(\theta) &\leq \frac{\abs{w}}{d} \sum_{i \in [d]} \alpha_i - (\eps d/2) \alphabar = {(1-(\eps d/2))\abs{w} \alphabar}{} \leq \frac{(1-(\eps d/2)) \abs{w} \norm{r}}{\sqrt{d/2}} \leq \frac{1-(\eps d/2)}{2 \sqrt{d/2}} \\ &< \frac{1-(\eps d/2)}{\sqrt{(d/2)+(1-(\eps d/2))^2}},
	\end{align*}
	where we used $\eps < \frac{1}{10 d}$ in the last step.
	This shows that $\nu^* = \frac{1}{2}\delta_{\theta_1^*} + \frac{1}{2}\delta_{\theta_2^*}$ is a max-margin classifier satisfying~\eqref{eqn:F1-max}.
	
	\textbf{Gradient magnitude}: For any input $(x,y)$, we note that the input gradient is of the form $\nabla_x \L(\nu^*,(x,y)) = \alpha z$ for some $\alpha \neq 0$. Consequently, the claim about the gradient magnitudes in different coordinates follows from the structure of $z$ proved above.

\end{proof}

\newpage 
\section{{Effect of adversarial training}}\label{app:conj}
\label{appendix-adv-train}

Consider training a model that is adversarially robust in an $\ell_p$ ball of radius $\epsilon$. Assuming that the inner iterations of adversarial training find the optimal perturbations, it can be shown that if adversarial training converges asymptotically {(i.e., in the rich regime)}, it does so to an appropriate max-margin classifier~\cite{chizat-comm}:
\begin{align}\label{eqn:adv-max-margin-orig}
	\nu^*\defeq \argmax_{{\nu \in \P\left(\Sddtilde\right)}} \min_{(x,y) \in B_{p}(\D,\epsilon)} y \cdot f(\nu,x),
\end{align}
where $B_p(\D,\epsilon) \defeq \set{(x,y):(\tilde{x},y) \sim \D, \norm{x-\tilde{x}}_p \leq \epsilon}$. This implies that using the techniques of previous section, we should be able to analyze the input gradient. However, such analysis requires an explicit form of the max-margin classifier defined above. In contrast to the standard training studied above, computing explicit form of the max-margin classifier is significantly non-trivial in the adversarially training case even for the simple special case of $\dtilde=1, \eta=0$ and $u^*=1$. {While we are unable to explicitly compute the max-margin classifier even for this case}, we make the following conjecture about the max-margin classifier. 
\begin{conjecture}\label{conj}
	Let data distribution $\D$ follow~\eqref{eqn:synth} with $\dtilde=1, \eta=0$ and $u^*=1$. Then, the classifier $\nut$ defined below is a max-margin classifier for adversarial training \eqref{eqn:adv-max-margin-orig} for $p=\infty$ and $\epsilon$ close to $0.5$:
	\begin{align}\label{eqn:adv-max-margin}
		{\nut} \defeq \frac{1}{d} \sum_{i \in [d/2]} \delta_{\theta_i} + \delta_{\theta'_i},
	\end{align}
	with $\theta_i \defeq (\frac{1}{\sqrt{2}}, \frac{3}{\sqrt{20}}e_i, \frac{-1}{\sqrt{20}}), \theta'_i \defeq (\frac{-1}{\sqrt{2}}, \frac{-3}{\sqrt{20}}e_i, \frac{-1}{\sqrt{20}})$ where $e_i \in \R^{d}$ denotes $i^\textrm{th}$ standard basis vector.
\end{conjecture}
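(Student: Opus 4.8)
The plan is to follow the same primal--dual strategy used to establish \Cref{thm:main}, now applied to the adversarial max-margin program~\eqref{eqn:adv-max-margin-orig} over the enlarged data set $B_\infty(\D,\eps)$. Concretely, I would invoke~\citep[Proposition~12]{chizat2020implicit} and exhibit a dual distribution $p^*$ supported on \emph{worst-case adversarial examples} such that the two optimality conditions~\eqref{eqn:supp}--\eqref{eqn:supp-p} hold with $\nu^*$ replaced by $\nut$ and with the inner constraint set $(x,y)\sim\D$ replaced by $(x,y)\in B_\infty(\D,\eps)$. If both conditions can be verified, then $\nut$ from~\eqref{eqn:adv-max-margin} is a max-margin classifier for adversarial training, as claimed in \Cref{conj}.

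The first step is to solve the inner minimization, i.e.\ to identify, for each clean point $(y e_j,y)$ with $j\in[d/2]$, the worst-case $\ell_\infty$ perturbation against the candidate $\nut$. Because each atom $\theta_i,\theta_i'$ reads off a single coordinate through $e_i$, the score decouples across coordinates, $f(\nut,x)=\frac{1}{d\sqrt 2}\sum_{i\in[d/2]}\psi\!\left(\tfrac{3}{\sqrt{20}}x_i,-\tfrac{1}{\sqrt{20}}\right)$, where $\psi$ is as in~\eqref{eqn:psi}. The adversary therefore acts coordinate-wise: it drives the signal coordinate inward to $x_j=y(1-\eps)$ and pushes each competing coordinate $x_i$ ($i\neq j$, $i\in[d/2]$) to the opposite extreme $-y\eps$, which for $\eps$ near $0.5$ activates the oppositely-signed neuron and produces a \emph{negative} spurious contribution. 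I would compute the resulting adversarial margin in closed form and, crucially, verify that by the block-permutation and $w\mapsto-w$ symmetries of $\nut$ it is \emph{identical} across all $d$ clean points; this common value is exactly what condition~\eqref{eqn:supp-p} demands, letting me place $p^*$ uniformly on the $d$ worst-case adversarial examples.

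The crux is the second optimality condition~\eqref{eqn:supp}: I must show that the support atoms $\{\theta_i,\theta_i'\}_{i\in[d/2]}$ are \emph{global} maximizers of $(w,r,b)\mapsto \mathbb{E}_{(x,y)\sim p^*}\!\left[y\,w\,\phi(\iprod{r}{x}+b)\right]$ over the unit sphere $\Sddtilde$. Here the argument behind \Cref{thm:main} breaks down: in the standard case one symmetrizes by projecting every block of $r$ onto $u^*$ and then averaging the $\alpha_i$ to collapse the problem onto the pooled direction $z$, whereas now the conjectured maximizers are \emph{localized} single-coordinate neurons $r\propto e_i$ with negative bias, so that averaging step would strictly \emph{decrease} the objective. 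I would instead decompose the dual objective over coordinates using the coordinate structure of $p^*$, argue that for a fixed norm budget $\norm{r}$ a neuron concentrated on one coordinate strictly dominates any neuron spread across several, and finally solve the resulting one-neuron optimization over $w$, the single nonzero entry of $r$, and $b$ subject to $\norm{(w,r,b)}=1$, checking that its maximizer is precisely $\theta_i$ (and $\theta_i'$ by symmetry). A short calculation confirms $\norm{\theta_i}=\sqrt{\tfrac12+\tfrac{9}{20}+\tfrac{1}{20}}=1$, consistent with membership in $\Sddtilde$.

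I expect the main obstacle to be exactly where the authors flag it: unlike standard training there is no clean closed form for the adversarial max-margin, and both the inner-minimization characterization and the global check~\eqref{eqn:supp} are delicate \emph{only} in the threshold regime $\eps\approx 0.5$. Verifying~\eqref{eqn:supp} requires ruling out pooled competitors of the form $r\propto z$, whose spurious response the adversary can amplify once perturbations reach half the signal strength; this is precisely what forces localization, but pinning down the critical $\eps^*$ above which the localized classifier~\eqref{eqn:adv-max-margin} overtakes every pooled alternative demands a careful quantitative comparison of their worst-case margins, and establishing that transition rigorously — rather than merely exhibiting a candidate — is what keeps the statement at the level of a conjecture. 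Once $\nut$ is confirmed, the feature-highlighting payoff is immediate: at a clean point $x=y e_j$ only the block-$j$ neuron is active, so $\nabla_x f(\nut,x)$ concentrates on coordinate $j$, in sharp contrast to the leakage exhibited by~\eqref{eqn:F1-max} in \Cref{thm:main}.
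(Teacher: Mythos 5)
First, be aware that the paper offers \emph{no proof} of this statement to compare against: it is explicitly left as a conjecture (``we are unable to explicitly compute the max-margin classifier even for this case''), supported only by the indirect empirical evidence of \Cref{fig:synth-adv} (axis-aligned first-layer weights carry large second-layer weights; first-layer biases are predominantly negative); the only rigorous content in that appendix is \Cref{lem:adv}, which takes the conjecture as a hypothesis. Your proposed route---a dual certificate via \citep[Proposition~12]{chizat2020implicit} applied to the enlarged data set $B_\infty(\D,\epsilon)$, mirroring the proof of \Cref{thm:main}---is the natural one, your closed form $f(\nut,x)=\frac{1}{d\sqrt 2}\sum_{i\in[d/2]}\psi\left(\tfrac{3}{\sqrt{20}}x_i,-\tfrac{1}{\sqrt{20}}\right)$ is correct, and you correctly diagnose why the symmetrization step of \Cref{thm:main} (projecting each block onto $u^*$ and averaging the $\alpha_i$) cannot be reused when the putative maximizers are localized. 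But what you have written is a plan, not a proof: the dual distribution $p^*$ is never constructed, neither \eqref{eqn:supp} nor \eqref{eqn:supp-p} is actually verified, and you concede the global-optimality check---which, as you yourself note, is exactly the content that keeps this a conjecture.

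There is, moreover, a concrete quantitative obstruction that your second step would hit if carried out. The atoms of $\nut$ have dead-zone ratio $\abs{b}/\norm{r}=1/3$, so for $\epsilon\in(1/3,1/2)$ the perturbation you identify (signal coordinate $\to y(1-\epsilon)$, every other task-relevant coordinate $\to -y\epsilon$) gives worst-case margin
\begin{align*}
\min_{\norm{x-ye_j}_\infty\le\epsilon} y\, f(\nut,x) \;=\; \frac{1}{d\sqrt{40}}\Bigl[(2-3\epsilon)-\bigl(\tfrac{d}{2}-1\bigr)(3\epsilon-1)\Bigr],
\end{align*}
which is \emph{negative} whenever $\epsilon>\tfrac13+\tfrac{2}{3d}$; for the paper's own experimental setting $d=10$ this covers all $\epsilon\in(0.4,0.5)$. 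Meanwhile the competitor $\hat\nu \defeq \frac1d\sum_{i\in[d/2]}\left(\delta_{\hat\theta_i}+\delta_{\hat\theta_i'}\right)$ with $\hat\theta_i\defeq\left(\tfrac{1}{\sqrt2},\tfrac{2}{\sqrt{10}}e_i,-\tfrac{1}{\sqrt{10}}\right)$ and $\hat\theta_i'\defeq\left(-\tfrac{1}{\sqrt2},-\tfrac{2}{\sqrt{10}}e_i,-\tfrac{1}{\sqrt{10}}\right)$ (unit-norm atoms, dead-zone ratio $1/2$) has worst-case margin $\tfrac{1-2\epsilon}{d\sqrt{20}}>0$ for every $\epsilon<1/2$: no feasible perturbation of a non-signal coordinate can activate its neurons, while the signal coordinate still clears the threshold. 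Hence for $d\ge 6$ and $\epsilon$ close enough to $0.5$, $\nut$ is not a max-margin classifier at all, and by the very duality you invoke no certificate $p^*$ can exist---your verification of \eqref{eqn:supp} would fail not because it is delicate but because it is false in that regime. Indeed $\left(\tfrac{3}{\sqrt{20}},\tfrac{1}{\sqrt{20}}\right)$ is precisely the optimal localized neuron when $\epsilon=1/3$, so the conjectured constants look tuned to $\epsilon=1/3$ rather than $\epsilon\approx 1/2$. Any rigorous treatment must therefore either restrict to $d\le 4$ (where $\nut$ does dominate this competitor) or, more plausibly, let the constants of $\nut$ depend on $\epsilon$ so that the dead-zone ratio tracks the perturbation budget; flagging and resolving this mismatch is a prerequisite for any proof of \Cref{conj} as stated.
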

Figure~\ref{fig:synth-adv} empirically verifies two consequences of this conjecture.
In Figure~\ref{fig:synth-adv}(a), we show that first-layer weights with large alignment with standard basis vectors also have large second-layer weights, indicating that axis-aligned first-layer weights are highly influential in the final model's prediction.
Figure~\ref{fig:synth-adv}(b) shows that the biases in first-layer ReLU units are predominantly negative.

\begin{figure}[h]
	\centering
	\includegraphics[width=0.58\linewidth]{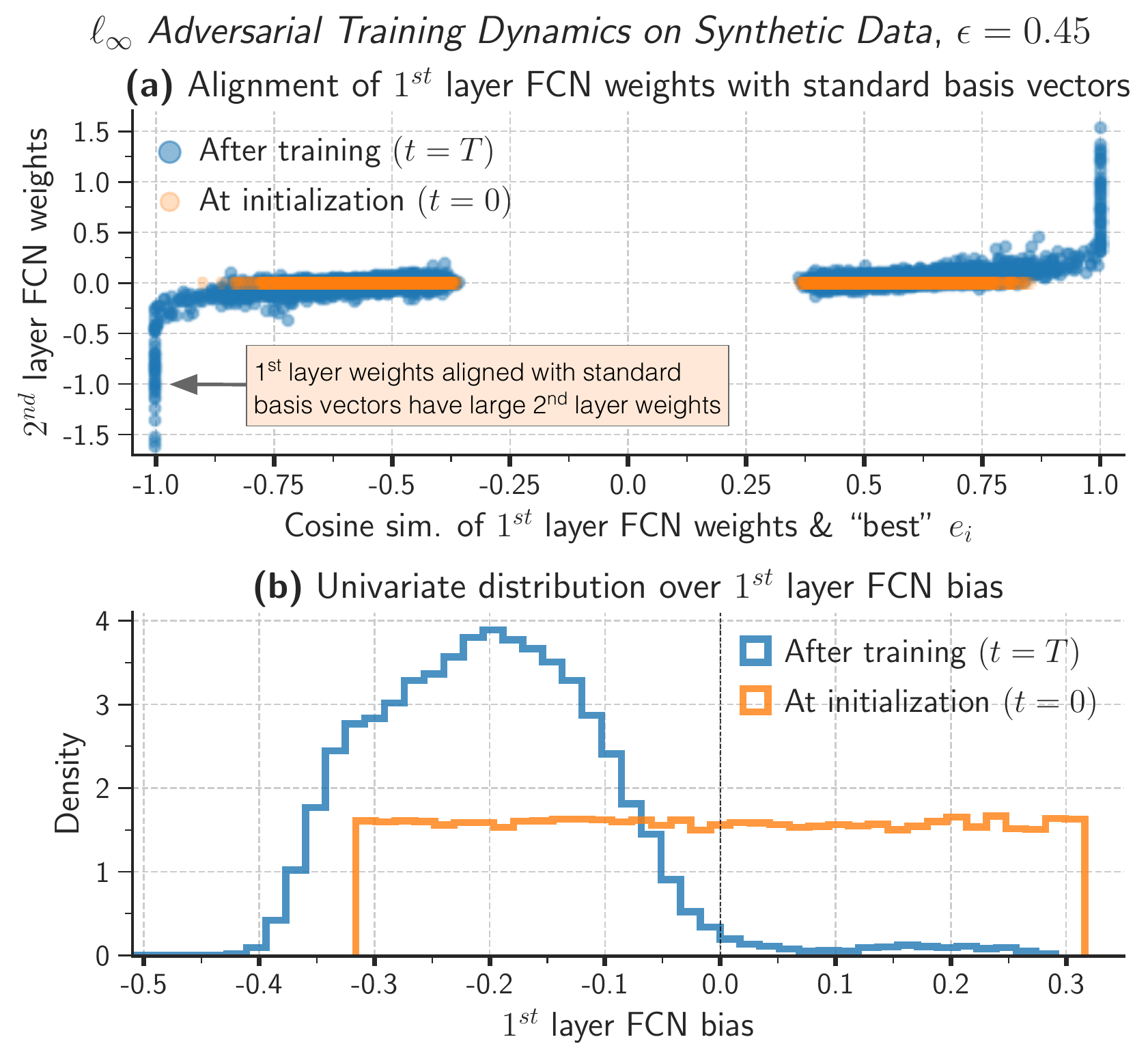}
	\vspace{-5px}
	\caption{
		Adversarial training dynamics for training one-hidden-layer FCNs with width $50,000$ on $10$-dimensional synthetic data.
		Subplot (a) shows that first-layer neurons aligned with standard basis vectors have large second-layer weights. 
		Given a normalized $1^{st}$ layer weight vector (i.e., rescaled so that it has unit $\ell_2$ norm), the x-axis in (a) plots the coordinate with largest magnitude in this normalized vector. Note that the gap around origin is due to that fact that the largest magnitude coordinate in a unit $\ell_2$ norm vector in $d=10$ dimensions is at least $\frac{1}{\sqrt{d}}\approx 0.32$.
		Subplot (b) shows that $\ell_{\infty}$ adversarial training results in first-layer bias terms that are predominantly negative.
		These observations support Conjecture~\ref{conj} and indicate that adversarial training quickly enters the rich regime.
	}
	\label{fig:synth-adv}
\end{figure}
\clearpage
%
%
The following lemma shows that the input gradients of $\nut$ in~\eqref{eqn:adv-max-margin} indeed highlight $j^*(x)$.
\begin{lemma}\label{lem:adv}
	Let data distribution $\D$ follow~\eqref{eqn:synth} with $\dtilde=1, \eta=0$ and $u^*=1$ and let $\nut$ be as defined in~\eqref{eqn:adv-max-margin}.
	Then, for any $(x,y) \sim \D$, we have:
	${\nabla_x \L(\nut,(x,y))} = c \cdot e_{j^*(x)},$ where $c \neq 0$ is a constant.
\end{lemma}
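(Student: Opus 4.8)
The plan is to directly compute the input gradient of the fixed classifier $\nut$ in~\eqref{eqn:adv-max-margin}; note that the lemma makes no use of Conjecture~\ref{conj}, so I treat $\nut$ purely as a finite mixture of $d$ atoms $\set{\theta_i,\theta_i'}_{i\in[d/2]}$ and differentiate the resulting explicit score function. First I would apply the chain rule to the cross-entropy loss to separate the scalar loss-derivative from the gradient of the score:
\begin{align*}
	\nabla_x \L(\nut,(x,y)) = \frac{-y\exp(-y f(\nut,x))}{1+\exp(-y f(\nut,x))}\, \nabla_x f(\nut,x),
\end{align*}
and observe that the scalar prefactor is strictly nonzero for every $(x,y)$ since $y \in \set{\pm 1}$ and $\exp(\cdot)>0$. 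Hence it suffices to show that $\nabla_x f(\nut,x)$ is a nonzero multiple of $e_{j^*(x)}$.

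Next I would exploit the structure of $\D$ in the simplified regime $\dtilde=1$, $\eta=0$, $u^*=1$: every instance is exactly $x = y\, e_{j^*(x)}$ with $j^*(x)\in[d/2]$. Writing $f(\nut,x) = \frac1d\sum_{i\in[d/2]}\big(w_i\,\phi(\iprod{r_i}{x}+b_i)+w_i'\,\phi(\iprod{r_i'}{x}+b_i')\big)$ with $r_i=\tfrac{3}{\sqrt{20}}e_i$, $r_i'=-\tfrac{3}{\sqrt{20}}e_i$, and $b_i=b_i'=-\tfrac{1}{\sqrt{20}}$, I would evaluate the pre-activations. Because $\iprod{r_i}{x}=\tfrac{3y}{\sqrt{20}}\,\ind{i=j^*(x)}$, every atom with $i\neq j^*(x)$ has pre-activation $-\tfrac{1}{\sqrt{20}}<0$ and is therefore inactive, contributing zero to $\nabla_x \phi$. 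For $i=j^*(x)$, a short sign analysis shows exactly one atom of the pair $(\theta_{j^*},\theta_{j^*}')$ fires: when $y=+1$ only $\theta_{j^*}$ is active (pre-activation $+\tfrac{2}{\sqrt{20}}>0$, while $\theta_{j^*}'$ has pre-activation $-\tfrac{4}{\sqrt{20}}$), and symmetrically when $y=-1$ only $\theta_{j^*}'$ is active. In either case the single active term contributes $\tfrac1d w_{j^*} r_{j^*} = \tfrac{1}{d}\cdot\tfrac{3}{\sqrt{40}}\, e_{j^*(x)}$ (and the identical value $\tfrac1d w_{j^*}' r_{j^*}'$ for the negative label), so $\nabla_x f(\nut,x)$ is the \emph{same} nonzero multiple of $e_{j^*(x)}$ for both signs of $y$. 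Combining with the prefactor yields $\nabla_x\L(\nut,(x,y)) = c\, e_{j^*(x)}$ with $c\neq 0$.

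There is no substantive obstacle here---the argument is a finite, exact computation rather than a limiting or variational one, so it stands on its own independently of whether $\nut$ is genuinely max-margin. The only points requiring care are (i) confirming that none of the evaluated ReLU pre-activations lands on the kink, so that $\phi$ is genuinely differentiable and the subgradient ambiguity never arises (indeed every active pre-activation equals $+\tfrac{2}{\sqrt{20}}$ and every inactive one is $-\tfrac{1}{\sqrt{20}}$ or $-\tfrac{4}{\sqrt{20}}$, all strictly bounded away from $0$), and (ii) the bookkeeping of the $y=\pm 1$ case split to verify that the signal coordinate, and \emph{only} the signal coordinate, survives. This is exactly the behavior the lemma asserts: unlike the standard-training max-margin classifier of Theorem~\ref{thm:main}, whose gradient spreads equally over all $d/2$ task-relevant blocks, the adversarially trained $\nut$ concentrates the entire gradient on the instance-specific block $j^*(x)$, thereby satisfying assumption \premise.
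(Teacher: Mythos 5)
Your proposal is correct and follows essentially the same route as the paper's proof: apply the chain rule to factor out the nonzero scalar loss-derivative, then show via the pre-activation signs that only the atoms indexed by $j^*(x)$ can be active, so the gradient of $f(\nut,\cdot)$ is a nonzero multiple of $e_{j^*(x)}$. Your write-up is somewhat more explicit than the paper's (identifying exactly which one of the pair $(\theta_{j^*},\theta'_{j^*})$ fires for each sign of $y$, checking that the constant agrees across labels, and noting that no pre-activation lies on the ReLU kink), but these are refinements of the same argument rather than a different approach.
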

Assuming Conjecture~\ref{conj}, this lemma shows that {for the special case $\dtilde=1$ and $\eta=0$}, adversarially trained models have input gradients that reveal instance-specific features important for classification. Conjecture~\ref{conj} and Lemma~\ref{lem:adv} also explain several other empirically observed properties of adversarial training such as visually perceptible input gradients and adversarial examples~\cite{santurkar2019image}.
In this section, we prove Lemma~\ref{lem:adv}.
\begin{proof}[Proof of Lemma~\ref{lem:adv}]
	Given the classifier $\nut$ and a data point $(x,y)$, the input gradient is given by
	\begin{align*}
		\nabla_x \L(\nut,(x,y)) &= c \cdot \nabla_x f(\nut,x) \\
		&= c \cdot \Enut{w \phi'\left(\iprod{r}{x}+b\right) r},
	\end{align*}
	where $c = \frac{-y\exp\left(-y \cdot f(\nut,x)\right)}{1+\exp\left(-y \cdot f(\nut,x)\right)}$. Recall from~\eqref{eqn:adv-max-margin} that
	\begin{align*}
		{\nut} = \frac{1}{d} \sum_{i \in [d/2]} \delta_{\theta_i} + \delta_{\theta'_i},
	\end{align*}
	where $\theta_i \defeq (\frac{1}{\sqrt{2}}, \frac{3}{\sqrt{20}}e_i, \frac{-1}{\sqrt{20}})$ and $\theta'_i \defeq (\frac{-1}{\sqrt{2}}, \frac{-3}{\sqrt{20}}e_i, \frac{-1}{\sqrt{20}})$, $e_i$ denotes the $i^\textrm{th}$ standard basis vector in $\R^d$. If $(x,y) = (ye_{j^*(x)},y)$ and $(w,r,b)\sim \delta_{\theta_i}$ or $(w,r,b)\sim \delta_{\theta'_i}$, then $\Pr\left[\phi'\left(\iprod{r}{x}+b
	\right) \neq 0\right]>0$ if and only if $i = j^*(x)$. Consequently, the only contribution the input gradient comes from $\delta_{\theta_{j^*(x)}}$ and $\delta_{\theta'_{j^*(x)}}$. So,
	\begin{align*}
		\nabla_x \L(\nut,(x,y))
		&= c \cdot \Enut{w \phi'\left(\iprod{r}{x}+b\right) r} \\
		&= c' \cdot \left(\mathbb{E}_{(w,r,b)\sim \delta_{\theta_{j^*(x)}}}\left[{w \phi'\left(\iprod{r}{x}+b\right) r}\right]\right.
		+ \left. \mathbb{E}_{(w,r,b)\sim \delta_{\theta'_{j^*(x)}}}\left[{w \phi'\left(\iprod{r}{x}+b\right) r}\right]\right) \\
		&= c'' \cdot e_{j^*(x)}.
	\end{align*}
	This proves the result.
\end{proof}

\end{document}